\documentclass{article}

\usepackage{arxiv}

\usepackage{amsmath,amsfonts,bm}

\def\eqref#1{equation~\ref{#1}}

\def\1{\bm{1}}

\DeclareMathAlphabet{\mathsfit}{\encodingdefault}{\sfdefault}{m}{sl}
\SetMathAlphabet{\mathsfit}{bold}{\encodingdefault}{\sfdefault}{bx}{n}

\usepackage{natbib}
\usepackage{hyperref}
\usepackage{url}

\usepackage{amsfonts}       
\usepackage{amsmath}
\usepackage{amssymb}
\usepackage{amsthm}
\usepackage{bm}
\usepackage{bbm}
\usepackage{mathtools}
\usepackage{mathrsfs}
\usepackage{nicefrac}       
\usepackage[thinc]{esdiff}
\usepackage{hyperref}

\usepackage{graphicx}
\usepackage{subfigure}
\usepackage{float}
\usepackage{tikz}
\usetikzlibrary{bayesnet}

\usepackage{array}
\usepackage{longtable}
\usepackage{multirow}
\usepackage{multicol}
\usepackage{adjustbox}
\usepackage{makecell}
\usepackage{arydshln}
\usepackage{booktabs}       

\usepackage{xcolor}

\usepackage{url}            
\usepackage{algorithm} 
\usepackage{algorithmic}
\usepackage{verbatim}

\usepackage{blindtext}
\usepackage{microtype}
\usepackage{titlesec}
\usepackage[utf8]{inputenc}
\usepackage{framed}
\usepackage[english]{babel}
\usepackage{comment}
\def\CL{CL}
\newcommand{\SCL}{SCL}
\newcommand{\UCL}{UCL}

\newcommand{\BlackBox}{\rule{1.5ex}{1.5ex}}  
\ifdefined\proof
    \renewenvironment{proof}{\par\noindent{\bf Proof\ }}{\hfill\BlackBox\\[2mm]}
\else
    \newenvironment{proof}{\par\noindent{\bf Proof\ }}{\hfill\BlackBox\\[2mm]}
\fi
 
\newtheorem{theorem}{Theorem}
\newtheorem{lemma}{Lemma} 
 
\newtheorem{remark}{Remark}
\newtheorem{corollary}{Corollary}
\newtheorem{definition}{Definition}

\usepackage{bbm}
\def\Pr{\ensuremath{\mbox{Pr}}}

\def\EE{\ensuremath{{\mathbb E}}}

\def\RR{\ensuremath{{\mathbb R}}}
\def\ZZ{\ensuremath{{\mathbb Z}}}
\def\Acal{\ensuremath{{\mathcal{A}}}}
\def\Ccal{\ensuremath{{\mathcal{C}}}}
\def\Dcal{\ensuremath{{\mathcal{D}}}}
\def\Ecal{\ensuremath{{\mathcal{E}}}}
\def\Fcal{\ensuremath{{\mathcal{F}}}}
\def\Mcal{\ensuremath{{\mathcal{M}}}}

\def\Scal{\ensuremath{{\mathcal{S}}}}
\def\Vcal{\ensuremath{{\mathcal{V}}}}
\def\Xcal{\ensuremath{{\mathcal{X}}}}

\def\Zcal{\ensuremath{{\mathcal{Z}}}}
\def\bfzero{\ensuremath{{\mathbf{0}}}}
\def\bfones{\ensuremath{{\mathbf{1}}}}

\newcolumntype{C}[1]{>{\centering\arraybackslash}p{#1}}

\newcommand{\cupdot}{\mathbin{\mathaccent\cdot\cup}}

\allowdisplaybreaks

\usepackage{xcolor}
\usepackage{bbm}
\newif\ifcomments
\commentstrue

\ifcomments
    \usepackage{ulem}
    \def\picomment#1{{$\!$\color{magenta} [PI: #1]}}
    
\else 
    \def\picomment#1{}
    
\fi

\ifcomments
    \usepackage{ulem}
    \def\tncomment#1{{$\!$\color{orange} [TN: #1]}}
    
\else 
    \def\tncomment#1{}
    
\fi

\ifcomments
    \usepackage{ulem}
    \def\sacomment#1{{$\!$\color[rgb]{0.0,0.5,0.0} [SA: #1]}}
    
\else 
    \def\sacomment#1{}
    
\fi
\ifcomments
    \usepackage{ulem}
    \def\rbcomment#1{{$\!$\color[rgb]{0.55,0.71,0.0} [RB: #1]}}
    
\else 
    \def\rbcomment#1{}
    
\fi

\title{Optimal Representations for Generalized Contrastive Learning with Imbalanced Datasets}

\author{
Thuan Nguyen \\
  Department of Engineering, Engineering Technology\\
  East Tennessee State University\\
  \texttt{nguyent11@etsu.edu} \\
   \And
 Shuchin Aeron \\
  Department of Electrical and Computer Engineering\\ 
      Tufts University\\
  \texttt{shuchin@ece.tufts.edu} \\
  \And
D. Richard Brown III \\
  Department of Electrical and Computer Engineering\\ 
      Worcester Polytechnic Institute\\
  \texttt{drb@wpi.edu} \\
  \AND
  Prakash Ishwar \\
  Department of Electrical and Computer Engineering \\ Boston University\\
  \texttt{pi@bu.edu} \\
}

\begin{document}

\maketitle

\begin{abstract} 
In this paper, we provide a computable characterization of the geometry of optimal representations in Contrastive Learning (CL) when the classes are imbalanced. When classes are balanced and the representation dimension is greater than the number of classes, it is well-known that the optimal representations exhibit Neural Collapse (NC), i.e., representations from the same class collapse to their class means and the class means form an Equiangular Tight Frame (ETF). For imbalanced classes and a large, generalized family of \CL{} losses, we prove that the optimal representations of all samples from the same class collapse to their class means and their geometry exhibits an angular symmetry structure that is determined by the relative class proportions. In general, we show that the geometry can be determined by solving a convex optimization problem. Exploiting this symmetry structure, we analytically investigate a special case where class imbalance is extreme and prove that \CL{} exhibits a phenomenon called Minority Collapse (MC) where all samples from the minority classes (classes with small probabilities) collapse into a single vector, whenever the class imbalance exceeds a threshold, which in turn depends on the regularity properties of the \CL{} loss used and on the number of negative samples. Numerical results are provided to illustrate these phenomena and corroborate the theoretical results. We conclude by identifying a number of open problems.  
\end{abstract}


\section{Introduction}
\CL{} is a machine learning technique that aims to learn a representation map by pulling ``similar'' samples closer together while simultaneously pushing apart ``different'' samples in the representation space. These representations can then be directly utilized or fine-tuned for downstream tasks. Over the past decade, \CL{} has received significant attention due to its applications ranging from computer vision, time series analysis, and natural language processing (see \cite{jaiswal2020survey} for a comprehensive survey).

In \CL{} terminology, a reference sample is called the ``anchor'' sample, a sample similar to it is called the ``positive'' sample, and a sample different from it is called the ``negative'' sample. If label information is not available (unsupervised setting), positive samples are usually constructed via data augmentations of the anchor, and negative samples are randomly selected from the dataset \cite{chen2020simple}. When label information is available (supervised setting), positive samples can be selected from the same class as the anchor while negative samples can be picked from either (a) classes other than the anchor's class \cite{jiang2024supervised,jiang2024hard}, or (b) any class (including the anchor's class) \cite{khosla2020supervised}. Under a suitable model of the data generating the positive and negative samples in the unsupervised as well as supervised settings, the aim of this paper is to characterize the optimal representations learned via \CL{} under an \textit{unconstrained features model} wherein the \CL{} map is assumed to have adequate capacity to realize any mapping. This is an important problem that sheds light on the effect of positive and negative sampling mechanisms in \CL. In the next section, we will begin by reviewing related work and outline our main contributions in that context.

\subsection{Limitations of related work and contributions}

\textbf{Loss function, sampling distribution, and number of negative samples per positive-pair $k$:} To the best of our knowledge, most theoretical studies of \CL{} that have aimed to understand the structure of optimum  representations \cite{fang2021exploring,graf2021dissecting,kothapalli2023neural,kini2024symmetric,behnia2024supervised}  have done so \textbf{only for \textit{empirical} versions of the 
InfoNCE \CL{} loss (or its variants) with norm-bounded representation constraints}
where within each mini-batch $b$, consisting of $n_b$ of samples, the anchor is \textbf{\textit{uniformly}} distributed over \textbf{\textit{all $n_b$ samples}}, the positive sample is \textbf{\textit{uniformly}} distributed over \textbf{\textit{all $n_b$ samples}} (some works exclude the anchor), and for each anchor-positive pair, \textbf{\textit{all $n_b$ samples}} (some works exclude the anchor or/and the positive sample) are negative samples (i.e., $k=n_b$ or $n_b-1$ or $n_b-2$). 
Unraveling the impact of $k$ is not possible with the approaches taken in extant works since they only consider empirical \CL{} losses where $k$ is nearly equal to the batch size.

\textbf{Class proportions:} In addition to heavily focusing on the empirical InfoNCE  loss together with the (nearly) maximum possible range of $k$, almost all prior theoretical works in \CL{}  \cite{fang2021exploring,graf2021dissecting,kothapalli2023neural,behnia2024supervised} have focused on the  \textbf{\textit{idealized balanced setting}}  in which each sample belongs to one of $C>1$ classes (or latent classes) and \textbf{\textit{all classes are equally likely,}} i.e., have the same sample size in the training set. The more realistic and practically useful \textbf{\textit{unbalanced setting}} has been analyzed primarily for \textbf{\textit{classifier networks}} with the empirical Mean Squared Error (MSE) loss \cite{pmlr-v202-dang23b} and empirical cross-entropy loss \cite{hong2024neural,pmlr-v235-dang24a} where there is an additional linear classifier layer following the representation mapping and the loss function explicitly depends on the labels of the samples.  Analysis of the unbalanced case for \CL{} is very limited and confined to the empirical InfoNCE loss \cite{fang2021exploring,kini2024symmetric,behnia2024supervised}. 

\textbf{Minority-Collapse (MC) phenomenon:} When classes are not balanced, the representations of all the samples in several distinct \textit{minority} classes (classes with small probabilities) may collapse into a single vector. This phenomenon has been studied  only fairly recently, primarily within the context of \textit{classifier} networks with either empirical MSE loss \cite{pmlr-v202-dang23b} or empirical cross-entropy loss \cite{hong2024neural,pmlr-v235-dang24a}. Within the \CL{} context, the \textit{existence} of minority-collapse was proved in \cite{fang2021exploring} \textit{only in the asymptotic limit where the minority class probabilities vanish}. 

This paper makes the following contributions:
\begin{enumerate}
\setlength \itemsep{-3pt}
    \item We construct a novel lower bound (Lemma~\ref{lemma:CLriskLB}) that holds for the general family of \CL{} losses that are based on functions that are strictly convex and argument-wise strictly increasing and allow any value of $k$ (the number of negative samples per positive-pair).  
    This subsumes and generalizes popular loss functions such as the InfoNCE loss function. 
    The bound is a convex function of the Gram matrix whose entries are the pairwise inner products of the class mean feature vectors. 
    We also derive the asymptotic limit of the lower bound for the InfoNCE loss function when $k\uparrow \infty$ (Corollaries~\ref{cor:InfoNCElb} and  \ref{cor:InfoNCElb_infinite_k}).
    
    \item When the representation dimension $d \geq C-1$, we prove that \textit{\textbf{the lower bound has a unique minimizer which is rank-deficient with a unit-constant principal diagonal}} (Lemmas~\ref{lemma:CLriskLB}~--~\ref{lemma:rank_of_opt_Gram_mtx} and Theorem~\ref{theorem: 1}).
    We also show that \textit{\textbf{the generalized \CL{} loss is minimized when there is intra-class variance-collapse,}} \textit{i.e.}, when the feature vectors of all the samples from the same class are identical (Corollary~\ref{cor:varcollapse}). However, the geometry of the optimal class feature vectors need not form an Equiangular Tight Frame (ETF) as in the balanced classes scenario. We show that the optimal geometry can be numerically computed as the solution to a convex program (Remark~\ref{remark: 2}). 

    \item We prove that the geometric structure of the optimal class means exhibits a key equiangular symmetry structure that is determined by the relative class proportions (Theorem~\ref{theorem:equiangular-properties} and Corollary~\ref{cor: 1}). We further show that these properties are consistent with corresponding results for balanced classes and \textit{\textbf{resolve a question that was left open in \cite{jiang2024hard}}, namely whether the ETF geometry is optimal when the positive pairs are not conditionally independent given their class label and the classes of the positive and negative samples can collide (Remark~\ref{remark: 5}).} 

    \item We further investigate the case when the class imbalance is extreme and prove that \CL{} exhibits the MC phenomenon in the scenario where there is one majority class and equiprobable minority classes with 
    the minor class probability less than \textbf{\textit{a non-asymptotic threshold}} $\tau$ that depends on the number of classes, the number of negative samples per anchor, and bounds on the norms of the subgradients of the \CL{} loss function (Lemmas~\ref{lemma:Astarform}~--~\ref{lemma:Astarbound} and Theorem~\ref{theorem:minoritycollapse}). Specializing to the InfoNCE loss function yields conservative \textit{\textbf{parameter-free}} thresholds 
    $\tau = 0.9292$ (Corollary~\ref{cor:InfoNCElambda1threshold}) and $\tau = 0.9438$ (Corollary~\ref{cor:SCLInfoNCElambda1threshold} in Appendix~\ref{apd:appendix_proofs}) in different negative sampling settings. 
    \item Finally, we prove that all the above results hold under two different negative sampling settings: (1) Unsupervised \CL{} (\UCL), where the negative samples are selected from the whole dataset including samples from the same class as that of the anchor and (2) Supervised \CL{} (\SCL), where the negative samples are selected from classes that are different from that of the anchor. 
\end{enumerate}

The remainder of this paper is structured as follows. Section~\ref{sec: problem setup} formally introduces the \CL{} framework and formulates the core optimization problem of interest. A tight lower bound for the generalized contrastive loss (and the $k\uparrow \infty$ asymptotic limit for the InfoNCE loss) that is a function of the mean feature vectors of the classes, together with necessary and sufficient conditions for equality, is established in  Section~\ref{sec: lower bound}. 
That the lower bound is a strictly convex function of the Gram matrix whose entries are the pairwise inner products of unit-norm class mean feature vectors, the necessity and sufficiency of intra-class variance-collapse for optimality, and the complete characterization of the optimal rank-deficient class means when $d \geq C-1$ are all established in Section~\ref{sec: main result}.  
Equiangular symmetry properties of the optimal class means and their implications are  established in Section~\ref{sec: equiangular}.
The MC phenomenon is investigated in Section~\ref{sec: minority collapse} where a non-asymptotic threshold for MC is derived. 
Numerical experiments that corroborate and illustrate our theoretical results appear in Section~\ref{sec: numerical results}. 
We end with a discussion of open questions in Section~\ref{sec: conclusion}.
Proofs of theoretical results are presented in Appendix~\ref{apd:appendix_proofs}.

\noindent\textbf{Notation:} For $i, j \in \ZZ$, $i < j$, we define $i:j := i, i+1, \ldots, j$ and  $a_{i:j} := a_i, a_{i+1}, \ldots, a_j$. If $i > j$, $i:j$ and $a_{i:j}$ are void expressions. We will denote the ``all zeros'' and ``all ones'' column vectors by $\bfzero$ and $\bfones$, respectively. The dimensions of $\bfzero$ and $\bfones$ will be clarified within each context they are used. 

\section{Contrastive learning problem setup and notation}
\label{sec: problem setup}

Let $\Xcal \subseteq \mathbb{R}^{d'}$ denote the data space, 
$f: \Xcal \rightarrow \Zcal$ a representation 
function from data space to representation space (or feature space) $\Zcal \subseteq \mathbb{R}^d$, and $\Fcal$ a (parameterized) family of such representation functions such as a those specified by a deep neural network with a specified architecture.
Contrastive Learning (\CL) is based on tuples $(x, x^+, x^{-}_{1:k}) \sim p(x,x^{+},x^{-}_{1:k})$,  where 
\begin{enumerate}
\setlength \itemsep{-3pt}
    \item 
    $x$ is called the anchor (or context),
   \item  
   $x^+$ the positive sample (relative to the given anchor $x$), and
   \item 
   $x^{-}_{1:k}, \,k \geq 1$, the negative examples (relative to the given anchor $x$).
\end{enumerate}
The anchor $x$ is also regarded as a positive sample and $(x,x^+)$ is called a positive pair. The objective of \CL{} is to learn a mapping $f \in \Fcal$ via solving the following optimization problem,
\setlength{\intextsep}{0pt}
\begin{align}
    \arg\min_{f \in \Fcal} L(f), \quad 
    L(f) := \EE\left[ \ell_k\left(x, x^{+}, x^{-}_{1:k},f\right)\right], 
    \label{eq: objective function}
%
\end{align} 
where $L(f)$ is the \CL{} risk of a representation function $f$ with the expectation $\EE[\cdot]$ (or empirical average) taken with respect to the joint distribution (or empirical distribution) $p(x,x^+,x^{-}_{1:k})$ and 
$\ell_k(\cdot)$ is a \CL{} loss function that encourages \textit{alignment} between the positive pairs $(x,x^+)$ in representation space, as measured by the inner product $f(x)^\top f(x^+)$, and discourages the alignment between the $k$ negative pairs $(x,x^{-}_i), i = 1:k$, in representation space, as measured by the inner products $f(x)^\top f(x^{-}_i), i = 1:k$.\footnote{\label{foot:dotproductsimilarity} In Contrastive Learning, the feature vectors are typically normalized to have unit Euclidean length. Then, the inner product of two feature vectors is larger if, and only if, they are closer to each other in Euclidean distance. Therefore, the inner product of two feature vectors acts as an ``inverse distance'' or similarity measure between them.}
The representation map learned via \CL{} is treated as a pre-trained feature extractor and is used either directly or with fine-tuning in various downstream supervised tasks, predominantly classification.  

In this work, we establish results that hold in great generality for the entire family of \CL{} loss functions proposed in \citep{jiang2024hard} as defined below.
\begin{definition}[Generalized \CL{} Loss Function] \label{def: 1}
A Generalized \CL{} loss function is of the form
\begin{align}
\ell_k(x, x^+ , x^{-}_{1:k}, f) := \psi\big(f(x)^\top (f(x^-_1) - f(x^+)), \ldots, f(x)^\top(f(x^-_k) - f(x^+)\big)
\label{eq:genCLloss}
\end{align}where $\psi: \RR^k \rightarrow \RR$ is a function which is strictly convex and argument-wise strictly increasing (\textit{i.e.}, strictly increasing with respect to each argument when the other $k-1$ arguments are held fixed).\footnote{As a technical aside, the function $\psi$ is a so-called \textit{proper} convex function because its range is $\RR$ which excludes $-\infty$.} The value of $k$ in not restricted. 
\end{definition}
We note that this subsumes and generalizes popular loss functions with spherical-ball normalized 
representations including the popular InfoNCE loss function defined in Appendix~\ref{apd: 1} and its variants
(InfoLOOB, N-pair, Decoupled Contrastive Loss, etc.) which have been widely used.\footnote{The sigmoid loss does not satisfy Definition~\ref{def: 1}. Triplet
loss corresponds to choosing 
$\psi(t_1,\ldots,t_k) = \sum_{i=1}^k
\max\{t_i + \alpha, 0\}, \alpha > 0$. The $\psi(\cdot)$ function
here is convex, but not strictly convex. All results in this paper, except those related to the uniqueness of the
minimizer, also hold for the triplet loss.} 
We focus on the general family in Definition~\ref{def: 1}  to highlight that all results presented in this paper only rely on two key properties of the \CL{} loss function, namely convexity and monotonicity, and nothing else specific to a particular loss function like InfoNCE.


Unlike prior works which are restricted to the empirical \CL{} risk where the joint distribution of the anchor, positive and negative samples (and also their latent labels in many works) are uniform over suitable discrete subsets, we adopt a general distributional perspective throughout and work with the population risk (which subsumes the empirical risk as a special case when the distribution is empirical)  with the following  key modeling assumptions that are consistent with the specialized assumptions on the (empirical)  distribution of samples in prior works: \\
\noindent\textbf{A1: Class labels.} The samples have associated labels given by a deterministic labeling function $y(\cdot):\Xcal \rightarrow \Ccal := \{1,\ldots, C\}, C > 1$. These labels represent classes in the supervised setting and latent, i.e., hidden, classes or clusters in the unsupervised setting. \\
\noindent\textbf{A2: Positive samples.} The joint distribution of positive samples is such that they have the same label. This can be ensured by design in the supervised setting, but in the unsupervised setting this is an assumption on the method used to sample a positive pair, e.g., an augmentation mechanism. \\
\noindent\textbf{A3: Joint distribution.}
Let $x,x^+ \in \Xcal$ be a pair of positive samples and $y \in \Ccal$ their common class label. Let  $x^-_{1:k} \in \Xcal$ be a set of $k$ negative samples associated with the positive pair and $y^-_{1:k} \in \Ccal$ their respective class labels. In the \UCL{} setting where the negative samples are chosen from the entire dataset, including possibly from the class of the positive pair, the joint distribution of all $(k+2)$ samples $x,x^+,x^-_{1:k}$ and their $(k+1)$ labels $y,y^-_{1:k}$ has the following form 
\begin{align}
p(x,x^+,x^-_{1:k},y,y^-_{1:k}) &=  p(y,y^-_{1:k}) \, p(x,x^+,x^-_{1:k}|y,y^-_{1:k})\ , \nonumber \\
p(y,y^-_{1:k}) &= \lambda_y\prod_{t=1}^k \lambda_{y^-_t}\ , \label{eq:joint1of2} \\
p(x,x^+,x^-_{1:k}|y,y^-_{1:k}) &= q(x,x^+|y) \prod_{t=1}^k s(x^-_t|y^-_t), \label{eq:joint2of2}
\end{align}
where $\lambda_{1:C} \in (0,1), \sum_{i \in \Ccal} \lambda_i = 1$, denote the probabilities (or relative sample proportions) of the $C$ possible classes \textbf{\textit{and they need not be balanced}}, $q(x,x^+|y)$ is the conditional distribution of a positive pair given their label, and $s(x^-|y^-)$ is the conditional distribution of a negative sample given that it is from class $y^-$.  

We note that 
$(x^-_1,y^-_1),\ldots,(x^-_k,y^-_k)$ are independent and identically distributed (iid) and also independent of $(x,x^+,y)$. The $(k+1)$ labels $y,y^-_{1:k}$ are iid which implies that, with non-zero probability, negative samples could have the same label as that of the positive pair, an event referred to as ``class collision''. Moreover, 
$x^-_{1:k}$ are conditionally iid given $y^-_{1:k}$, but unlike in \citep{jiang2024hard}, we do not assume that $(x,x^+)$ are conditionally independent given their label $y$. \\
We focus on the \UCL{} setting to establish all results. In Appendix~\ref{apd:extensions_to_SCL} we discuss how \textit{all} our theoretical results continue to hold, with minor adjustments to some expressions, in the \SCL{} setting where 
the negative samples are chosen from classes other than that of the positive pair, i.e., $y^-_{1:k} \in \Ccal \setminus \{y\} \text{ w.p.1}$ and the anchor and positive sample are conditionally iid given their class. Then,
$p(y,y^-_{1:k})$ in (\ref{eq:joint1of2}) is changed to
\begin{equation}
p_{SCL}(y,y^-_{1:k}) := \lambda_y\prod_{t=1}^k \left(\frac{\lambda_{y^-_t}}{1-\lambda_y}\right). \label{eq:SCL_joint_pmf_labels} 
\end{equation}
and $q(x,x^+|y) = s(x|y)\,s(x^+|y)$.\\
\noindent\textbf{A4: Marginal conditional distributions.} 
As in \citep{jiang2024hard} and for analytical simplicity we also assume that 
\[
\forall i \in \Ccal, \forall x, x^+ \in \Xcal, \quad p(x|y=i) = s(x|i),\quad p(x^+|y=i) = s(x^+|i),
\]
i.e., the \textit{marginal} conditional distributions of $x^+$ given $y=i$ and $x$ given $y=i$ are both $s(\cdot|i)$ which is the marginal conditional distribution of a negative sample $x^-$ given $y^- = i$. 
This assumption can be ensured in the supervised setting, since labels are available. This also holds in the unsupervised setting, if a negative sample is generated using the same sampling mechanism that was used to generate a positive sample, e.g., via an augmentation of a reference sample.
Indeed, in practical implementations \cite{chen2020simple,khosla2020supervised, jiang2024hard}, all samples in a mini-batch are first augmented using the same family of random augmentations and then the anchors, positives, and negatives are selected from these. Thus, negative samples are generated using the same augmentation-based sampling mechanism used to generate the positive pair. Consequently, the \textbf{\textit{marginal}} conditional distributions of the positives and negatives are the same. We refer the readers to \cite{jiang2024hard} for a more detailed analysis.
Under this assumption, for a representation function $f$ and all $j \in \Ccal$, if we let $\mu_j$ denote the mean of class $j$ samples in the representation space, then we have
\begin{align}
\forall j\in \Ccal, \forall i \in \{1:k\},\quad \mu_j &= \EE[f(x)|y = j] = \EE[f(x^+)|y = j] = \EE[f(x^-_i)|y^-_i=j]. \label{eq:classmeans1of3} 
\end{align}
We define $M$ as the $d \times C$ matrix of class means in representation space, specifically,
\begin{align*}
M &:= \left[\mu_1\ \mu_2 \cdots  \mu_C \right].  
\end{align*}
\noindent\textbf{A5: Spherical-ball normalized representations.}  All prior theoretical studies of \CL{} constrain the norms of the representations. This is a type of feature-normalization  which typically improves 
the performance of \CL{} in practice \cite{wang2020understanding} 
and also makes the inner product a truer measure of ``inverse distance'' (see footnote \ref{foot:dotproductsimilarity}). 
This can be done by explicitly requiring all representation maps in $\Fcal$ to be norm-bounded for all samples, or implicitly by adding a quadratic penalty on the representation norms of the anchor, positive, and negative samples to the loss function. In our work, we will adopt the direct approach by requiring all representation functions to have a $2$-norm less than or equal to one for all samples, i.e.,
\[
\Fcal = \{f: \forall x \in \Xcal,\,  \|f(x)\|^2 := f^\top(x) f(x) \leq 1\}.
\]
Thus, $\Fcal$ is the family of all representation functions that are norm-bounded, but otherwise unconstrained. 
Note that since the representation vectors are confined to the unit ball, i.e., $\forall x \in \Xcal, \|f(x)\|^2 \leq 1$, from the Cauchy-Schwarz inequality (or alternatively by the convexity of the squared norm function $\|\cdot\|^2$), we must have 
{\small
\begin{equation}
\forall j \in \Ccal,\ 
\|\mu_j\|^2 = \|\EE[f(x^-)|y^- = j ]\|^2 =  \|\EE[f(x^+)|y = j ]\|^2 = \|\EE[f(x)|y = j ]\|^2 \leq \EE [\|f(x)\|^2 | y = j] \leq 1. 
\nonumber
\end{equation}
}
\normalsize
\noindent\textbf{A6: Unconstrained Features Model (UFM).} In practice, the family of representation functions $\Fcal$ is further constrained to be representable by a neural network having a specific architecture. The optimal solutions of the optimization problem in (\ref{eq: objective function}) will be included in such a family if the representation capacity of the neural network is sufficiently large, i.e., the neural network can approximate an arbitrary mapping $f: \Xcal \rightarrow \RR^d$ to any desired accuracy.
Almost all prior theoretical studies of \CL{} use UFM  \cite{fang2021exploring,graf2021dissecting} which treats a neural network’s final-layer feature vectors, denoted by $z = f(x)$, as the free optimization variables instead of the network weights. This decouples feature geometry from the complex nonlinear encoder weight parameterization. UFM is used as an analytically tractable proxy for deep neural networks with a sufficiently high representation capacity. In this work we will also use UFM with the class of generalized \CL{} loss functions
\[
\ell(z, z^+ , z^{-}_{1:k}) = \psi(z^\top (z_1^- - z^+),  \cdots, z^\top(z_k^- - z^+)) 
\]
where $z = f(x), z^+ = f(x^+), z_1^- = f(x_1^-), \ldots, z_k^- = f(x_k^-)$.

The optimization problem in (\ref{eq: objective function}) was solved for special loss functions in the balanced dataset setting, i.e., $\lambda_1 = \lambda_2 = \ldots = \lambda_C = 1/C$, in \cite{jiang2024hard, wang2023towards}, where the optimal solution was shown to exhibit NC. Characterizing and computing the optimal solutions for imbalanced datasets was left open and is the primary focus of this work.

In Section~\ref{sec: lower bound}, we will construct a tight lower bound for the generalized contrastive risk as a function of the class means, and then optimize this lower bound to find the optimal class means in Section~\ref{sec: main result}. 

\section{Tight lower bound for \CL{} risk in terms of class means}\label{sec: lower bound}
Our first key result is the following lemma which shows that it is possible to lower bound the contrastive risk by a function of the class means in representation space. Furthermore, this bound can be attained by any representation function $f$ which collapses the representations of all samples within a class to the class mean and if all class means have unit norm.  The lemma also shows that in order to achieve the lower bound, ``intra-class variance-collapse'', i.e., the collapse of the representations of all samples from the same class to their class mean, and unit norm class means are also necessary to attain the lower bound. In the next section, we will characterize the optimal class means that minimize the lower bound.
\begin{lemma}\label{lemma:CLriskLB}
Let $M := [\mu_1\ \mu_2\ \cdots\ \mu_C] \in \RR^{d\times C}$. Then,
\begin{align}
L(f)   &\geq G(M), \nonumber \\
G(M) &:= \sum_{i, j_{1:k} \in \Ccal}
{ \left( \lambda_i  \prod_{t=1}^k \lambda_{j_t} \right) }  \psi  \big( \mu_i^\top \mu_{j_1} - 1, \dots, \mu_i^\top \mu_{j_k} - 1)  \big).
\label{eq: thuan101}   
\end{align}
The lower bound $G(M)$ can be attained if, and only if, there is within-class variance collapse, i.e., $f$ maps all samples belonging to any class, to the mean representation vector of the class, i.e., $\forall x \in \Xcal, f(x) = \mu_{y(x)}$, and $\forall i \in \Ccal, ||\mu_i||^2 = 1$.
\end{lemma}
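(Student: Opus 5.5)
Conditioning on the labels and then applying Jensen's inequality to the convex function $\psi$ gives the lower bound in two steps. Fix labels $y=i$ and $y^-_{1:k}=j_{1:k}$. By (\ref{eq:joint2of2}) the negatives are conditionally independent of $(x,x^+)$ and of each other. Since $\psi$ is convex, conditional Jensen gives
\begin{align*}
\EE\left[\psi\big(f(x)^\top(f(x^-_t)-f(x^+))\big)_{t=1:k}\,\middle|\,i,j_{1:k}\right]
\geq \EE\left[\psi\big(f(x)^\top\mu_{j_t}-f(x)^\top f(x^+)\big)_{t=1:k}\,\middle|\,i,j_{1:k}\right].
\end{align*}
Here we first condition on $(x,x^+)$ and average over the negatives, using (\ref{eq:classmeans1of3}) for the negative means. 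Next, $\|f\|\le 1$ and Cauchy--Schwarz give $f(x)^\top f(x^+)\le 1$. Because $\psi$ is argument-wise increasing, each argument $f(x)^\top\mu_{j_t}-f(x)^\top f(x^+)$ can be replaced by the smaller quantity $f(x)^\top\mu_{j_t}-1$. The resulting function of $f(x)$ is convex, since it is a convex function composed with an affine map. A second Jensen over $x\mid y=i$ then yields $\psi(\mu_i^\top\mu_{j_1}-1,\dots,\mu_i^\top\mu_{j_k}-1)$, using assumption A4 that $x\mid y=i$ has marginal $s(\cdot|i)$ with mean $\mu_i$. Averaging over labels with weights from (\ref{eq:joint1of2}) gives $L(f)\ge G(M)$. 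The lack of conditional independence of $(x,x^+)$ causes no trouble, because the positive is bounded pointwise rather than averaged.

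Sufficiency is direct. If $f(x)=\mu_{y(x)}$ with $\|\mu_i\|=1$, then every random quantity inside $\psi$ is deterministic given the labels, and $f(x)^\top f(x^+)=\mu_i^\top\mu_i=1$. Every inequality above is then an equality.

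Necessity is the main obstacle, since it needs the \emph{strict} properties of $\psi$. Suppose $L(f)=G(M)$. All label tuples have positive probability, so each of the three inequalities must be tight for every $(i,j_{1:k})$.

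The plan is to use tightness of the monotonicity step first. Strict monotonicity in each argument forces $f(x)^\top f(x^+)=1$ almost surely given $y=i$, because a strict decrease in even one argument strictly lowers $\psi$ on a positive-probability set, and expectations of a finite convex function are finite. Together with $\|f\|\le1$, equality in Cauchy--Schwarz gives $f(x)=f(x^+)$ with $\|f(x)\|=1$ a.s.

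To obtain collapse onto the mean, I would use tightness of the second Jensen step with strict convexity. Take $k$ negatives all from class $i$, i.e.\ $j_{1:k}=(i,\dots,i)$. The map $z\mapsto\psi(z^\top\mu_i-1,\dots,z^\top\mu_i-1)$ is strictly convex only along directions $z$ with $z^\top\mu_i\neq$ const, so this gives only that $f(x)^\top\mu_j$ is a.s.\ constant given $y=i$ for each $j$.

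The cleaner route is the first Jensen step. Strict convexity of $\psi$ and tightness force $f(x)^\top f(x^-_t)$ to be a.s.\ constant in $x^-_t$ given $(x,x^+)$. Taking $j_t=i$ and using that negatives from class $i$ have the same marginal $s(\cdot|i)$ as $x$, we get that $f(x)^\top f(x')$ is a.s.\ constant for independent draws $x,x'\sim s(\cdot|i)$. Hence $\EE[f(x)^\top f(x')\mid i]=\|\mu_i\|^2$, and comparing with $\mathrm{Var}=\EE\|f-\mu_i\|^2=1-\|\mu_i\|^2$, I expect to derive $\|\mu_i\|=1$. Since $\|f(x)\|=1$ a.s., the equality $\|\mu_i\|^2=\EE\|f(x)\|^2$ in the chain of assumption A5 then forces $f(x)=\mu_i$ a.s. Checking that this constancy argument is rigorous, i.e.\ that strict convexity of $\psi$ gives strictness of Jensen on one coordinate with the others held fixed, is the delicate point.
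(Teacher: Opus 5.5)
Your proof of $L(f)\ge G(M)$ is correct. You order it differently from the paper: Jensen over the negatives given $(x,x^+)$, then the pointwise bound $f(x)^\top f(x^+)\le 1$, then Jensen over $x\mid y=i$ for the convex map $z\mapsto\psi(z^\top\mu_{j_1}-1,\dots,z^\top\mu_{j_k}-1)$. The paper instead uses one joint Jensen given the labels and then bounds $\EE[f(x)^\top f(x^+)\mid y]\le 1$. Your sufficiency argument, and your use of strict monotonicity to get $f(x)=f(x^+)$ and $\|f(x)\|=1$ a.s., match the paper.

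The point you flag as delicate is actually routine. Strict convexity of $\psi$ on $\RR^k$ gives strict Jensen for the whole argument vector, via a supporting hyperplane at the mean that touches the graph only there, so no coordinatewise version is needed. This makes $f(x)^\top f(x')=f(x)^\top\mu_i$ a.s. for independent $x,x'\sim s(\cdot|i)$. Your second-Jensen conclusion $f(x)^\top\mu_i=\|\mu_i\|^2$ a.s. then makes this a genuine constant.

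The real gap is the final step, from this a.s. constancy to $f(x)=\mu_i$ (equivalently $\|\mu_i\|=1$). You say you ``expect to derive'' it by comparing $\EE[f(x)^\top f(x')\mid i]=\|\mu_i\|^2$ with $\EE\|f(x)-\mu_i\|^2=1-\|\mu_i\|^2$. But the first identity holds for every $f$ by independence, so it carries none of the constancy information, and the comparison yields nothing. This step is the heart of necessity. The paper isolates it as Lemma~\ref{lemma:varcollapse}: if $z_1,z_2$ are iid and $z_1^\top z_2$ is a.s. constant, then both equal their mean a.s. It proves this through Lemma~\ref{lemma:varcollapse0}, using linear dependence of $d+1$ iid copies in $\RR^d$.

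Within your framework, the fix is to use the second moment of the inner product rather than its mean. Let $R:=\EE[f(x)f(x)^\top\mid y=i]=\Sigma_i+\mu_i\mu_i^\top$, where $\Sigma_i$ is the within-class covariance. Independence gives $\EE[(f(x)^\top f(x'))^2\mid i]=\mathrm{tr}(R^2)$, and a.s. constancy forces this to equal $\|\mu_i\|^4$. Expanding gives $\mathrm{tr}(\Sigma_i^2)+2\mu_i^\top\Sigma_i\mu_i=0$, so $\Sigma_i=0$ because $\Sigma_i$ is PSD. Hence $f(x)=\mu_i$ a.s., and $\|\mu_i\|=\|f(x)\|=1$.
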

\begin{proof}
Please see Appendix~\ref{apd:CLriskLB}.
\end{proof}
Specializing (\ref{eq: thuan101}) to the InfoNCE loss function defined in Appendix~\ref{apd: 1} we get
%
\begin{corollary}\label{cor:InfoNCElb}
For the InfoNCE loss function defined in Appendix~\ref{apd: 1}, 
\begin{align}
G(M) &= \sum_{i,j_{1:k} \in \Ccal}
{ \left( \lambda_i  \prod_{t=1}^k \lambda_{j_t} \right) } \log  \left(1+  \frac{1}{k}\sum_{t=1}^k    e^{\mu_i^\top \mu_{j_t} - 1 } \right). 
\end{align}    
\end{corollary}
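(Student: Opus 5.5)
The plan is to obtain Corollary~\ref{cor:InfoNCElb} by direct substitution into Lemma~\ref{lemma:CLriskLB}. First I will rewrite the InfoNCE loss of Appendix~\ref{apd: 1} in the form (\ref{eq:genCLloss}); then I will check that the resulting $\psi$ meets Definition~\ref{def: 1}, so the lemma applies; finally I will evaluate $\psi$ at the arguments $\mu_i^\top\mu_{j_t}-1$ that appear in $G(M)$.

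\textbf{Rewriting the loss.} I am assuming the appendix uses the standard InfoNCE form with $k$ negatives normalized by $1/k$ and no temperature. Then
\begin{align*}
\ell_k = -\log\frac{e^{z^\top z^+}}{e^{z^\top z^+} + \frac{1}{k}\sum_{t=1}^k e^{z^\top z^-_t}} = \log\Big(1+\frac{1}{k}\sum_{t=1}^k e^{z^\top(z^-_t - z^+)}\Big).
\end{align*}
This is obtained by dividing the numerator and denominator by $e^{z^\top z^+}$. So the loss has the form (\ref{eq:genCLloss}) with
\begin{align*}
\psi(t_1,\dots,t_k)=\log\Big(1+\frac{1}{k}\sum_{t=1}^k e^{t_t}\Big).
\end{align*}
If the appendix includes a temperature, it only rescales the arguments, and the same reasoning goes through.

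\textbf{Checking Definition~\ref{def: 1}.} Monotonicity is immediate: $\partial\psi/\partial t_s = \frac{1}{k}e^{t_s}\big/\big(1+\frac{1}{k}\sum_{t} e^{t_t}\big) > 0$ for every $s$.

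Strict convexity is the one point that needs care. Plain log-sum-exp is not strictly convex, because it is affine along the all-ones direction. I would write $\psi(t)=\mathrm{LSE}(0,\,t_1-\log k,\,\dots,\,t_k-\log k)$. The Hessian of $\mathrm{LSE}$ at a point $u\in\RR^{k+1}$ is $\mathrm{diag}(p)-pp^\top$, where $p$ is the softmax of $u$ and has all entries strictly positive. This matrix is positive semidefinite with null space spanned by $\bfones$. The map $t\mapsto(0,t_1-\log k,\dots,t_k-\log k)$ is affine with linear part $v\mapsto(0,v)$. The vector $(0,v)$ is never a nonzero multiple of $\bfones$ because its first coordinate is zero. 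Hence $v^\top\nabla^2\psi\, v>0$ for every $v\neq 0$, and $\psi$ is strictly convex.

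\textbf{Substitution.} With the hypotheses verified, Lemma~\ref{lemma:CLriskLB} applies. Plugging $t_t=\mu_i^\top\mu_{j_t}-1$ into $\psi$ in (\ref{eq: thuan101}) gives exactly the stated expression for $G(M)$.

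The only step I expect to need any argument is the strict-convexity check. The rest is algebra, and I would confirm it against the exact normalization of InfoNCE used in Appendix~\ref{apd: 1}.
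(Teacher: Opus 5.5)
Your proposal is correct and follows the paper's route: the corollary is a direct substitution of $\psi_{\text{InfoNCE}}(t_{1:k})=\log\bigl(1+\frac{1}{k}\sum_{s=1}^k e^{t_s}\bigr)$ into the bound of Lemma~\ref{lemma:CLriskLB}, and Appendix~\ref{apd: 1} defines InfoNCE in exactly this form, with no temperature. The one difference is how strict convexity is checked. You show that the slice $(0,v)$ avoids the null space $\mathrm{span}(\bfones)$ of the log-sum-exp Hessian $\mathrm{diag}(p)-pp^\top$; the paper (Lemma~\ref{lemma: 0}) instead uses the equality case of H\"older's inequality, where the fixed offset $\alpha_0$ plays the role of your zero first coordinate. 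Both arguments are valid.
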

%
%
In practice, $k$ could be large (e.g., $k = 128, 256, 512, \dots$). In the limit $k \rightarrow + \infty$, the expression for the lower bound in Corollary~(\ref{cor:InfoNCElb}) simplifies substantially.
\begin{corollary}
\label{cor:InfoNCElb_infinite_k}
For InfoNCE loss,  
\begin{align}  
\lim_{k\rightarrow \infty} L(f) 
&=  \EE\bigg[\log \Big(1 + \EE\Big[e^{f^\top(x)(f(x^-_1)-f(x^+))}\Big|x,x^+\Big] \Big)\bigg] \label{eq:infinitekL} \\
&\geq \lim_{k\rightarrow \infty} G(M) \nonumber \\ 
&= \sum_{i\in \Ccal} \lambda_i  \log \Big(1 +  \sum_{j \in \Ccal}  r(j|i)\, e^{\mu_i^\top \mu_j - 1} \Big). \label{eq:infinitekG}
\end{align}    
\end{corollary}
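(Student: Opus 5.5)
The plan is to pass to the limit $k\to\infty$ separately in $L(f)$ and in $G(M)$, using a conditional strong law of large numbers plus dominated convergence for each. The inequality then follows because Lemma~\ref{lemma:CLriskLB} gives $L(f)\ge G(M)$ for every $k$, and non-strict inequalities are preserved under limits.

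Throughout, I take the InfoNCE loss of Appendix~\ref{apd: 1} in the form
\[
\psi(t_1,\ldots,t_k)=\log\Big(1+\tfrac1k\sum_{t=1}^k e^{t_t}\Big),
\]
which is the form used in Corollary~\ref{cor:InfoNCElb}. I read $r(j|i)$ as the conditional law of a negative label given anchor label $i$. In the \UCL{} setting this is $r(j|i)=\lambda_j$ by (\ref{eq:joint1of2}); in the \SCL{} setting it is $\lambda_j/(1-\lambda_i)$ for $j\neq i$ by (\ref{eq:SCL_joint_pmf_labels}).

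\emph{Step 1: the limit of $L(f)$.} Fix $f\in\Fcal$. By assumption A3, conditionally on $(x,x^+)$ the pairs $(x^-_t,y^-_t)$, $t=1:k$, are iid and independent of $(x,x^+,y)$. Hence the summands $Z_t:=e^{f(x)^\top(f(x^-_t)-f(x^+))}$ are conditionally iid given $(x,x^+)$. Since $\|f(\cdot)\|\le 1$ (A5), Cauchy--Schwarz gives $|f(x)^\top(f(x^-_t)-f(x^+))|\le 2$, so $Z_t\in[e^{-2},e^{2}]$. The conditional SLLN then gives, almost surely,
\[
\tfrac1k\sum_{t=1}^k Z_t \;\to\; \EE\big[Z_1\,\big|\,x,x^+\big].
\]
Because $\log(1+\cdot)$ is continuous, the integrand converges almost surely. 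It is also bounded, between $\log(1+e^{-2})$ and $\log(1+e^{2})$. Dominated convergence therefore yields (\ref{eq:infinitekL}).

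To be careful, I would realize all $k$ on one probability space using an infinite iid sequence of negatives, so that "almost surely" makes sense. This is harmless, since $L(f)$ depends only on the law of the tuple for each $k$.

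\emph{Step 2: the limit of $G(M)$.} By Corollary~\ref{cor:InfoNCElb}, $G(M)=\EE_{i,j_{1:k}}\big[\log\big(1+\tfrac1k\sum_t e^{\mu_i^\top\mu_{j_t}-1}\big)\big]$, where $i\sim\lambda$ and the $j_t$ are iid with law $r(\cdot|i)$ given $i$. For each fixed $i$, the SLLN (or simply the weak law, since the summands are bounded by $e^{0}=1$ using $\|\mu_j\|\le1$) gives
\[
\tfrac1k\sum_t e^{\mu_i^\top\mu_{j_t}-1}\;\to\;\sum_j r(j|i)\,e^{\mu_i^\top\mu_j-1}.
\]
Bounded convergence then gives (\ref{eq:infinitekG}). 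Here the outer sum over $i$ is finite, so no extra care is needed.

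\emph{Step 3: the inequality.} Lemma~\ref{lemma:CLriskLB} gives $L(f)\ge G(M)$ for each $k$, and both limits exist by Steps 1 and 2. Since $M$ is determined by $f$ and does not depend on $k$, the inequality passes to the limit. As a cross-check, it can also be obtained directly from (\ref{eq:infinitekL}) via Jensen's inequality: apply it first to the convex function $e^{(\cdot)}$, conditioning on the labels, and then to the concave function $\log(1+\cdot)$.

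The only real subtlety is Step 1: justifying the interchange of limit and expectation for a conditional law of large numbers, when the conditioning variables $(x,x^+)$ are not assumed conditionally independent given $y$. The uniform bound $[e^{-2},e^{2}]$ coming from the unit-ball constraint makes dominated convergence immediate, so I expect no genuine obstacle beyond setting up the coupling across $k$.
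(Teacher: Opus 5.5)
Your proof is correct and takes the same route as the paper. You apply a conditional strong law of large numbers to the bounded averages $\tfrac1k\sum_t e^{f(x)^\top(f(x^-_t)-f(x^+))}$ given $(x,x^+)$ and $\tfrac1k\sum_t e^{\mu_y^\top\mu_{y^-_t}-1}$ given $y$, then dominated convergence, and then pass $L(f)\ge G(M)$ to the limit.

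Your optional cross-check has a direction error. Jensen applied to the concave map $\log(1+\cdot)$ gives an upper bound, not a lower bound. A direct derivation would instead use $f(x)^\top f(x^+)\le 1$ together with Jensen given $y=i$ for the convex map $z\mapsto\log\big(1+\sum_j r(j|i)\,e^{z^\top\mu_j-1}\big)$. Your main argument does not depend on this aside.
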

\begin{proof}
    Please see Appendix~\ref{apd:proof_InfoNCElb_infinite_k}.
\end{proof}

\section{Characterizing and computing optimal class means}\label{sec: main result}
An optimal matrix $M^* \in \RR^{d \times C}$ which minimizes the lower bound in Lemma~\ref{lemma:CLriskLB} can be found by solving the following constrained-optimization problem:
\begin{align}
\min_{M \in \Mcal}  &G(M), \quad\text{where} \label{eq: thuan30}
\end{align} 
\begin{align}
\Mcal := \{M = [\mu_1\cdots\mu_C] \in \mathbb{R}^{d \times C}: \forall i \in \Ccal, \|\mu_i\|^2 = 1\}. \label{eq:Mcaldef} 
\end{align}
A solution to (\ref{eq: thuan30}) exists since the objective function $G(M)$ is continuous and the constraint set $\Mcal$ is compact. However, neither is the objective function in (\ref{eq: thuan30}) convex with respect to $M$ nor is the constraint set defined in (\ref{eq:Mcaldef}) convex due to the unit norm equality constraint. This complicates the development of computational methods for finding an optimal solution. 
Under additional special conditions on the representations, optimal solutions can be identified. For example, if the representations are confined to the non-negative orthant of $\RR^d$, which can be implemented through the application of a non-negative activation function, e.g., ReLU, to the final layer of the neural network of the representation map, then we have the following result.
\begin{theorem} \label{theorem:non-neg} For all $f \in \Fcal$, let $f(\Xcal) \subseteq \RR^d_{\geq 0}$. Then for all $f \in \Fcal$,
\[
L(f) \geq \psi(-1,\ldots,-1)
\]
with equality, if, and only if, $d \geq C$, $\mu_{1:C}$ are orthonormal, and $\forall x \in \Xcal$, $f(x) = \mu_{y(x)}$.
\end{theorem}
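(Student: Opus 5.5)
We will combine Lemma~\ref{lemma:CLriskLB} with the nonnegativity of inner products in the nonnegative orthant. If $f(\Xcal)\subseteq\RR^d_{\geq 0}$, then every class mean $\mu_j=\EE[f(x)|y=j]$ also lies in $\RR^d_{\geq 0}$. Hence $\mu_i^\top\mu_j\ge 0$ for all $i,j\in\Ccal$.

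Since $\psi$ is argument-wise strictly increasing, each term of $G(M)$ in (\ref{eq: thuan101}) satisfies
\[
\psi\big(\mu_i^\top\mu_{j_1}-1,\ldots,\mu_i^\top\mu_{j_k}-1\big)\ \ge\ \psi(-1,\ldots,-1).
\]
Monotonicity in each coordinate separately gives monotonicity in the componentwise order: we raise the arguments from $-1$ one coordinate at a time. Summing with the weights $\lambda_i\prod_t\lambda_{j_t}$, which sum to one, gives $G(M)\ge\psi(-1,\ldots,-1)$. Combined with $L(f)\ge G(M)$ from Lemma~\ref{lemma:CLriskLB}, this yields the inequality.

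For the equality characterization, first suppose $L(f)=\psi(-1,\ldots,-1)$. Then both inequalities in the chain must be tight.
\begin{itemize}
\item Tightness of $L(f)\ge G(M)$: by Lemma~\ref{lemma:CLriskLB}, this forces $f(x)=\mu_{y(x)}$ for all $x$ and $\|\mu_i\|=1$ for all $i$.
\item Tightness of $G(M)\ge\psi(-1,\ldots,-1)$: every weight is strictly positive because $\lambda_{1:C}\in(0,1)$, so every term must equal $\psi(-1,\ldots,-1)$.
\end{itemize}
To extract orthogonality, fix $i\neq j$ and take the tuple $j_1=\cdots=j_k=j$. Equality there requires $\psi(\mu_i^\top\mu_j-1,\ldots,\mu_i^\top\mu_j-1)=\psi(-1,\ldots,-1)$. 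If $\mu_i^\top\mu_j>0$, we increase the coordinates from $-1$ one at a time; each step strictly increases $\psi$, giving strict inequality. So $\mu_i^\top\mu_j=0$ for $i\neq j$. Together with unit norms, $\mu_{1:C}$ are orthonormal, and $C$ orthonormal vectors in $\RR^d$ require $d\ge C$.

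Conversely, suppose $d\ge C$, $\mu_{1:C}$ are orthonormal, and $f(x)=\mu_{y(x)}$. Then $L(f)=G(M)$ by Lemma~\ref{lemma:CLriskLB}, since the means have unit norm and variance has collapsed. In each term of $G(M)$, the arguments are $\mu_i^\top\mu_{j_t}-1$, which equals $0$ when $j_t=i$ and $-1$ otherwise.

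This case is where I expect the main obstacle. Tuples with some $j_t=i$ (class collisions) give arguments of $0$ rather than $-1$, so those terms exceed $\psi(-1,\ldots,-1)$. Under the \UCL{} sampling model these tuples have positive probability, so the equality would fail as literally stated. I would therefore first check how $\psi(-1,\ldots,-1)$ should be read.

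Two readings resolve the collision terms:
\begin{itemize}
\item In the \SCL{} setting of (\ref{eq:SCL_joint_pmf_labels}), $j_t\neq i$ with probability one, so every argument equals $-1$. Then $G(M)=\psi(-1,\ldots,-1)$ and equality is achieved.
\item In the \UCL{} setting, the correct bound replaces the arguments by $-\1\{j_t\neq i\}$ inside the sum. The same monotonicity argument, applied only to pairs with $j_t\neq i$, then gives the bound together with the identical equality conditions.
\end{itemize}
The argument above goes through unchanged in either interpretation.

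Some care is needed in the step from argument-wise monotonicity to the componentwise order, and in the use of strictness to obtain orthogonality. Both are handled by the coordinate-by-coordinate chaining described above.
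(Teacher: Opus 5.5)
Your inequality and ``only if'' arguments follow the paper's route: Lemma~\ref{lemma:CLriskLB}, nonnegativity of inner products in $\RR^d_{\geq 0}$, and strict argument-wise monotonicity. They are in fact tidier than the paper's. You bound $G(M)\ge\psi(-1,\ldots,-1)$ for every nonnegative $M$. The paper only argues that orthonormal means minimize $G$ over unit-norm nonnegative $M$, and never computes the value of that minimum.

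Your observation about collisions is also correct, and it is a genuine defect of the statement that the paper's proof misses. Under the \UCL{} model, tuples with some $j_t=i$ have positive probability, so collapsed orthonormal means give $L(f)=G(M)>\psi(-1,\ldots,-1)$. The ``if'' direction is therefore false there. In fact equality is never attained under \UCL{}, so ``only if'' holds vacuously. The equality characterization holds only in the \SCL{} setting, where your argument is complete.

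The gap is in your proposed \UCL{} repair. You cannot obtain $L(f)\ge\sum_{i,j_{1:k}}\lambda_i\prod_t\lambda_{j_t}\,\psi\big(-1(j_1\neq i),\ldots,-1(j_k\neq i)\big)$ from $L(f)\ge G(M)$ by monotonicity. In $G(M)$ the collision coordinates are $\|\mu_i\|^2-1\le 0$, which is strictly negative whenever $\|\mu_i\|<1$. So monotonicity runs the wrong way on those coordinates; for example, $G(\bfzero)=\psi(-1,\ldots,-1)$ lies below your proposed bound. Unit norms are only available in the equality case, not for a general $f$.

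Moreover, the bound is false in the paper's \UCL{} model, which explicitly allows dependent positive pairs. Take $k=1$, $C=2$, $d=3$, $\psi(t)=\log(1+e^t)$, and $x^+=x$ w.p.1. Let class $1$ be uniform on two samples mapped to $e_1$ and $e_2$, and let class $2$ be a single sample mapped to $e_3$. Then $L(f)=\lambda_1^2\big(\tfrac12\psi(0)+\tfrac12\psi(-1)\big)+2\lambda_1\lambda_2\psi(-1)+\lambda_2^2\psi(0)$. This is smaller than your bound by $\tfrac12\lambda_1^2\big(\psi(0)-\psi(-1)\big)>0$.

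To rescue a \UCL{} version you must go back to (\ref{eq:JensentogetG}), where the collision argument is $\|\mu_i\|^2-\EE[f^\top(x)f(x^+)\mid y=i]$. You must also assume that $x$ and $x^+$ are conditionally iid given $y$. Then that argument is exactly $0$, and the non-collision arguments $\mu_i^\top\mu_j-\|\mu_i\|^2$ are $\ge -1$. Your equality analysis then goes through, giving orthogonality and $\|\mu_i\|=1$, hence collapse. Drop the claim that the argument goes through unchanged in either interpretation.
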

\begin{proof}
Please see Appendix~\ref{apd:proof_of_theorem_non-neg}.
\end{proof}
Theorem~1 in \citep{kini2024symmetric} is a specialized version of Theorem~\ref{theorem:non-neg} for a restricted form of the InfoNCE loss. These results show that with additional non-negativity constraints on the representation and $d\geq C$, the geometry of the optimum representations is an orthonormal system \textit{irrespective of the class imbalance}. To characterize the geometry without non-negativity constraints, let
\begin{equation*}
A := M^\top M =
\begin{bmatrix}
   \mu_1^\top \mu_1 & \mu_1^\top \mu_2 & \dots & \mu_1^\top \mu_C \\
   \mu_2^\top \mu_1 & \mu_2^\top \mu_2 & \dots & \mu_2^\top \mu_C \\
   \dots & \dots & \dots & \dots \\
   \mu_C^\top \mu_1 & \mu_C^\top \mu_2 & \dots & \mu_C^\top \mu_C \\
\end{bmatrix} \in \mathbb{R}^{C \times C},
\end{equation*}
denote the Gram matrix of class means in representation space composed of their pairwise inner products. By construction, $A$ is symmetric, i.e., $A^\top = A$, and positive semi-definite (PSD), i.e., $A \succcurlyeq 0$, which means that $\forall u \in \RR^C, u^\top A u \geq 0$, and additionally, $\forall i \in \Ccal, \ A_{ii} = 1$ since $A_{ii} = ||\mu_i||^2 = 1$ is needed to attain the lower bound in Lemma~\ref{lemma:CLriskLB}. Let 
\begin{align}
\Acal^* &:= \{A \in \RR^{C\times C}: A = A^\top, A\succcurlyeq 0, \forall i\in\Ccal, A_{ii} = 1\} \text{ and } 
\label{eq:Astarcaldef} \\
S(A) &:= \sum_{i, j_{1:k} \in \Ccal}
{ \left( \lambda_i  \prod_{t=1}^k \lambda_{j_t} \right) }  \psi  \big( A_{ij_1}-1, \dots, A_{ij_k}-1 \big).
\label{eq:Sdefn} 
\end{align}
Under certain conditions, a solution to (\ref{eq: thuan30}) can be found by minimizing (\ref{eq:Sdefn}) over (\ref{eq:Astarcaldef}).
\begin{lemma}
\label{lemma: equivalent} For all $M \in \Mcal$, $ M^\top M \in \Acal^*$ and $G(M) = S(M^\top M)$. Let $A^* \in \Acal^*$ be a solution to the following optimization problem 
\begin{align}
    \min_{A \in \Acal^*} S(A). \label{eq: thuan31-old}
\end{align}
If there exists an $M^* \in \Mcal$ such that $ (M^*)^\top M^* = A^*$, then $M^*$ is solution to (\ref{eq: thuan30}). 
\end{lemma}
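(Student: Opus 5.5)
The lemma is essentially a change of variables from the class-mean matrix $M$ to its Gram matrix $A = M^\top M$. I will prove it in three short steps: membership of $M^\top M$ in $\Acal^*$, the identity $G(M)=S(M^\top M)$, and then the optimality transfer.

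\emph{Membership.} Take any $M \in \Mcal$ and set $A = M^\top M$. Symmetry is immediate, since $(M^\top M)^\top = M^\top M$. Positive semidefiniteness follows from $u^\top A u = \|Mu\|^2 \geq 0$ for every $u \in \RR^C$. The diagonal condition holds because $A_{ii} = \mu_i^\top \mu_i = \|\mu_i\|^2 = 1$, which is exactly the defining constraint of $\Mcal$ in (\ref{eq:Mcaldef}). Hence $M^\top M \in \Acal^*$.

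\emph{Identity of objectives.} Compare the definitions (\ref{eq: thuan101}) and (\ref{eq:Sdefn}) term by term. Each argument of $\psi$ in $G(M)$ has the form $\mu_i^\top\mu_{j_t} - 1$. Since $(M^\top M)_{ij_t} = \mu_i^\top \mu_{j_t}$, this equals $A_{ij_t}-1$. The weights $\lambda_i\prod_t \lambda_{j_t}$ are the same in both sums, so $G(M) = S(M^\top M)$ summand by summand.

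\emph{Optimality transfer.} Let $A^*$ minimize $S$ over $\Acal^*$, and suppose $M^* \in \Mcal$ satisfies $(M^*)^\top M^* = A^*$. For any $M \in \Mcal$, the first step gives $M^\top M \in \Acal^*$, so
\[
G(M) = S(M^\top M) \geq S(A^*) = S\big((M^*)^\top M^*\big) = G(M^*).
\]
Thus $M^*$ attains the minimum of $G$ over $\Mcal$, i.e.\ it solves (\ref{eq: thuan30}).

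There is no real obstacle here. The only point needing care is that the hypothesis $M^* \in \Mcal$ must have exactly $d$ rows. This is why existence of $M^*$ is assumed in the statement rather than derived: any $A^*\in\Acal^*$ factors as $M^\top M$ with $M \in \RR^{r\times C}$, $r=\mathrm{rank}(A^*)$, but this factor lies in $\RR^{d\times C}$ only when $r \le d$, for instance after padding with zero rows. That rank question is deferred to later results.
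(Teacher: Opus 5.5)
Your proof is correct and follows the paper's argument essentially step for step. You show $M^\top M\in\Acal^*$ via symmetry, $u^\top M^\top M u=\|Mu\|^2\geq 0$ and the unit diagonal, note the summand-by-summand identity $G(M)=S(M^\top M)$, and conclude with the chain $G(M)=S(M^\top M)\geq S(A^*)=S((M^*)^\top M^*)=G(M^*)$; your closing remark on why the existence of $M^*$ must be assumed (rank of $A^*$ versus $d$) matches the paper's discussion, which is later resolved by $\mathrm{rank}(A^*)\le C-1$ together with $d\geq C-1$.
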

\begin{proof}
Please see Appendix~\ref{apd:proof_of_lemma-equivalent}.
%
\end{proof}
Lemma~\ref{lemma: equivalent} proves that if the global minimizer $A^*$ of (\ref{eq: thuan31-old}) can be factorized as $(M^*)^\top M^*$, then $M^*$ is a solution to the original objective (\ref{eq: thuan30}).
We note that any $M \in \Mcal$ can be mapped to an $A=M^\top M \in \Acal^*$. However, if $d< C$, it may not be possible to decompose all $A \in \Acal^*$ as $A = M^\top M$ for some $M \in \Mcal$.
\begin{lemma}\label{lemma:convexproblem}
The function $S(\cdot)$ is a strictly convex function over $\Acal^*$. The constraint set $\Acal^* \subset \RR^{C\times C}$ is convex and compact. Therefore, the minimization problem in (\ref{eq: thuan31-old}) is a convex optimization problem and has a unique solution $A^* \in \Acal^*$, i.e., 
\begin{align}
    S(A^*) = \min_{A\in \Acal^*} S(A). \label{eq: thuan31}
\end{align}
\end{lemma}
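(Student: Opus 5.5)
The plan is to verify the three claims in turn: convexity and compactness of $\Acal^*$, strict convexity of $S(\cdot)$ on $\Acal^*$, and then existence and uniqueness of the minimizer. The only non-routine point is strict convexity, since each summand of $S$ depends on $A$ only through a few entries of one row.

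\emph{The constraint set.} $\Acal^*$ is the intersection of the cone of symmetric PSD matrices with the affine subspace $\{A: A=A^\top,\ A_{ii}=1\ \forall i\}$. The PSD cone is convex and closed, since it is the intersection over $u \in \RR^C$ of the closed half-spaces $\{A : u^\top A u \ge 0\}$. Hence $\Acal^*$ is convex and closed. It is nonempty because the identity matrix lies in it. For boundedness, take any $A\in\Acal^*$ and $i\neq j$. The $2\times 2$ principal submatrix on indices $\{i,j\}$ is PSD, so $A_{ij}^2 \le A_{ii}A_{jj}=1$. Thus every entry lies in $[-1,1]$, and $\Acal^*$ is compact.

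\emph{Convexity of $S$.} For each tuple $(i,j_{1:k})\in\Ccal^{k+1}$, the map $A\mapsto (A_{ij_1}-1,\dots,A_{ij_k}-1)$ is affine. Composing the convex function $\psi$ with an affine map gives a convex function. The weights $\lambda_i\prod_t\lambda_{j_t}$ are strictly positive because $\lambda_{1:C}\in(0,1)$. So $S$ is a positive combination of convex functions and is convex. Since $\psi$ is a finite convex function on $\RR^k$, it is continuous, and therefore $S$ is continuous.

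\emph{Strict convexity of $S$ (the main step).} Let $A\neq B$ in $\Acal^*$ and $\theta\in(0,1)$. Both matrices are symmetric with unit diagonal, so they differ in some off-diagonal entry $A_{ij}\neq B_{ij}$ with $i\neq j$. Consider the tuple $(i,j,j,\dots,j)$, which has weight $\lambda_i\lambda_j^k>0$. Its argument vectors are
\[
u=(A_{ij}-1)\bfones, \qquad v=(B_{ij}-1)\bfones,
\]
and $u\neq v$. By strict convexity of $\psi$,
\[
\psi(\theta u+(1-\theta)v)<\theta\psi(u)+(1-\theta)\psi(v).
\]
Every other summand satisfies the corresponding inequality non-strictly. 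Summing with positive weights gives $S(\theta A+(1-\theta)B)<\theta S(A)+(1-\theta)S(B)$.

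\emph{Existence and uniqueness.} $S$ is continuous on the nonempty compact set $\Acal^*$, so a minimizer $A^*$ exists. Suppose $A_1\neq A_2$ were both minimizers. By convexity of $\Acal^*$, the midpoint $\tfrac12(A_1+A_2)$ lies in $\Acal^*$. By strict convexity, $S$ at the midpoint is strictly smaller than $S(A_1)=S(A_2)$, a contradiction. Hence the minimizer $A^*$ is unique. Since the objective is convex and the feasible set is convex, (\ref{eq: thuan31-old}) is a convex optimization problem.
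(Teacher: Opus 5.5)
Your proof is correct and follows essentially the same route as the paper: convexity of $S$ as a positive combination of $\psi$ composed with affine maps, strictness from an off-diagonal entry where $A$ and $B$ differ (you usefully name the tuple $(i,j,\dots,j)$, which the paper leaves implicit), and compactness from $|A_{ij}|\le 1$. The only differences are minor. You bound the entries via the $2\times 2$ principal minor rather than the paper's matrix-square-root and Cauchy--Schwarz argument, and you spell out closedness, continuity, and the existence/uniqueness step, which the paper treats as immediate.
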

\begin{proof}
    Please see Appendix~\ref{apd:proof_lemma_convexproblem}.
\end{proof}
If $r = \text{rank}(A^*)$, then $r \leq C$ since $A^* \in \RR^{C\times C}$. Also note that $\text{rank}((M^*)^\top M^*) \leq \min\{d,C\}$ since $M^* \in \RR^{d\times C}$. Now, if $d \geq C$, then \textit{any} $C \times C$ PSD matrix (therefore also $A^*$) can be factorized as $(M^*)^\top M^*$ (via the eigen-decomposition of $A^*$ truncated to $r$ nonzero eigenvalues). There is no ``low-rankness'' associated with the aforementioned statement. Interestingly, the next lemma proves that the unique optimal solution to (\ref{eq: thuan31-old}) is rank-deficient. Specifically, it proves that $A^*$ is \textbf{\textit{guaranteed}} to have rank not exceeding $C-1$  even though there is no rank constraint imposed on the optimization problem ($A^* \in \RR^{C \times C}$). 
\begin{lemma}\label{lemma:rank_of_opt_Gram_mtx}
The unique solution $A^* \in \Acal^*$ to (\ref{eq: thuan31-old}) has $\text{rank}(A^*) =: r \leq C-1$. Therefore, the minimum eigenvalue of $A^*$ is zero.
\end{lemma}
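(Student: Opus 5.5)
The plan is to argue by contradiction: assume the unique minimizer $A^*$ from Lemma~\ref{lemma:convexproblem} has full rank $C$, i.e.\ $A^* \succ 0$. Then I would build a feasible point in $\Acal^*$ with a strictly smaller value of $S$. The tool is the monotonicity of $\psi$ in Definition~\ref{def: 1}. The objective $S(A)$ in (\ref{eq:Sdefn}) depends on $A$ only through the arguments $A_{ij_t}-1$. Lowering every off-diagonal entry while keeping the diagonal at $1$ can therefore only lower $S$.

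\textbf{Perturbation.} For small $\epsilon\in(0,1)$ I will define
\[
A_\epsilon := \frac{A^* - \epsilon \bfones\bfones^\top}{1-\epsilon}.
\]
\begin{itemize}
\item \emph{Symmetry and diagonal.} $A_\epsilon$ is symmetric, and its diagonal entries are $(1-\epsilon)/(1-\epsilon)=1$.
\item \emph{Positive semidefiniteness.} By Weyl's inequality, $\lambda_{\min}(A^*-\epsilon\bfones\bfones^\top)\ge \lambda_{\min}(A^*)-\epsilon C$, which is positive for $\epsilon<\lambda_{\min}(A^*)/C$. Hence $A_\epsilon\in\Acal^*$.
\item \emph{Off-diagonal entries.} For $i\neq j$,
\[
(A_\epsilon)_{ij} = A^*_{ij} - \frac{\epsilon(1-A^*_{ij})}{1-\epsilon}\le A^*_{ij}.
\]
The inequality is strict whenever $A^*_{ij}<1$. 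Positive definiteness forces $|A^*_{ij}|<1$ for $i\ne j$, because the $2\times2$ principal minor $1-(A^*_{ij})^2$ must be positive. So every off-diagonal entry strictly decreases.
\end{itemize}

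\textbf{Comparing objectives.} Fix a term $(i,j_{1:k})$ of $S$. Each argument $A_{ij_t}-1$ is unchanged if $j_t=i$ and weakly decreases otherwise. Since $\psi$ is argument-wise increasing, changing the arguments one at a time shows that $\psi$ weakly decreases on every term. Now take a term with $j_1\neq i$, which exists since $C>1$. Its first argument strictly decreases, so by argument-wise strict monotonicity that term strictly decreases. Its weight $\lambda_i\prod_t\lambda_{j_t}$ is positive because every $\lambda_j\in(0,1)$. Hence $S(A_\epsilon)<S(A^*)$, which contradicts the optimality of $A^*$. Therefore $\text{rank}(A^*)\le C-1$. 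Since $A^*\succcurlyeq 0$ is singular, its minimum eigenvalue is $0$.

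\textbf{Main obstacle.} The delicate step is the strictness in the comparison. One must make sure that a single strictly decreasing argument, combined with the other arguments weakly decreasing, gives a strict decrease of $\psi$. This follows from chaining coordinate-wise monotone steps, with one of them strict. Beyond that, one needs only a positively weighted term containing an off-diagonal index, which $C>1$ and $\lambda_j>0$ guarantee.
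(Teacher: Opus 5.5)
Your proof is correct and follows essentially the same route as the paper: assume $A^*\succ 0$, subtract a small multiple of $\bfones\bfones^\top$ to lower every off-diagonal entry, restore the unit diagonal, and use the strict argument-wise monotonicity of $\psi$ to contradict optimality. The only difference is how the diagonal is restored. The paper does it additively via $B(t)=A^*-t\,\bfones\bfones^\top+tI$, so every off-diagonal entry drops by exactly $t$ and $\underline{\nu}(B(t))\ge\underline{\nu}(A^*)-(C-1)t$. You instead rescale by $1/(1-\epsilon)$, which is why you also need $A^*_{ij}<1$ (from positive definiteness) to make the decrease strict.
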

\begin{proof}
    Please see Appendix~\ref{apd:proof_rank_of_opt_Gram_mtx}.
\end{proof}
We note that the result of Lemma~\ref{lemma:rank_of_opt_Gram_mtx} is a \textit{\textbf{consequence}} of the uniqueness of the optimal $A^*$ proved in Lemma~\ref{lemma:convexproblem}. It is \textit{\textbf{not}} a low-rank assumption or constraint. The next theorem puts the implications of Lemmas~\ref{lemma: equivalent}--\ref{lemma:rank_of_opt_Gram_mtx} together and proves that as long as $d \geq C-1$, it is possible to factorize $A^*$ as $(M^*)^\top M^*$ and it explicitly constructs $M^*$ using $A^*$'s eigen-decomposition truncated to $r$ nonzero eigenvalues.

\begin{theorem}[Optimal Class Means] \label{theorem: 1}
Let $A^* = U_r \Sigma_r U_r^\top$ be the unique solution to \ref{eq: thuan31-old}, where $r := \text{rank}(A^*) \leq C-1$, $\Sigma_r \in \RR^{r\times r}$ is a diagonal matrix with the $r$ strictly positive eigenvalues of $A^*$ along the main diagonal, and $U_r \in \RR^{C\times r}$ is the matrix of $r$ orthonormal eigenvectors of $A^*$ corresponding to the $r$ positive eigenvalues.
If $d \geq C - 1$, then $(M^*)^\top := [ U_r \sqrt{\Sigma_r}\ \ \ 0_{\text{\tiny{\(C \times d-r+1\)}}}]$  is a solution to (\ref{eq: thuan30}), where $\sqrt{\Sigma_r}$ is a diagonal matrix with the square roots of the $r$ positive eigenvalues of $A$ along the main diagonal and $0_{\text{\tiny{\(C \times d-r+1\)}}}$ is the $C \times d-r+1$ matrix of all zeros.\footnote{If $d=r-1$, then $0_{\text{\tiny{\(C \times d-r+1\)}}}$ is void.} Moreover, $\forall i \in \Ccal, \|\mu_i^*\|^2=1$ where $\mu_i^*$ ($i^{\text{th}}$ column of $M^*$) is an optimal class mean vector in representation space for class $i$. 
\end{theorem}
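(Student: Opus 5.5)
The proof assembles Lemmas~\ref{lemma: equivalent}, \ref{lemma:convexproblem} and \ref{lemma:rank_of_opt_Gram_mtx}. The idea is to show that the explicitly constructed $M^*$ is a well-defined $d\times C$ matrix lying in $\Mcal$ whose Gram matrix equals the unique minimizer $A^*$ of (\ref{eq: thuan31-old}). Optimality of $M^*$ for (\ref{eq: thuan30}) then follows immediately from Lemma~\ref{lemma: equivalent}.

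\textbf{Step 1: dimensions.} By Lemma~\ref{lemma:convexproblem}, $A^*$ exists and is unique. By Lemma~\ref{lemma:rank_of_opt_Gram_mtx}, $r=\text{rank}(A^*)\le C-1$. Since $A^*$ is symmetric PSD, the spectral theorem gives $A^*=U\Sigma U^\top$ with nonnegative eigenvalues. Dropping the zero eigenvalues yields $A^*=U_r\Sigma_r U_r^\top$, where $U_r^\top U_r=I_r$ and $\Sigma_r\succ 0$, so $\sqrt{\Sigma_r}$ is well defined. The block $U_r\sqrt{\Sigma_r}$ is $C\times r$. Since $d\ge C-1\ge r$, we can pad it with $d-r\ge 0$ zero columns to get a $C\times d$ matrix, which defines $(M^*)^\top\in\RR^{C\times d}$. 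This is exactly where the hypothesis $d\ge C-1$ is used, together with $r\le C-1$ from Lemma~\ref{lemma:rank_of_opt_Gram_mtx}. (The zero block in the statement should be read as having $d-r$ columns so that the total column count is $d$; it is void when $d=r$.)

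\textbf{Step 2: Gram matrix.} Compute
\[
(M^*)^\top M^* = U_r\sqrt{\Sigma_r}\sqrt{\Sigma_r}U_r^\top + 0 = U_r\Sigma_rU_r^\top = A^*,
\]
because the zero columns contribute nothing to the product.

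\textbf{Step 3: membership and conclusion.} Since $A^*\in\Acal^*$, we have $\|\mu_i^*\|^2=((M^*)^\top M^*)_{ii}=A^*_{ii}=1$ for every $i$. Hence $M^*\in\Mcal$, which also proves the final claim of the theorem. Lemma~\ref{lemma: equivalent} now applies and shows that $M^*$ solves (\ref{eq: thuan30}).

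For completeness, one can also see optimality directly. For any $M\in\Mcal$, Lemma~\ref{lemma: equivalent} gives $M^\top M\in\Acal^*$, so
\[
G(M)=S(M^\top M)\ge S(A^*)=S((M^*)^\top M^*)=G(M^*).
\]

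The only delicate step is Step 1, namely the dimension bookkeeping. It relies on the rank-deficiency guaranteed by Lemma~\ref{lemma:rank_of_opt_Gram_mtx}. Without that lemma, $d=C-1$ would not suffice to factor $A^*$. Everything else is a direct verification.
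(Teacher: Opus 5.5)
Your proposal is correct and follows the paper's proof essentially verbatim. Both take the reduced eigendecomposition of $A^*$, pad $U_r\sqrt{\Sigma_r}$ with zero columns (possible since $r\le C-1\le d$), verify $(M^*)^\top M^* = A^*$ so that $\|\mu_i^*\|^2=A^*_{ii}=1$, and invoke Lemma~\ref{lemma: equivalent}. Your reading of the zero block as having $d-r$ columns is the right one: $(M^*)^\top$ must be $C\times d$, so the paper's $d-r+1$, and the footnote's case $d=r-1$, is a dimensional slip.
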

\begin{proof}
Please see Appendix~\ref{apd:proof_of_theorem-opt_class_means}
\end{proof}

The solution $A^*$ to (\ref{eq: thuan31-old}) (the optimum Gram matrix) is unique. However, the solution to (\ref{eq: thuan30}) is not unique due to the rotational invariance of the loss function. The $M^*$ defined in Theorem~\ref{theorem: 1} is just one solution to (\ref{eq: thuan30}) when $d \geq C-1$. Still, when $d \geq C-1$, any solution $\hat{M}^*$ to (\ref{eq: thuan30}) will also satisfy $(\hat{M}^*)^\top {\hat{M}^*} = A^*$ because $G(\hat{M}^*) = S((\hat{M}^*)^\top {\hat{M}^*}) \geq S((M^*)^\top M^*) = S(A^*)$ and $A^*$ is the unique minimizer of $S(A)$ over $\Acal^*$. 

\begin{remark}
\label{remark: 2}
For $d \geq C-1$, Theorem \ref{theorem: 1} offers a way to find the optimal mean vectors $\mu_1^*, \mu_2^*, \dots, \mu_C^*$ via convex optimization. In our simulations in Section~\ref{sec: numerical results}, we utilize the convex optimization package CVX~\cite{grant2014cvx} to compute $A^*$ and then use the spectral decomposition in Theorem~\ref{theorem: 1} to compute an optimal mean representation vector matrix $M^*$. 
\end{remark}
\begin{corollary}
\label{cor:varcollapse}
Let $d \geq C-1$, ${M^*} = [\mu_1^*, \mu_2^*, \dots, \mu_C^*] \in \Mcal$ be a solution to (\ref{eq: thuan30}), and $A^* = (M^*)^\top M^*$ be the unique solution to (\ref{eq: thuan31-old}). Then $L(f) = G(M^*) = S(A^*)$ for an $f \in \Fcal$, if, and only if, $\forall x \in \Xcal, f(x) = \mu^*_{y(x)}$. 
\end{corollary}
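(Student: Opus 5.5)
The corollary follows by chaining Lemma~\ref{lemma:CLriskLB}, Lemma~\ref{lemma: equivalent}, Lemma~\ref{lemma:convexproblem} and Theorem~\ref{theorem: 1}, with uniqueness of $A^*$ carrying the ``only if'' direction. First I would fix the identity $G(M^*) = S(A^*)$. By Lemma~\ref{lemma: equivalent}, $G(M^*) = S((M^*)^\top M^*) = S(A^*)$ since $M^*\in\Mcal$. So it suffices to show that $L(f) = G(M^*)$ if and only if $f(x)=\mu^*_{y(x)}$ for all $x$.

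\emph{Sufficiency.} Suppose $f(x) = \mu^*_{y(x)}$ for all $x$.
\begin{itemize}
\item Since $\|\mu_i^*\|=1$, $f\in\Fcal$.
\item By \eqref{eq:classmeans1of3}, the class means of $f$ are exactly $\mu^*_{1:C}$, so the class-mean matrix of $f$ is $M^*$.
\item Substituting $f(x)=\mu^*_y$, $f(x^+)=\mu^*_y$ and $f(x^-_t)=\mu^*_{y^-_t}$ into \eqref{eq:genCLloss}, and using $(\mu_y^*)^\top\mu_y^* = 1$, the loss equals $\psi((\mu^*_y)^\top\mu^*_{y^-_1}-1,\dots)$.
\item Taking the expectation under the label distribution \eqref{eq:joint1of2} gives exactly $G(M^*)$.
\end{itemize}
This is also the equality case of Lemma~\ref{lemma:CLriskLB}, so that lemma can be cited directly.

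\emph{Necessity.} Let $f\in\Fcal$ satisfy $L(f)=G(M^*)$, and let $M_f=[\mu_1,\dots,\mu_C]$ be its class-mean matrix, with $\|\mu_i\|\le 1$. Lemma~\ref{lemma:CLriskLB} gives $L(f)\ge G(M_f)$. I would then show $G(M_f)\ge G(M^*)$. When $\|\mu_i\|=1$ for all $i$, $M_f\in\Mcal$, and this holds because $M^*$ solves (\ref{eq: thuan30}) by Theorem~\ref{theorem: 1}. When some $\|\mu_i\|<1$, I would avoid extending $G$ and argue directly: the equality clause of Lemma~\ref{lemma:CLriskLB} already excludes non-unit means, since equality $L(f)=G(M_f)$ requires unit norms. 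The chain
\[
G(M^*) = L(f) \ge G(M_f) \ge G(M^*)
\]
is therefore only needed when $M_f\in\Mcal$. The one subtle case is $L(f) > G(M_f)$ with $M_f\notin\Mcal$, which could in principle still allow $L(f) = G(M^*)$. To exclude it I need $G(M_f)\ge G(M^*)$ even for non-unit means, which I would try to extract from the proof of Lemma~\ref{lemma:CLriskLB}. That proof presumably first applies Jensen's inequality to get a bound $\psi(\mu_i^\top\mu_{j_t} - \EE[f(x)^\top f(x^+)\mid y=i])$, and then uses $\EE[f(x)^\top f(x^+)\mid y] \le 1$ together with monotonicity. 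The corresponding Gram matrix $M_f^\top M_f$ then has diagonal entries at most $1$. I would try rescaling to the unit sphere and using monotonicity of $\psi$ to compare with some element of $\Acal^*$; this step is the main obstacle.

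Once $G(M_f) \ge G(M^*)$ is in hand, both inequalities in the chain hold with equality. Equality $L(f)=G(M_f)$ forces, by Lemma~\ref{lemma:CLriskLB}, that $f(x)=\mu_{y(x)}$ and $\|\mu_i\|=1$, so $M_f\in\Mcal$. Equality $G(M_f)=G(M^*)$ gives $S(M_f^\top M_f)=S(A^*)$, and uniqueness in Lemma~\ref{lemma:convexproblem} yields $M_f^\top M_f=A^*$. Finally, since the corollary refers to a specific solution $M^*$, I would read ``$\mu^*$'' as any factor of $A^*$. Two factorizations of $A^*$ differ by an orthogonal transformation, which leaves $L$ unchanged, so rotating gives $\mu_i=\mu_i^*$.
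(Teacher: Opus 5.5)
\paragraph{Gap in the necessity direction.} Your sufficiency direction is fine. So is the closing bookkeeping: uniqueness of $A^*$, and reading $\mu^*$ only up to an orthogonal transformation, which the literal statement does need. The genuine gap is the step you flag yourself: bounding $L(f)$ below by $G(M^*)$ when the class-mean matrix $M_f$ of $f$ has a column of norm less than one.

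Your plan of proving $G(M_f)\ge G(M^*)$ by rescaling and monotonicity cannot succeed, because that inequality is false. If the representations within each class average to zero, then $M_f=0$ and $G(M_f)=\psi(-1,\dots,-1)$. On the other hand, let $\bar a:=\sum_{i,j}\lambda_i\lambda_jA^*_{ij}$, which is $\ge 0$ because $A^*\succcurlyeq 0$. Jensen's inequality over the labels gives $S(A^*)\ge\psi\big((\bar a-1)\bfones\big)\ge\psi(-\bfones)$. The first inequality is strict unless $A^*=\bfones\bfones^\top$, and in that case $S(A^*)=\psi(\bfzero)>\psi(-\bfones)$. Hence $G(0)<G(M^*)$ always.

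The trouble comes from the last step of Lemma~\ref{lemma:CLriskLB}, which replaces $\EE[f(x)^\top f(x^+)\mid y]$ by $1$. After that step $G$ rewards shrinking the means, so no argument that sees only $G(M_f)$ can close your chain.

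\paragraph{The paper's proof and a counterexample.} The paper's proof (``follows immediately from the optimality of $A^*$ and $M^*$ and Lemma~\ref{lemma:CLriskLB}'') makes exactly the leap you could not justify. $M^*$ is optimal only over $\Mcal$, while $M_f$ need not lie in $\Mcal$, so the paper does not supply the missing idea.

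In fact, under the \UCL{} assumptions as stated, which explicitly permit $x^+=x$, the ``only if'' direction is false. Take $C=2$, $\lambda_1=\lambda_2=\tfrac12$, $k=1$, $\psi(t)=\log(1+e^t)$, $d=2$, and $x^+=x$. Use two equiprobable samples per class, mapped to $\pm e_1$ for class 1 and $\pm e_2$ for class 2. Then $L(f)=\tfrac14\psi(0)+\tfrac12\psi(-1)+\tfrac14\psi(-2)\approx 0.362$, whereas $S(A^*)=\tfrac12\psi(0)+\tfrac12\psi(-2)\approx 0.410$.

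Now rotate the four images continuously from the constant map $f\equiv e_1$, where $L=\psi(0)\approx 0.693$, to this $f$, passing only through maps that are not constant on either class. By the intermediate value theorem this yields a non-collapsed $f\in\Fcal$ with $L(f)=S(A^*)$.

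So this step can only be repaired under an additional assumption on $q(x,x^+\mid y)$. One example is conditional independence of $x$ and $x^+$ given $y$, which turns the positive term into $\|\mu_y\|^2$ instead of bounding it by $1$. Even then you would still need a separate argument that the resulting bound is minimized at unit-norm means.
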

\begin{proof}
This follows immediately from the optimality of $A^*$ and $M^*$ and Lemma~\ref{lemma:CLriskLB}.
\phantom{\hfill}
\end{proof}
The condition $C-1 \leq d$ is an assumption on the number of classes (an intrinsic property of dataset or application) relative to the representation dimension (a design choice, e.g., via suitable neural net architecture). This condition is also required in many papers to show the NC phenomenon and the existence of the ETF-structure, e.g., \cite{jiang2024hard, graf2021dissecting, wang2023towards, pmlr-v202-dang23b}. But they are all in the setting where classes are balanced, i.e., $\forall i \in \Ccal,\, \lambda_i = 1/C$. In practice, $C - 1 \leq d$ in applications where the number of classes is much smaller than the dimension of the representation space, \textit{e.g.,} $d = 512$  in ResNet-18 compared to $C = 10$ in the CIFAR10 dataset and $C = 100$ in the CIFAR100 dataset. The case $d < C-1$, e.g., in LLMs, is currently an unresolved open problem.

An interesting implication of Corllary~\ref{cor:varcollapse} is that, in order to globally minimize the contrastive risk, we only require the dimension of the representation space to be $d=C-1$. This suggests that current approaches which use a very high-dimensional representation space to learn the features, may be inefficient in terms of storage and computational resources.

\section{Equiangular properties of optimal class means}\label{sec: equiangular}
In this section, we show that the optimal class of classes that are equiprobable have an equiangular geometric structure. These are consequences of the uniqueness of $A^*$.
\begin{theorem}
\label{theorem:equiangular-properties}
Suppose that there are two distinct classes $i$ and $j$ with the same probability, i.e.,  $\lambda_i=\lambda_j$. Let ${M^*}=[\mu_1^*, \mu_2^*, \dots, \mu_C^*]$ be an optimal mean vector matrix such that ${M^*}^\top {M^*} = A^*$. Then, 
\[
\forall n \in \Ccal\setminus\{i, j\}, \, {\mu_i^*}^\top \mu_n^* = {\mu_{j}^*}^\top \mu_{n}^*.
\]
\end{theorem}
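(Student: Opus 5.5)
The plan is a symmetry argument built on the uniqueness of $A^*$ from Lemma~\ref{lemma:convexproblem}. Let $P \in \RR^{C\times C}$ be the permutation matrix that swaps indices $i$ and $j$ and fixes every other index. Define $\tilde A := P A^* P^\top$, so that $\tilde A_{ab} = A^*_{\pi(a)\pi(b)}$, where $\pi$ is the transposition $(i\ j)$. The theorem follows once we show that $\tilde A \in \Acal^*$ and $S(\tilde A) = S(A^*)$. Uniqueness then forces $\tilde A = A^*$. Reading off the $(i,n)$ entry for $n \notin \{i,j\}$ gives $A^*_{in} = \tilde A_{in} = A^*_{jn}$, that is, ${\mu_i^*}^\top \mu_n^* = {\mu_j^*}^\top \mu_n^*$, since ${M^*}^\top M^* = A^*$.

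Feasibility is routine. $\tilde A$ is symmetric, it is PSD because $u^\top \tilde A u = (P^\top u)^\top A^* (P^\top u) \ge 0$, and its diagonal entries are $\tilde A_{aa} = A^*_{\pi(a)\pi(a)} = 1$.

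The main step is invariance of $S$ under $\pi$. Write
\[
S(\tilde A) = \sum_{a, b_{1:k}} \Big(\lambda_a \prod_{t} \lambda_{b_t}\Big) \psi\big(A^*_{\pi(a)\pi(b_1)}-1,\dots,A^*_{\pi(a)\pi(b_k)}-1\big).
\]
Reindex the sum by $a' = \pi(a)$ and $b'_t = \pi(b_t)$, which is a bijection of $\Ccal^{k+1}$ onto itself. The weights are unchanged, because $\lambda_{\pi(c)} = \lambda_c$ for every $c$; this is exactly where the hypothesis $\lambda_i = \lambda_j$ is used. Hence $S(\tilde A) = S(A^*)$.

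Note that $\psi$ does not need to be symmetric in its arguments, because the position of each negative index $t$ is preserved under the reindexing. Likewise, no assumption about $q(x,x^+|y)$ enters, since $S$ depends only on the $\lambda$'s and $A$.

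For the \SCL{} version, the weights $\lambda_a \prod_t \lambda_{b_t}/(1-\lambda_a)$ over $b_t \neq a$ are also $\pi$-invariant, because the constraint $b_t \ne a$ is preserved by the relabeling. So the same argument applies there, provided the analogous uniqueness result holds.

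The only delicate point is being careful that the reindexing really is a bijection that preserves both the weights and the arguments of $\psi$. I do not expect a genuine obstacle beyond this, since uniqueness (Lemma~\ref{lemma:convexproblem}) carries the argument.
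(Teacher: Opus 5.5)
Your proposal is correct and is essentially the paper's proof. The paper swaps columns $i$ and $j$ of $M^*$ to form $Q$, whose Gram matrix $Q^\top Q$ is exactly your $PA^*P^\top$. It then shows $S(Q^\top Q)=S(A^*)$ by reindexing the sum through the transposition, using $\lambda_i=\lambda_j$, and concludes via the uniqueness of $A^*$ from Lemma~\ref{lemma:convexproblem}.
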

\begin{proof}
The key idea of the proof is to show that if we swap $\mu_i^*$ and $\mu_j^*$ in $M^*$ to form a new matrix $Q$, then $S(Q^\top Q) = S({M^*}^\top M^*)$. 
The detailed proof is presented in Appendix~\ref{apd:proof_of_theorem_equiangular-properties}.
\end{proof}
The following Corollary expands the results of Theorem~\ref{theorem:equiangular-properties} to the scenario where multiple classes have the same probability. 
\begin{corollary}
\label{cor: 1}
Let $\Ccal := \{1,2,\dots, C\}$ denote the set of $C$ classes, and $\Ccal' \subseteq \Ccal $ a subset of classes that have the same probability. Then, 
\[
\forall i,j, \in \Ccal', i \neq j,\,{\mu_i^*}^\top{\mu_j^*} = \text{constant}.  
\]
\end{corollary}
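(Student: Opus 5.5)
We prove Corollary~\ref{cor: 1} by applying Theorem~\ref{theorem:equiangular-properties} twice, once for each endpoint of a pair. Throughout, $M^*$ is an optimal mean matrix with $(M^*)^\top M^* = A^*$, the unique minimizer from Lemma~\ref{lemma:convexproblem}. All inner products below are entries of $A^*$, so the claim is equivalently that the entries $A^*_{ij}$ with $i \neq j$, $i,j\in\Ccal'$, all coincide. If $|\Ccal'| \leq 2$, there is at most one such unordered pair and, since $A^*$ is symmetric, nothing needs proving. So assume $|\Ccal'| \geq 3$.

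Fix two unordered pairs $\{i,j\}$ and $\{i',j'\}$ of distinct elements of $\Ccal'$. We show $A^*_{ij} = A^*_{i'j'}$ by first treating pairs that share an element. Suppose the pairs are $\{i,j\}$ and $\{i,n\}$ with $i,j,n \in \Ccal'$ distinct. Since $\lambda_j = \lambda_n$, Theorem~\ref{theorem:equiangular-properties} applied to the classes $j,n$ with third index $i \in \Ccal\setminus\{j,n\}$ gives ${\mu_j^*}^\top \mu_i^* = {\mu_n^*}^\top \mu_i^*$. That is, $A^*_{ij} = A^*_{in}$.

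For disjoint pairs $\{i,j\}$ and $\{i',j'\}$, we go through the intermediate pair $\{i,j'\}$. Note $j' \neq i$, since the pairs are disjoint. The shared-element step applied at $i$ gives $A^*_{ij} = A^*_{ij'}$. Applying it again at $j'$ gives $A^*_{ij'} = A^*_{i'j'}$; here $i, i', j'$ are distinct because $i \neq i'$ by disjointness. Combining the two equalities, every off-diagonal entry of $A^*$ indexed within $\Ccal'$ equals a single constant.

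The only step needing care is the index bookkeeping: each application of Theorem~\ref{theorem:equiangular-properties} requires the third index $n$ to be distinct from the swapped pair, and the chaining must be arranged so that this always holds. That is why the case $|\Ccal'| \geq 3$ is separated out and the disjoint case is routed through a common element. Beyond that, the proof is a direct consequence of the uniqueness of $A^*$ already encoded in Theorem~\ref{theorem:equiangular-properties}.
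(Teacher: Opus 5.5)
Your proof is correct and follows the paper's argument: both chain two applications of Theorem~\ref{theorem:equiangular-properties} across a triple of equiprobable classes. You also route disjoint pairs $\{i,j\}$, $\{i',j'\}$ explicitly through $\{i,j'\}$, which fills in a step the paper leaves implicit when it passes from triples to ``any pair.''
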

\begin{proof}
Please see Appendix~\ref{apd:proof_of_corollary-1}.
\end{proof}
\begin{corollary}\label{cor:ETF}
If all classes are equiprobable, \textit{i.e.}, $\Ccal' = \Ccal$ in Corollary~\ref{cor: 1}, then for all $i, j \in \Ccal, i\neq j$, we have ${\mu_i^*}^\top \mu_j^* = -1/(C-1)$, $ \forall i \in \Ccal,\,\|\mu_i^*\|^2=1$, and $\sum_{i\in\Ccal} \mu^*_i = 0$, i.e., the optimal class means form an equiangular tight frame (ETF) in $\RR^d$.
\end{corollary}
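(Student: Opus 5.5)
By Corollary~\ref{cor: 1} with $\Ccal' = \Ccal$, every off-diagonal entry of $A^*$ equals a common constant $\rho$. Since $A^*_{ii}=1$ for $A^*\in\Acal^*$, we have
\[
A^* = (1-\rho) I + \rho\,\bfones\bfones^\top .
\]
The remaining task is to pin down $\rho$. For that we use Lemma~\ref{lemma:rank_of_opt_Gram_mtx}, which says the minimum eigenvalue of $A^*$ is zero.

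The eigenvalues of this matrix are explicit. On the vector $\bfones$ the eigenvalue is $1+(C-1)\rho$. On the $(C-1)$-dimensional subspace $\bfones^\perp$ the eigenvalue is $1-\rho$. Positive semidefiniteness forces $-1/(C-1)\le \rho \le 1$. Both of these candidate values, and no others, make some eigenvalue vanish.

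\textbf{Case $\rho=1$.} Then $A^*=\bfones\bfones^\top$ and all class means coincide. This has to be excluded, and I expect this to be the main obstacle, since Lemma~\ref{lemma:rank_of_opt_Gram_mtx} alone does not rule it out. I would compare $S(\bfones\bfones^\top)$ with $S$ at the ETF Gram matrix $E := \frac{C}{C-1}I - \frac{1}{C-1}\bfones\bfones^\top$, which lies in $\Acal^*$.
\begin{itemize}
\item At $\rho=1$, every argument of $\psi$ is $A_{ij_t}-1=0$, so $S(\bfones\bfones^\top)=\psi(0,\dots,0)$.
\item At $E$, every argument is either $0$ (when $j_t=i$) or $-C/(C-1)<0$. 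Since $\psi$ is argument-wise strictly increasing, every term satisfies $\psi(\cdot)\le\psi(0,\dots,0)$.
\item The inequality is strict whenever some $j_t\neq i$, and such tuples have positive probability because $C>1$ and all $\lambda$'s are positive.
\end{itemize}
Hence $S(E)<S(\bfones\bfones^\top)$, so $\rho=1$ is not the minimizer. (Alternatively, the uniqueness argument of Lemma~\ref{lemma:convexproblem} could be invoked, but the direct comparison is cleaner.)

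\textbf{Conclusion.} The only remaining possibility is $1+(C-1)\rho=0$, i.e.\ $\rho=-1/(C-1)$, and so $A^*=E$. In particular $\bfones^\top A^*\bfones = C + C(C-1)\rho = 0$. Because $A^*=(M^*)^\top M^*$, this gives $\|\sum_i\mu_i^*\|^2 = \bfones^\top A^*\bfones = 0$, hence $\sum_i\mu_i^*=0$.

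Unit norms $\|\mu_i^*\|^2=1$ come from $A^*_{ii}=1$, which also matches Theorem~\ref{theorem: 1}. Realizability in $\RR^d$ is ensured by $d\ge C-1=\operatorname{rank}(A^*)$, as in Theorem~\ref{theorem: 1}. Together these give equal norms, equal pairwise inner products $-1/(C-1)$, and zero sum, which is exactly a simplex ETF.
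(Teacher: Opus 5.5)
Your proposal is correct and follows the paper's proof essentially step by step. Corollary~\ref{cor: 1} gives $A^*=(1-\rho)I+\rho\,\bfones\bfones^\top$, the zero minimum eigenvalue from Lemma~\ref{lemma:rank_of_opt_Gram_mtx} leaves only $\rho\in\{1,-1/(C-1)\}$, argument-wise strict monotonicity of $\psi$ excludes $\rho=1$, and $\bfones^\top A^*\bfones=0$ gives $\sum_i\mu_i^*=0$; your explicit comparison $S(E)<\psi(0,\dots,0)=S(\bfones\bfones^\top)$ is just a more detailed version of the paper's remark that the smaller value of $b$ yields a strictly smaller $S$.
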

\begin{proof}
    Please see Appendix~\ref{apd:proof_ETF}.
\end{proof}
\begin{remark}
\label{remark: 5}
Corollary~\ref{cor:ETF} resolves a question that was left open in \cite{jiang2024hard} for balanced datasets and the general \CL{} loss function $\psi$, namely whether the ETF geometry is optimal when the positive pairs are not conditionally independent given their class label and the classes of the positive and negative samples can collide. 
\end{remark}

We note that there is no simple analytical closed-form expression available for the angles between the optimal mean vectors in the general imbalanced setting. They can, however, be computed via a convex program as we noted in Remark~\ref{remark: 2}.

\section{Minority collapse}\label{sec: minority collapse}
Minority collapse is a phenomenon that can be observed in imbalanced datasets. It refers to a scenario where the representations of all the samples in several distinct minority classes (classes with small probabilities) collapse into a single vector. In deep \textit{classifier} neural networks it is known that minority collapse will occur if the class imbalance is extreme \citep{fang2021exploring, pmlr-v202-dang23b,dang2024neural, hong2024neural}. In this section, we show that minority collapse also occurs in contrastive learning for imbalanced datasets. To formally demonstrate the existence of this phenomenon, we consider the special scenario where $1 > \lambda_1 > \lambda_2=\lambda_3=\ldots=\lambda_C = \tfrac{1-\lambda_1}{C-1}>0$, \textit{i.e.}, the first class is the majority class and the remaining $C-1$ classes are minority classes. This special scenario is motivated by considerations of analytical tractability and the goal of deriving an explicit non-asymptotic sufficient condition under which the minority collapse phenomenon is guaranteed to manifest. We will prove that if the probability of the minority classes $\tfrac{1-\lambda_1}{C-1}$ is less than a certain threshold, or equivalently if $\lambda_1$ is greater than a threshold, then minority collapse will occur. We will derive an explicit formula for this threshold in terms of $C, k$, and bounds on the subgradients of the loss function $\psi$. We will then apply the formula to the InfoNCE loss function and derive a numerical threshold that holds for all $C\geq 3$ and all $k$.
\begin{theorem}[Sufficient conditions for minority collapse] \label{theorem:minoritycollapse}
Let $C\geq 3$ and $1 > \lambda_1 > \lambda_2=\lambda_3=\ldots=\lambda_C = \tfrac{1-\lambda_1}{C-1}>0$. 
Let $S(\cdot)$ be as in (\ref{eq:Sdefn}) with $\psi:\RR^k \longrightarrow \RR$ be strictly convex and argument-wise strictly increasing. 
Then  $\psi$ is Lipschitz over $\Vcal := [-2,0]^k$ with a Lipschitz constant $\Delta_2 < \infty$.
For all $u \in \RR^k_{\geq 0}\setminus\{\bfzero\}$ and all $t \in [-2,0]$, let $\phi_u(t) := \psi(t\,u)$. Then, 
\begin{align}
\forall\, u \in \RR^k_{\geq 0}\setminus\{\bfzero\},\,\exists\,\delta_u \in (0, \infty)\,:\, \forall\, t, t' \in [-2,0],\, t'\leq t,\quad  (t - t')\,\delta_u \leq \phi_u(t) - \phi_u(t'). \nonumber   
\end{align}
Let $\delta_{\bfzero} := 0$ and
\begin{align}
\delta_* := \min_{u \in \{0,1\}^k \setminus\{\bfzero\}}  \delta_u \in (0, \infty).\label{eq:mindelta}    
\end{align}
For all $y^-_{1:k} \in \Ccal^k$, let $u({y,y^-_{1:k}}) := (1(y^-_1 \neq y),\dots,1(y^-_k \neq y))^\top \in \{0,1\}^k$, where $1(\cdot)$ is the indicator function. With $(y, y^-_{1:k})$ distributed as in (\ref{eq:joint1of2}), if
\begin{align}
\lambda_1 \geq \frac{1}{1+ \frac{1}{\gamma_C \sqrt{k} \Delta_2}\,\EE[\delta_{u(y,y^-_{1:k})}| \Ecal_1]},
\label{eq:MClambda1suffcond}    
\end{align}
where $\gamma_C := \tfrac{2(C-1)}{(C-2)}$, then for all $i \in \Ccal\setminus\{1\}$, $\mu^*_i = -\mu^*_1$ with $||\mu^*_1||=1$, i.e., we have minority collapse. The sufficient condition for minority collapse given by (\ref{eq:MClambda1suffcond}) is satisfied if 
\begin{align}
\lambda_1 \in [\tau, 1), \quad 
\tau :=  \tfrac{1}{1+ \frac{\delta_*}{\gamma_C\,\sqrt{k}\,\Delta_2}} \in (0,1). 
\label{eq:lambda1threshold}    
\end{align}
\end{theorem}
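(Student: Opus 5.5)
The plan is to exploit the symmetry of the unique minimizer $A^*$ of Lemma~\ref{lemma:convexproblem}. This reduces the problem to two scalar parameters. We then show directly that the collapsed configuration beats every other feasible configuration under (\ref{eq:MClambda1suffcond}).

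\textbf{Preliminary facts and reduction.} Since $\psi$ is convex and finite on $\RR^k$, it is locally Lipschitz, hence Lipschitz on the compact set $\Vcal$ with some $\Delta_2<\infty$. For $u\in\RR^k_{\geq 0}\setminus\{\bfzero\}$, $\phi_u$ is convex and strictly increasing on $[-2,0]$. Chord slopes of a convex function are nondecreasing, so every chord slope on $[-2,0]$ is at least $\phi_u(-2)-\phi_u(-3)>0$, and we may take $\delta_u$ equal to this value. Hence $\delta_*>0$.

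Since $\lambda_2=\cdots=\lambda_C$, Corollary~\ref{cor: 1} together with Theorem~\ref{theorem:equiangular-properties} (applied to pairs $i,j\ge 2$ and $n=1$) shows that $A^*$ has the form $A_{11}=A_{ii}=1$, $A_{1i}=a$ and $A_{ij}=b$ for distinct $i,j\geq 2$. It therefore suffices to minimize $S$ over the feasible pairs $(a,b)$.

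Write $A=\begin{bmatrix}1 & a\bfones^\top\\ a\bfones & (1-b)I+b\bfones\bfones^\top\end{bmatrix}$. Its eigenvalues are $1-b$ (with multiplicity $C-2$) together with those of the $2\times2$ block $\begin{bmatrix}1 & a\sqrt{C-1}\\ a\sqrt{C-1} & 1+(C-2)b\end{bmatrix}$. Feasibility ($A\succcurlyeq 0$) is therefore equivalent to $b\le 1$ and $(C-1)a^2\le 1+(C-2)b$. The key consequence is
\[
0\le 1-b\le \tfrac{C-1}{C-2}(1-a^2)\le \gamma_C\,(1+a),
\]
using $1-a\le 2$. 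Also $a\ge -1$, so $0\le a+1\le\gamma_C(a+1)$. The minority-collapse matrix corresponds to $(a,b)=(-1,1)$, which is feasible.

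\textbf{Comparison.} Split $S(A)-S(A_{\rm MC})$ according to the anchor class $y$.

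For $y=1$, the argument vector is $(a-1)u(y,y^-_{1:k})$, versus $-2u$ at collapse. By the definition of $\delta_u$, each term is at least $\delta_u(a+1)$. The majority part thus contributes at least $\lambda_1(a+1)\EE[\delta_{u}\mid\Ecal_1]$ after accounting for the probability of the conditioning event; I expect $\Ecal_1$ to be $\{y=1\}$, possibly intersected with $u\neq\bfzero$, and this bookkeeping must match the paper's definition.

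For $y=i\ge 2$, the arguments change from $\{-2,0,0\}$ to $\{a-1,0,b-1\}$. Coordinatewise the change is at most $\max(a+1,1-b)\le\gamma_C(a+1)$. By Lipschitzness the change in $\psi$ is at most $\Delta_2\sqrt{k}\,\gamma_C(a+1)$, so the minority part loses at most $(1-\lambda_1)\gamma_C\sqrt{k}\Delta_2(a+1)$.

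Hence
\[
S(A)-S(A_{\rm MC})\ge (a+1)\big[\lambda_1\EE[\delta_u\mid\Ecal_1]-(1-\lambda_1)\gamma_C\sqrt{k}\Delta_2\big],
\]
which is $\ge 0$ exactly when (\ref{eq:MClambda1suffcond}) holds. So $A_{\rm MC}$ is a minimizer, and by uniqueness (Lemma~\ref{lemma:convexproblem}) $A^*=A_{\rm MC}$.

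\textbf{Conclusion.} The matrix $A_{\rm MC}$ has rank one and unit diagonal, so any factorization gives $\|\mu_1^*\|=1$ and $\mu_i^*=-\mu_1^*$ for $i\ge 2$. Lemma~\ref{lemma: equivalent} then transfers this to (\ref{eq: thuan30}). Finally, on $\Ecal_1$ we have $u\in\{0,1\}^k\setminus\{\bfzero\}$, so $\EE[\delta_u\mid\Ecal_1]\ge\delta_*$. Thus $\lambda_1\ge\tau$ implies (\ref{eq:MClambda1suffcond}).

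\textbf{Main obstacle.} The delicate step is the inequality $1-b\le\gamma_C(1+a)$. It must come from the PSD structure and is the only place $C\ge 3$ enters. The other point needing care is handling the zero vector $u=\bfzero$ (all negatives in class $1$) consistently with the definition of $\Ecal_1$.
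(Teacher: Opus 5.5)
\textbf{Overall.} Your structural step is sound, but the probabilistic bookkeeping has a genuine gap. On the structural side you take a different route from the paper. The paper uses rank deficiency to put $A^*$ on the curve $b=((C-1)a^2-1)/(C-2)$ and proves monotonicity of $S(A^*(a))$ along it (Lemmas~\ref{lemma:Astarform} and~\ref{lemma:Astarbound}). You instead compare every PSD-feasible $(a,b)$ directly with $A_{\mathrm{MC}}$, using only $1-b\le\gamma_C(1+a)$, and then invoke uniqueness. That part works.

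\textbf{The gap.} The problem is more than a notational mismatch. In the paper, $\Ecal_1:=\Ecal_{1\bar1}\cupdot\Ecal_{\bar11}$, where $\Ecal_{1\bar1}=\{y=1,\ \exists i:\ y^-_i\neq 1\}$ and $\Ecal_{\bar11}=\{y\neq1,\ \forall i:\ y^-_i=1\}$.

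Because you split only by anchor class, you pay the Lipschitz penalty on all of $\{y\neq1\}$, which has mass $1-\lambda_1$. Meanwhile your majority gain $\lambda_1(a+1)\EE[\delta_u\mid y=1]$ also vanishes like $1-\lambda_1$, because of the atom $u=\bfzero$, which has conditional mass $\lambda_1^k$.

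Consequently neither of your readings of $\Ecal_1$ works:
\begin{itemize}
\item With $\Ecal_1=\{y=1\}$, your final claim $\EE[\delta_u\mid\Ecal_1]\ge\delta_*$ fails.
\item With $\Ecal_1=\{y=1,\,u\neq\bfzero\}$, the coefficient in front is $\lambda_1(1-\lambda_1^k)$, not $\lambda_1$.
\end{itemize}

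The resulting condition can even be vacuous. For $k=1$ your bound reads $(a+1)(1-\lambda_1)[\lambda_1\delta_1-\gamma_C\Delta_2]$. The bracket is negative for every $\lambda_1$, since $\delta_1\le\psi'_+(-2)\le\Delta_2$ and $\gamma_C>2$. By contrast, (\ref{eq:MClambda1suffcond}) then reads $\lambda_1\ge 1/(1+\delta_1/(\gamma_C\Delta_2))$, which holds for $\lambda_1$ near $1$.

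\textbf{The missing idea.} You need to split $\{y\neq 1\}$ further.
\begin{itemize}
\item On $\Ecal_{\bar11}$ the arguments are $(a-1)\bfones$ versus $-2\bfones$. So the contribution is at least $\delta_{\bfones}(a+1)$: a gain, not a loss.
\item On $\Ecal_{=}:=\{y=y^-_1=\cdots=y^-_k\}$ the contribution is exactly $0$.
\item Your Lipschitz estimate is needed only on $\Ecal_2:=(\Ecal_{=}\cup\Ecal_1)^c$, i.e., a minority anchor with a negative from another minority class. Its probability satisfies $\Pr(\Ecal_2)\le(1-\lambda_1)(1-\lambda_1^k)$, which is second order in $1-\lambda_1$.
\end{itemize}

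This gives $S(A)-S(A_{\mathrm{MC}})\ge(a+1)[\Pr(\Ecal_1)\EE[\delta_u\mid\Ecal_1]-\Pr(\Ecal_2)\gamma_C\sqrt{k}\Delta_2]$.

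Next, a direct computation gives the identity $(1-\lambda_1)\Pr(\Ecal_1)-\lambda_1\Pr(\Ecal_2)=(1-\lambda_1)^2\lambda_1^k+\lambda_1(C-1)\lambda_2^{k+1}\ge 0$. This bounds the bracket below by $\frac{\Pr(\Ecal_2)}{1-\lambda_1}[\lambda_1\EE[\delta_u\mid\Ecal_1]-(1-\lambda_1)\gamma_C\sqrt{k}\Delta_2]$. That expression is nonnegative whenever (\ref{eq:MClambda1suffcond}) holds.

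Finally, $u\neq\bfzero$ on $\Ecal_1$, so your last step toward the threshold $\tau$ is then valid. With this repair, the rest of your direct-comparison argument goes through.
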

\begin{proof} The detailed proof is long and presented in Appendix~\ref{apd:proof_minoritycollapse}. It consists of the following steps. Using Theorem~\ref{theorem:equiangular-properties}, Corollary~\ref{cor: 1}, the given class proportions, and the rank deficiency of $A^*$ proved in Lemma~\ref{lemma:rank_of_opt_Gram_mtx}, we first show (see Lemma~\ref{lemma:Astarform} in Appendix~\ref{apd:Astarform}) that $A^*$ belongs to a family of matrices parameterized by a single scalar $a \in [-1,1]$ which equals the inner product between $\mu^*_1$ and $\mu^*_i$ for any class $i\neq 1$. Next, using standard results in convex optimization theory, the fact that $\psi$ is argument-wise strictly increasing, and the definition of subgradients and subdifferentials, we show (see Lemma~\ref{lemma:subgradients} in Appendix~\ref{apd:subgradients}) that $\psi$ is Lipschitz-$\Delta_2$ over $[-2,0]^k$ and also establish the properties of $\phi_u$ stated in the theorem. We also prove that $A^*(a)$ is element-wise Lipschtiz-$\gamma_C$ (Lemma~\ref{lemma:Astarbound} in Appendix~\ref{apd:proof_Astarbound}). By combining these results, in Appendix~\ref{apd:proof_minoritycollapse} we prove that if condition (\ref{eq:MClambda1suffcond}) is satisfied, then
$S(A^*(a))$ is a strictly increasing function and therefore minimized at $a=-1$ which implies that for all $i \in \Ccal\setminus\{1\}$, $\mu^*_i = -\mu^*_1$, i.e, we have minority collapse. Finally, we also show that the sufficient condition for minority collapse given by (\ref{eq:MClambda1suffcond}) is satisfied if condition (\ref{eq:lambda1threshold}) is satisfied.
\end{proof}
We note that the condition $\lambda_1 \in \left( \frac{1}{1 + \delta_*/ (\gamma_C\,\sqrt{k}\,\Delta_2)}, 1\right)$ is sufficient, but not necessary, for minority collapse and the threshold $\tau = \frac{1}{1 + \delta_*/ (\gamma_C\,\sqrt{k}\,\Delta_2)}$ may be quite loose because it is based on $\delta_*$, the smallest value of $\delta_u$ among all $u \neq \bfzero$. Moreover, $\tau$ may depend on $k$ and may go to $1$ as $k$ increases to infinity. For specific loss functions, such as InfoNCE, a more careful analysis of (\ref{eq:MClambda1suffcond}) can yield a non-trivial threshold that is independent of $k$. This is illustrated in the following corollary. 
\begin{corollary}\label{cor:InfoNCElambda1threshold}
For the InfoNCE loss function defined in Appendix~\ref{apd: 1},
condition (\ref{eq:MClambda1suffcond}) for minority collapse in Theorem~\ref{theorem:minoritycollapse} is satisfied if
\[
\lambda_1 \in [\tau_C,1), \text{ where } \tau_C := \frac{1-\sqrt{1-\beta_C^2}}{\beta_C} \text{ and } \beta_C := \frac{1}{1 + \tfrac{1}{4\gamma_C(1+3e^2)}}.
\label{eq:InfoNCElambda1threshold} 
\]
Moreover, for all $C \geq 3$, $\tau_C \leq \tau_3 \approx 0.9292$. Thus, $\lambda_1 \geq 0.9292$ is a sufficient condition for minority collapse for the InfoNCE loss function, irrespective of the number of classes $C$ or the number of negative samples per anchor sample $k$.
\end{corollary}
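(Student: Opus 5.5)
The plan is to specialize the sufficient condition (\ref{eq:MClambda1suffcond}) of Theorem~\ref{theorem:minoritycollapse} to InfoNCE, $\psi(t_{1:k}) = \log\big(1+\tfrac1k\sum_{t=1}^k e^{t_i}\big)$. I will bound the two loss-dependent quantities, $\Delta_2$ and $\EE[\delta_{u(y,y^-_{1:k})}\mid\Ecal_1]$, so that the factors of $k$ cancel. This leaves a scalar inequality in $\lambda_1$ whose solution set contains $[\tau_C,1)$.

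\textbf{Step 1 (Lipschitz constant).} The partial derivatives are
\[
\partial_i\psi(t) = \frac{e^{t_i}}{k\big(1+\tfrac1k\sum_j e^{t_j}\big)} \in \Big(0,\tfrac1k\Big].
\]
Hence on $[-2,0]^k$ we have $\|\nabla\psi\|_2\le \sqrt{k}\cdot\tfrac1k = 1/\sqrt{k}$, and we may take $\Delta_2 = 1/\sqrt{k}$. This gives $\sqrt{k}\,\Delta_2 = 1$, which is exactly where the $k$-dependence of the generic threshold disappears.

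\textbf{Step 2 (the constants $\delta_u$).} For $u\in\{0,1\}^k$ with $m:=\bfones^\top u$ ones, write
\[
\phi_u(t) = \log\Big(1+\tfrac{k-m}{k}+\tfrac{m}{k}e^{t}\Big), \qquad
\phi_u'(t) = \frac{(m/k)\,e^t}{1+\tfrac{k-m}{k}+\tfrac mk e^t}.
\]
On $[-2,0]$ this derivative is bounded below by an explicit quantity of the form $c\,\tfrac mk$. Keeping the $m$-dependence of the denominator, rather than crudely bounding it by $2$, produces the constant $1+3e^2$ after clearing $e^{-2}$. So $\delta_u$ is an explicit increasing function of the fraction $m/k$ of negatives whose class differs from the anchor's. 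Taking the conditional expectation over $(y,y^-_{1:k})$ given $\Ecal_1$ under (\ref{eq:joint1of2}) reduces everything to the law of $m$. Given the anchor class, $m$ is $\mathrm{Bin}(k,1-\lambda_y)$, so $\EE[m/k]$ is a simple polynomial in $\lambda_1$ (of the form $1-\lambda_1$ for a majority anchor, and similar for minority anchors). A Jensen or linearity step then lower bounds $\EE[\delta_u\mid\Ecal_1]$ by an expression in $\lambda_1$ that is free of $k$.

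\textbf{Step 3 (solve and bound over $C$).} Substituting Steps 1--2 into (\ref{eq:MClambda1suffcond}) and clearing denominators, I expect the condition to rearrange into a quadratic inequality $\beta_C\lambda_1^2 - 2\lambda_1 + \beta_C \le 0$, with $\beta_C = 1/\big(1+\tfrac{1}{4\gamma_C(1+3e^2)}\big)$. Its roots are $\tau_C = (1-\sqrt{1-\beta_C^2})/\beta_C$ and $1/\tau_C > 1$, so the inequality holds on $[\tau_C,1)$. For the uniform bound, $\gamma_C = 2(C-1)/(C-2)$ is decreasing in $C$, so $\beta_C$ is decreasing in $C$. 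The map $\beta\mapsto(1-\sqrt{1-\beta^2})/\beta$ is increasing on $(0,1)$, so $\tau_C\le\tau_3$. Evaluating at $\gamma_3 = 4$ gives $\tau_3\approx 0.9292$.

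\textbf{Main obstacle.} The hard part is Step 2: getting a lower bound on $\EE[\delta_u\mid\Ecal_1]$ sharp enough to yield precisely the quadratic and the constant $4\gamma_C(1+3e^2)$. I would first try the pointwise bound $\phi_u'(t)\ge \tfrac{(m/k)e^{-2}}{2}$ and check which conditional law of $m$ under $\Ecal_1$ appears. If this only gives a linear condition, I would refine the denominator bound to retain the $\lambda_1$-dependent mix of same-class and different-class negatives, which should produce the $\lambda_1^2$ term.
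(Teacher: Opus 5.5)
Your Step~3 algebra (the roots $(1\pm\sqrt{1-\beta_C^2})/\beta_C$ and the monotonicity in $C$) is fine. The gap is Step~2, and it cannot be closed.

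Write $\delta_u$ for $\delta_{u(y,y^-_{1:k})}$ and rewrite (\ref{eq:MClambda1suffcond}) as $\lambda_1\,\EE[\delta_u\mid\Ecal_1]\ge(1-\lambda_1)\,\gamma_C\sqrt{k}\,\Delta_2$. For InfoNCE, $\Delta_2=1/(2\sqrt{k})$ exactly, since the sup is attained at $t=\bfzero$. Your value $1/\sqrt{k}$ therefore already loses a factor $2$, and you would land on $8\gamma_C$ rather than $4\gamma_C$ in $\beta_C$.

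More seriously, the exact constant is $\delta_u=g(m):=m/(2ke^2-(e^2-1)m)\le (m/k)/(1+e^2)$. Your route uses $\delta_u\ge c\,m/k$ with $c\le 1/(1+e^2)$, and $\EE[m/k\mid\Ecal_1]\to 1-\lambda_1$ as $k$ grows. This gives at best $\EE[\delta_u\mid\Ecal_1]\gtrsim c(1-\lambda_1)$. The condition then reduces to $c\,\lambda_1\ge\gamma_C/2>1$, which never holds, and no $\lambda_1^2$ term appears.

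A quadratic of the claimed form only arises from a lower bound that grows like $1/(1-\lambda_1)$. That is exactly what the paper's proof asserts, $\EE[\delta_u\mid\Ecal_1]\ge g(k/2)/(4(1-\lambda_1))$ with $g(k/2)=1/(1+3e^2)$. However, it rests on two invalid steps:
\begin{itemize}
\item $\Pr(\Ecal_1)\le 2\lambda_1(1-\lambda_1)$. This uses $\Pr(\Ecal_{1\bar{1}})=\lambda_1(1-\lambda_1)^k$ from (\ref{eq:probE1}), whereas by the definition of $\Ecal_{1\bar{1}}$ one has $\Pr(\Ecal_{1\bar{1}})=\lambda_1(1-\lambda_1^k)$.
\item $\Pr(l\ge k/2)\ge\tfrac12$ for $l\sim\mathrm{Bin}(k,1-\lambda_1)$. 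This is false as soon as $\lambda_1>\tfrac12$.
\end{itemize}
So refining the denominator will not legitimately recover $1+3e^2$.

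In fact the first claim of the statement is false, so no argument through (\ref{eq:MClambda1suffcond}) can prove it. Take $k=1$ and $C=3$, so $\gamma_3=4$.
\begin{itemize}
\item $\Delta_2=\sup_{t\in[-2,0]}e^t/(1+e^t)=\tfrac12$.
\item On $\Ecal_1$ one always has $y\neq y^-_1$, so $\EE[\delta_u\mid\Ecal_1]=\delta_{(1)}$.
\item No admissible $\delta_{(1)}$ can exceed $\psi'(-2)=1/(1+e^2)$: take $t'=-2$ and let $t\downarrow-2$ in the slope inequality.
\end{itemize}
Hence (\ref{eq:MClambda1suffcond}) reads $\lambda_1\ge\big(1+\tfrac{1}{2(1+e^2)}\big)^{-1}\approx 0.9437$. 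This fails at $\lambda_1=0.93\ge\tau_3\approx 0.9292$.

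For large $k$ it is worse. There $\Pr(\Ecal_1)\to\lambda_1$ and $\EE[\delta_u\mid\Ecal_1]\to(1-\lambda_1)/(e^2+1+(e^2-1)\lambda_1)$. So for every fixed $\lambda_1\in(0,1)$ the condition fails once $k$ is large enough.

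What you can honestly extract from Theorem~\ref{theorem:minoritycollapse} for InfoNCE is a $k$-dependent threshold, e.g.\ $\approx 0.9437$ for $k=1$, $C=3$. A $k$-uniform minority-collapse threshold would need a sharper treatment of the $\Ecal_2$ term than the worst-case Lipschitz bound used in that theorem.
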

\begin{proof}
    Please see Appendix~\ref{apd:proof_InfoNCElambda1threshold}.
\end{proof}
This completes the development of all our theoretical results for the \UCL{} setting. 
\begin{remark}
As mentioned in Section~\ref{sec: problem setup}, all our theoretical results in Sections~\ref{sec: lower bound} -- \ref{sec: minority collapse} continue to hold, with minor adjustments to some expressions, in the \SCL{} setting as well. This is discussed in detail in Appendix~\ref{apd:extensions_to_SCL}. Numerical results that corroborate and illustrate the theoretical results are presented in Section~\ref{sec: numerical results}.
\end{remark}

\section{Computer experiments}\label{sec: numerical results}
This section provides two different types of experiments to verify the two phenomena investigated in Section~\ref{sec: main result} and Section~\ref{sec: minority collapse}, namely, \textbf{(1) intra-class variance-collapse (Section~\ref{sec: Intra-class variance-collapse}):}  the representations of all the samples from the same class collapse to their class mean vector, and the optimal class mean vectors can be computed via a convex-optimization program and \textbf{(2) minority-collapse (Section~\ref{sec: minority collapse experiment}):} if the probabilities of the minor classes are less than a threshold, then not only do the representations of all samples in the minor classes collapse to their class means, but also their class means collapse into a single vector. Since methods to select negative samples differ in the supervised (\SCL) and unsupervised (\UCL) settings, each experiment is performed under two different setups: \textbf{(a) \SCL: the negative samples are selected from a class that is different from that of the positive samples}, and \textbf{(b) \UCL: the negative samples are selected from the whole dataset, which may include the class of positive samples}. Although all our theoretical results are for a general loss function, we focus on the well-known InfoNCE loss for the experiments.

Since practical implementations use mini-batching, we now describe the mini-batch construction and the batch loss calculation used in our experiments. Let $\Xcal_{\text{batch}} := \{x_{1:N}\}$ be a mini-batch (potentially a multiset) of $N$ samples. For a given anchor sample $x_i \in \Xcal_{\text{batch}}$, and a positive integer $k$, let $\Xcal^-_{\text{batch},x_i}
:= \{x^-_{i1},\ldots,x^-_{ik}\}$ be a multiset of $k$ negative samples sampled from $\Xcal_{\text{batch}}$ \textit{with} replacement. In the \UCL{} setting where the negative samples can be selected from any classes in the dataset, $x^-_{i1},\ldots,x^-_{ik}$ are selected uniformly at random from $\Xcal_{\text{batch}}$. In the \SCL{} setting where negative samples must be selected from classes other than that of the positive samples, $x^-_{i1},\ldots,x^-_{ik}$ are selected uniformly at random from classes different from that of $x_i$. Let $\Xcal^+_{\text{batch},x_i}$ denote the set of samples in the batch with same label as sample $x_i$, \textit{i.e.}, $\Xcal^+_{\text{batch},x_i} := \{x\in \Xcal_{\text{batch}} : y(x) = y(x_i)\}$.  Then, our implemented loss function over a batch is:
{
\begin{align}
&\mathcal{L}_{\textup{batch}} \nonumber \\
&= \frac{1}{N}
\sum_{i=1}^{N}
\left[ \frac{1}{|\Xcal^+_{\text{batch},x_i}|}\! 
\sum_{x_j \in \Xcal^+_{\text{batch},x_i}}\hspace{-3ex} \psi_{\text{InfoNCE}}(f(x_i)^\top (f(x^-_{i1}) - f(x_j)), \ldots, f(x_i)^\top(f(x^-_{ik}) - f(x_j))  \right] \nonumber\\
&=  \frac{1}{N}
\sum_{i=1}^{N}
\left[ \frac{1}{|\Xcal^+_{\text{batch},x_i}|} \sum_{x_j \in \Xcal^+_{\text{batch},x_i}}\hspace{-2ex} \log\bigg(1 + \frac{1}{k} \sum_{q=1}^k e^{f(x_i)^\top(f(x^-_{iq}) - f(x_j))}\bigg)   \right]
\end{align}
}
where $|\Xcal^+_{\text{batch},x_i}|$ denotes the size of set $\Xcal^+_{\text{batch},x_i}$ and  $\psi_{\text{InfoNCE}}$ is the InfoNCE loss function defined in (\ref{eq:InfoNCElossfunction}). 

Thus, the batch loss is computed by an outer average and an inner average of the loss function $\psi_{\text{InfoNCE}}(\cdot)$. In the outer average, $\psi_{\text{InfoNCE}}(\cdot)$ is averaged across $N$, $(k+1)$-tuples of anchor and negatives $(x_i,x^-_{i1},\ldots,x^-_{ik})$, where we first select the anchor from $\Xcal_{\text{batch}}$ and then $k$ negatives associated with $x_i$ from $\Xcal_{\text{batch}}$ according to the appropriate negative sampling distribution of the \SCL{} or \UCL{} setting. For a given $(k+1)$-tuple $(x_i,x^-_{i1},\ldots,x^-_{ik})$, in the inner average,  $\psi_{\text{InfoNCE}}(\cdot)$ is averaged across   $|\Xcal^+_{\text{batch},x_i}|$ positive samples that have the same label as the anchor $x_i$. The batch loss can be interpreted as an empirical instantiation of the population loss via the nested (iterated) expectation 
\begin{multline*}
\EE[\psi_{\text{InfoNCE}}(f(x)^\top (f(x^-_{1}) - f(x^+)), \ldots, f(x)^\top(f(x^-_{k}) - f(x^+))] 
\\ = \EE\Big[\EE\big[\psi_{\text{InfoNCE}}(f(x)^\top (f(x^-_{1}) - f(x^+)), \ldots, f(x)^\top(f(x^-_{k}) - f(x^+))| {x,x^-_{1:k}}\big]\Big] 
\end{multline*}
where the inner expectation is over $x^+$ for a given $(k+1)$-tuple $(x,x^-_1,\ldots,x^-_k)$.  The overall loss in an epoch is the average of the batch loss across all the mini-batches in that epoch.

All our theoretical results were established for the batch setting.  To ensure that they also hold in the mini-batch setting, as discussed in the recent work of \cite{kini2024symmetric}, mini-batches must be carefully constructed to prevent the formation of disjoint groups of non-interacting samples that remain ``frozen'' across epochs. One method to prevent this, proposed in \cite{kini2024symmetric}, is the so-called \textit{batch-shuffling} method where the samples are divided into mini-batch partitions, with a random reshuffling of all samples in every epoch. We adopt this batch-shuffling method in our experiments.

\subsection{Intra-class variance-collapse}
\label{sec: Intra-class variance-collapse}
In this section, we provide the numerical results to verify the intra-class variance-collapse phenomenon. We used a dataset comprising three classes extracted from the CIFAR-10 dataset. Specifically, we selected the first 1500, 750, and 750 image samples, respectively, from the first three classes (\textit{i.e.}, $C=3$), namely bird, automobile, and airplane, of the CIFAR10 dataset to form our dataset comprising 3000 samples. This corresponds to $\lambda_1 = 0.5$ and $\lambda_2 = \lambda_2 = 0.25$. We utilized the ResNet-50 architecture to implement the representation function $f$. To satisfy the condition $C = 3 \leq d + 1$ in Theorem \ref{theorem: 1}, we set the dimension of the representation space to $d=2$. We set the batch size and the number of epochs to 512 and 1000, respectively,  and the number of negative samples to $k=512$. We optimized the empirical \CL{} risk using the Adam optimizer with a learning rate of $0.001$. 

\begin{figure}[!ht]
\begin{tabular}{ccc}
    \includegraphics[width= 0.3 \linewidth]{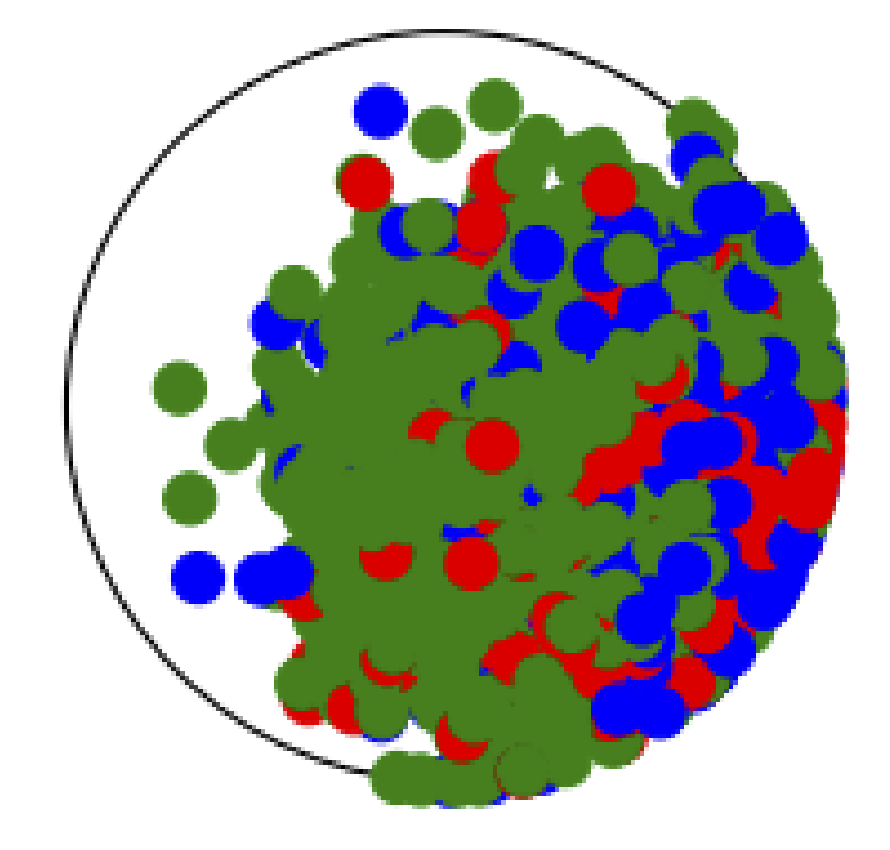} &
    \includegraphics[width= 0.28 \linewidth]{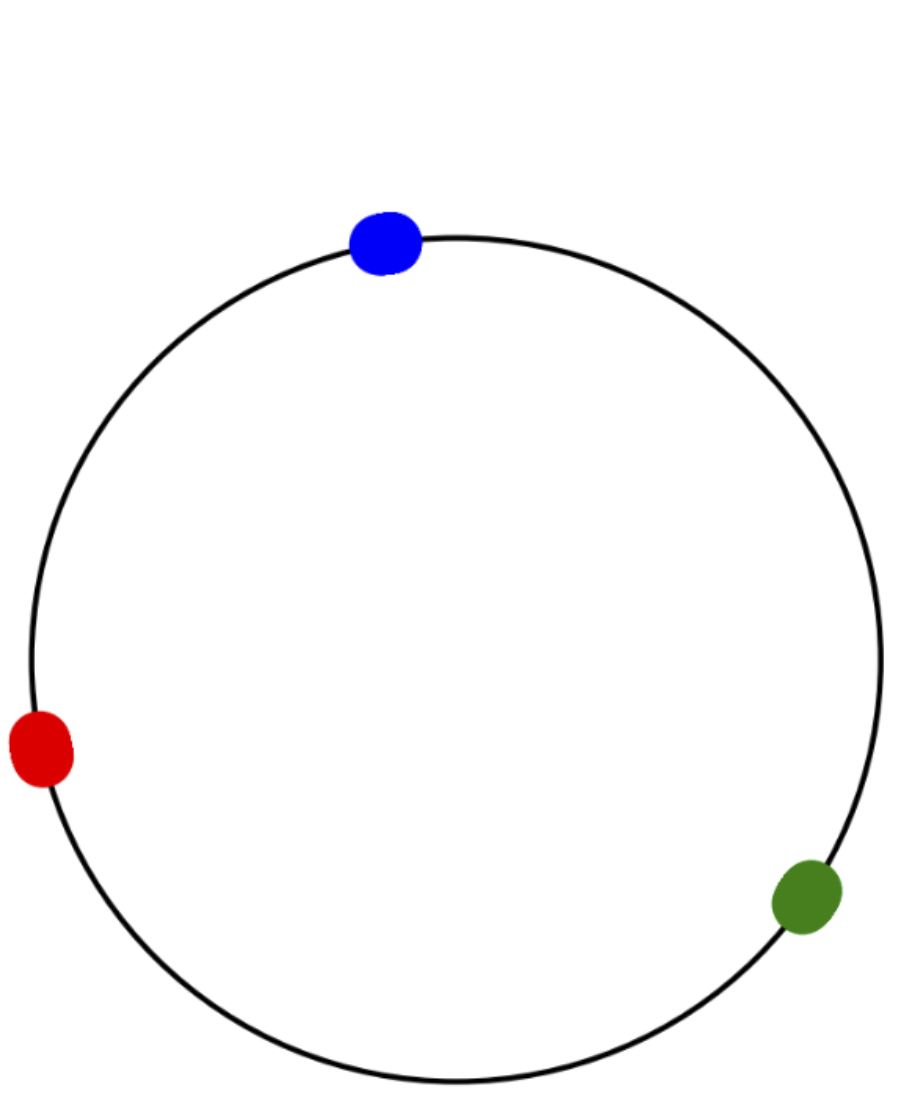} &
    \includegraphics[width= 0.28 \linewidth]{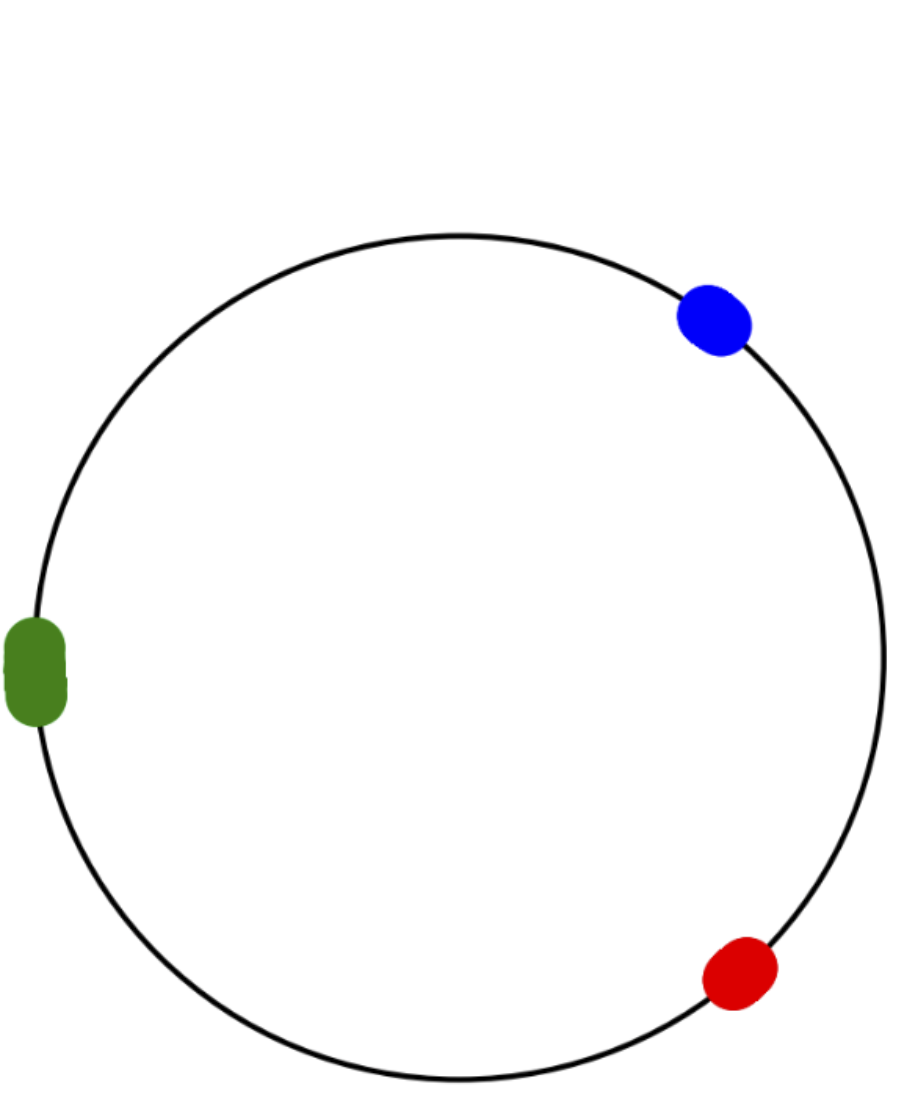} \\
   Before training &  \SCL{} post-training & \UCL{} post-training 
\end{tabular}    
  \caption{Intra-class variance-collapse in imbalanced datasets. Left: $\RR^2$-space representations of 3000 images from $3$ classes (indicated by color) of the CIFAR10 dataset using the initial representation function (i.e., before training). Middle: representation vectors of the same images at the conclusion of training when negative samples, within a mini-batch, are selected from classes that are different from those of the positive samples (\SCL{} setting). Right:  representation vectors of the same images at the conclusion of training when the negative samples can be selected from the entire mini-batch (\UCL{} setting).
  }
  \label{fig: 2}%
  \vspace{10 pt}
\end{figure}

Figure~\ref{fig: 2} illustrates the two-dimensional representations of samples from three classes using: (a) the initial mapping before the commencement of training, (b) the optimal mapping at the conclusion of training in the \SCL{} setting, and (c) the optimal mapping at the conclusion of training in the \UCL{} setting. Evidently, all the samples from the same class (represented by the same color) nearly collapse to the same point, which is their class mean. As seen, when the negative samples can be selected from any classes in the dataset (\UCL{} setting), including the class of positive samples, the distance between the two minority classes (red and blue) is much smaller compared to the setup where the negative samples are selected from classes that are different from those of the positive samples (\SCL{} setting).

To verify that the optimal solutions obtained by the neural network are consistent with our theoretical results, we used the CVX modeling system \citep{grant2014cvx} to solve the convex optimization problem in (\ref{eq: thuan31}).  From Theorem~\ref{theorem: 1}, we know that the optimal mean vector matrix $M^*$ is not unique, but the optimal Gram matrix $A^*$ is unique and can be computed as the solution to a convex optimization problem. Therefore, we compare the optimal Gram matrix provided by the neural network with the one computed using CVX. The optimal Gram matrices $A^*$ obtained by the neural network and the CVX package are

\begin{tabular}{rcc}
  \\
  & \SCL{} setting & \UCL{} setting \\ 
  $A^*_{\textup{Neural-Network}}=$ 
  & $\begin{bmatrix}
    1.0000 & -0.6884 & -0.6910\\
    -0.6884 &  1.0000 &  -0.0485 \\
    -0.6910 & -0.0485 &  1.0000
    \end{bmatrix},$ 
  & $\begin{bmatrix}
    1.0000 & -0.6301 & -0.6271 \\
    -0.6301 &  1.0000 & -0.2097 \\
    -0.6271 & -0.2097 &  1.0000
    \end{bmatrix}$ \\  \\ 
  $A^*_{\textup{CVX-package}}=$ 
  & $\begin{bmatrix}
    1.0000 & -0.6889 & -0.6889  \\
    -0.6889 & 1.0000 & -0.0480     \\
    -0.6889 &  -0.0480 & 1.0000 
    \end{bmatrix},$ 
  & $\begin{bmatrix}
    1.0000 &  -0.6284 & -0.6284 \\
    -0.6284 & 1.0000  & -0.2105 \\
    -0.6284 & -0.2105 &  1.0000
    \end{bmatrix}$ \\ \\
\end{tabular}

Evidently, both the neural network and CVX optimal solutions are very similar, and this empirically validates our theoretical results.

We also note that if the classes were balanced, then from Theorem~2 in \cite{jiang2024hard}, the three optimal class means would form an equilateral triangle in the representation space (an equilateral triangle is an ETF in 2-D space). For our imbalanced datasets, the three class means clearly do not form an equilateral triangle. They do, however, form an isosceles triangle, and this empirically validates the result of Theorem~\ref{theorem:equiangular-properties} (since $\lambda_2=\lambda_3$ in this experiment). This empirically confirms our claim that ETF is not the optimal geometric structure for imbalanced classes.

\subsection{Minority collapse}
\label{sec: minority collapse experiment}

In this section, we provide the numerical results to verify the minority-collapse phenomenon. To do so, we constructed a three-class dataset with 2700, 150, and 150 image samples from the first three classes of the CIFAR-10 dataset, respectively, to form our second dataset of 3000 samples. This setup makes $\lambda_1=0.9$ and $\lambda_2=\lambda_3=0.05$, which is the case when the data is heavily imbalanced. We utilized the ResNet-50 architecture to implement the representation function $f$. Similarly to the setup in Section~\ref{sec: Intra-class variance-collapse}, to satisfy the condition $C = 3 \leq d + 1$ in Theorem \ref{theorem: 1}, we set the dimension of the representation space to $d=2$. We also set the batch size and the number of epochs to 512 and 1000, respectively, and the number of negative samples to $k=512$. We optimized the empirical \CL{} risk using the Adam optimizer with a learning rate of $0.001$. 

Figure~\ref{fig: 3} shows the representation vectors of all $3000$ samples in the dataset at the beginning and at the end of training. Evidently, the representations of the two minor classes (blue and red) have collapsed (or nearly collapsed) into one vector (shown in red color), and the representations of these two classes are diametrically opposite on the unit circle to the representations of the major class (shown in green color). These results empirically validate the main conclusions of Section~\ref{sec: minority collapse}. We further note that $\lambda_1 = 0.9$ in this experiment is below the threshold of $0.9239$ in Corollary~\ref{cor:InfoNCElambda1threshold} for UCL and  $0.9438$ in Corollary~\ref{cor:SCLInfoNCElambda1threshold} for SCL, which guarantee minority collapse. This empirically bolsters our remarks before Corollary~\ref{cor:InfoNCElambda1threshold} that the threshold for minority collapse in Theorem~\ref{theorem:minoritycollapse} is sufficient for minority collapse, but may not be necessary.

\begin{figure}[!ht]
\begin{tabular}{ccc}
    \includegraphics[width= 0.3 \linewidth]{3000_samples_initialization.png} &
    \includegraphics[width= 0.3 \linewidth]{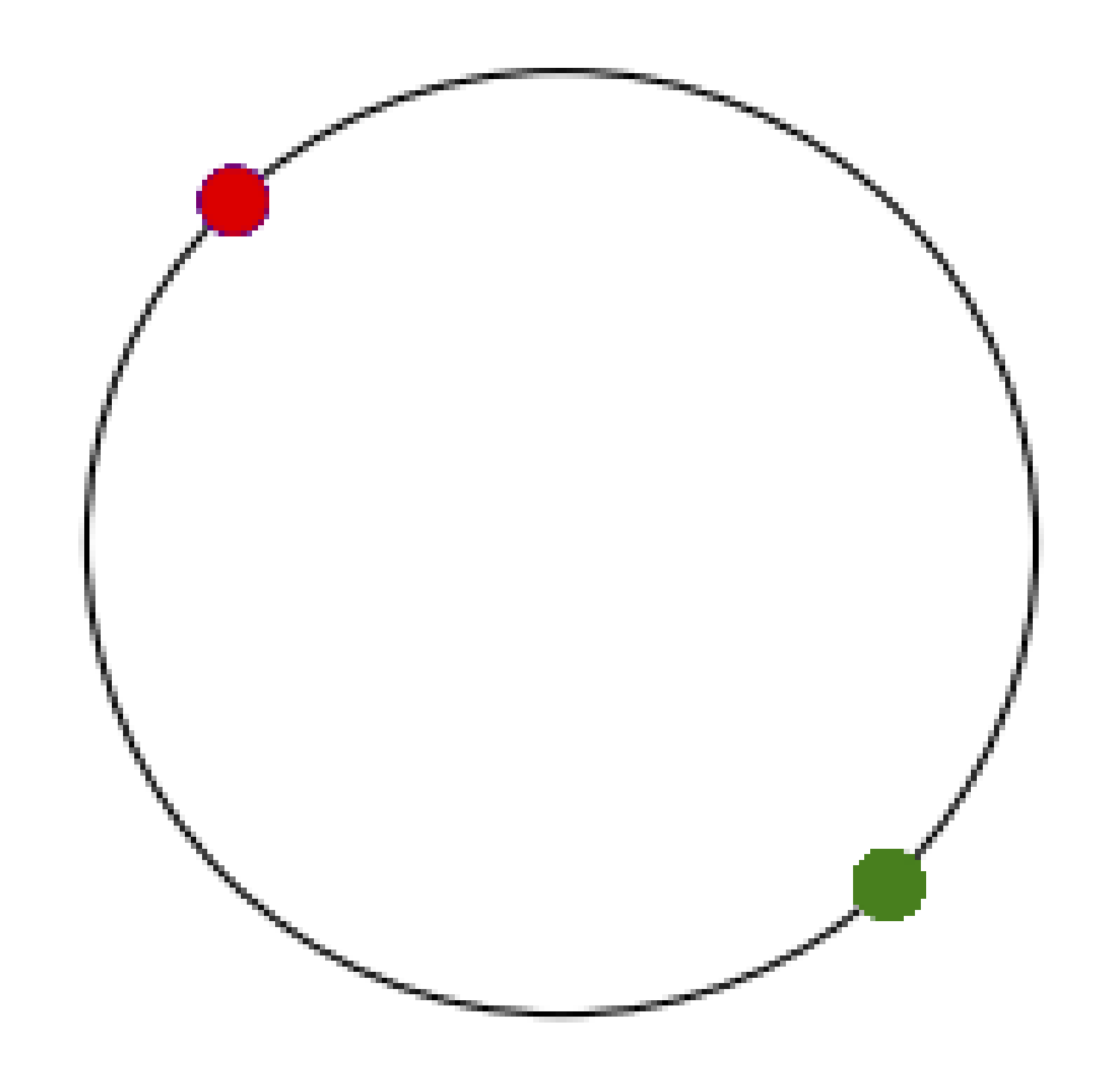} &
    \includegraphics[width= 0.28 \linewidth]{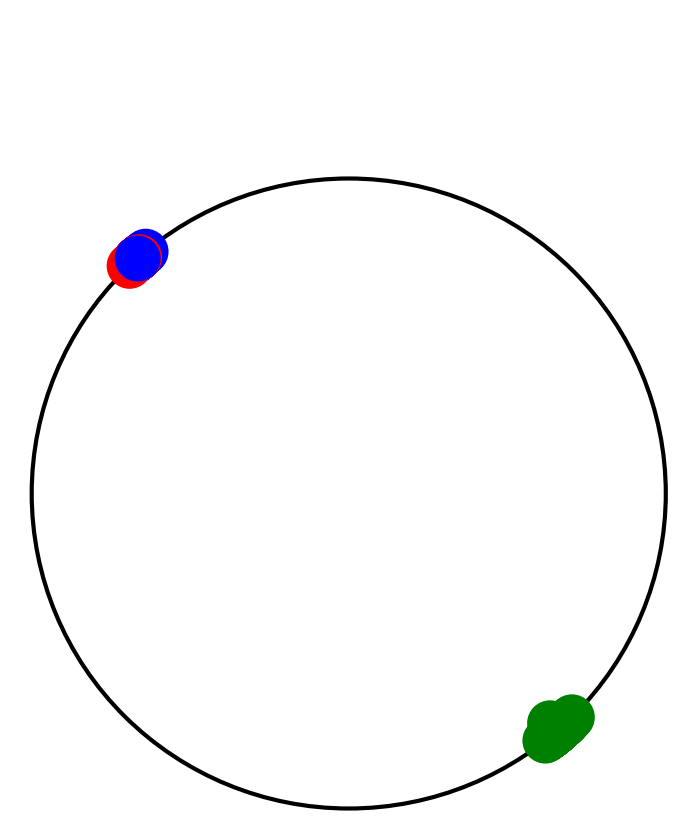}  \\
   Before training &  \SCL{} post-training & \UCL{} post-training 
\end{tabular}  
\caption{Minority collapse in heavily imbalanced datasets. The $\RR^2$-space representations of 3000 images from the first three classes of the CIFAR10 dataset before training (left sub-figure) collapse after training to two diametrically opposite points on the unit circle (middle and right sub-figures). 
The coincident or nearly coincident red/blue points in the middle and right sub-figures represent 300 samples from the second and third (minority) classes combined, whereas the green points represent the 2700 samples from the first (majority) class.}%
    \label{fig: 3}%
    \vspace{10 pt}
\end{figure}

The optimal Gram matrices $A^*$ obtained by the neural network and the CVX package are 

\begin{tabular}{rcc}
  \\
  & \SCL{} setting & \UCL{} setting \\ 
  $A^*_{\textup{minority-collapse}}=$ 
  & $\begin{bmatrix}
      1.0000 & -1.0000 & -1.0000 \\
     -1.0000 &  1.0000 &  1.0000 \\
     -1.0000 &  1.0000 &  1.0000 \\
    \end{bmatrix},$ 
  & $\begin{bmatrix}
      1.0000 & -0.9997 & -0.9997 \\
     -0.9997 &  1.0000 &  0.9999 \\
     -0.9997 &  0.9999 &  1.0000 \\
    \end{bmatrix}$ \\  \\ 
  $A^*_{\textup{CVX-package}}=$ 
  & $\begin{bmatrix}
      1.0000 & -1.0000 & -1.0000 \\
     -1.0000 &  1.0000 &  1.0000 \\
     -1.0000 &  1.0000 &  1.0000 \\
    \end{bmatrix},$ 
  & $\begin{bmatrix}
      1.0000 & -1.0000 & -1.0000 \\
     -1.0000 &  1.0000 &  1.0000 \\
     -1.0000 &  1.0000 &  1.0000\\
    \end{bmatrix}$ \\ \\
\end{tabular}

respectively, and they are identical up to the displayed numerical precision. This empirically corroborates our theoretical results that the optimal Gram matrix can be found efficiently using convex optimization. 

\section{Summary and Open Problems}
\label{sec: conclusion}
In this paper, we proved that for a general family of \CL{} losses (including the widely used InfoNCE loss) which are based on loss functions which are strictly convex and argument-wise strictly increasing, the optimal representations, 
will exhibit the intra-class variance-collapse phenomenon (representations of all samples from the same class must collapse to their class mean when globally minimizing the risk). 

Even though there is no specific optimal structure or closed-form expression available for the optimal class means in the general imbalanced case, we derived an efficient method based on convex optimization to compute these optimal class means. We also established some equiangular properties of the optimal class means of equiprobable classes.

We further investigated a special case of extreme class imbalance and showed that \CL{} also exhibits a phenomenon called minority collapse, wherein the optimal representations of all samples from the minority classes (classes with small probabilities) collapse into a single vector. Our key theoretical results were empirically validated through computer experiments.

Our work opens up several new problems that are of practical importance: (a) investigating the optimal geometry of neural collapse when the number of classes is more than the dimension of the representation space plus one -- this scenario is particularly relevant to many large language models where embedding dimensions are typically on the order of hundreds and the the number of classes range in thousands during pre-training, (b) analyzing the neural collapse phenomenon with hard-negative samples -- this is relevant to CL{} since it has been shown that hard-negative sampling alleviates issues with CL{} \cite{jiang2024hard, robinson2020contrastive}, and (c) characterizing non-asymptotic thresholds for the minority-collapse phenomenon for more than one major class. 

\bibliographystyle{arxiv}
\bibliography{references}

@article{jiang2024hard,
  title={Hard-Negative Sampling for Contrastive Learning: Optimal Representation Geometry and Neural-vs Dimensional-Collapse},
  author={Jiang, Ruijie and Nguyen, Thuan and Aeron, Shuchin and Ishwar, Prakash},
  journal={Transactions on Machine Learning Research},
  year={2024}
}

@article{fang2021exploring,
  title={Exploring deep neural networks via layer-peeled model: Minority collapse in imbalanced training},
  author={Fang, Cong and He, Hangfeng and Long, Qi and Su, Weijie J},
  journal={Proceedings of the National Academy of Sciences},
  volume={118},
  number={43},
  pages={e2103091118},
  year={2021},
  publisher={National Academy of Sciences}
}

@inproceedings{graf2021dissecting,
  title={Dissecting supervised contrastive learning},
  author={Graf, Florian and Hofer, Christoph and Niethammer, Marc and Kwitt, Roland},
  booktitle={International Conference on Machine Learning},
  pages={3821--3830},
  year={2021},
  organization={PMLR}
}

@article{kothapalli2023neural,
title={Neural Collapse: A Review on Modelling Principles and Generalization},
author={Vignesh Kothapalli},
journal={Transactions on Machine Learning Research},
issn={2835-8856},
year={2023},
url={https://openreview.net/forum?id=QTXocpAP9p},
note={}
}

@InProceedings{pmlr-v202-dang23b,
  title = 	 {Neural Collapse in Deep Linear Networks: From Balanced to Imbalanced Data},
  author =       {Dang, Hien and Huu, Tho Tran and Osher, Stanley and Tran, Hung The and Ho, Nhat and Nguyen, Tan Minh},
  booktitle = 	 {Proceedings of the 40th International Conference on Machine Learning},
  pages = 	 {6873--6947},
  year = 	 {2023},
  editor = 	 {Krause, Andreas and Brunskill, Emma and Cho, Kyunghyun and Engelhardt, Barbara and Sabato, Sivan and Scarlett, Jonathan},
  volume = 	 {202},
  series = 	 {Proceedings of Machine Learning Research},
  month = 	 {23--29 Jul},
  publisher =    {PMLR},
  url = 	 {https://proceedings.mlr.press/v202/dang23b.html}
}

@InProceedings{pmlr-v235-dang24a,
  title = 	 {Neural Collapse for Cross-entropy Class-Imbalanced Learning with Unconstrained {R}e{LU} Features Model},
  author =       {Dang, Hien and Huu, Tho Tran and Nguyen, Tan Minh and Ho, Nhat},
  booktitle = 	 {Proceedings of the 41st International Conference on Machine Learning},
  pages = 	 {10017--10040},
  year = 	 {2024},
  editor = 	 {Salakhutdinov, Ruslan and Kolter, Zico and Heller, Katherine and Weller, Adrian and Oliver, Nuria and Scarlett, Jonathan and Berkenkamp, Felix},
  volume = 	 {235},
  series = 	 {Proceedings of Machine Learning Research},
  month = 	 {21--27 Jul},
  publisher =    {PMLR},
  url = 	 {https://proceedings.mlr.press/v235/dang24a.html}
}

@inproceedings{kini2024symmetric,
title={Symmetric Neural-Collapse Representations with Supervised Contrastive Loss: The Impact of Re{LU} and Batching},
author={Ganesh Ramachandra Kini and Vala Vakilian and Tina Behnia and Jaidev Gill and Christos Thrampoulidis},
booktitle={The Twelfth International Conference on Learning Representations},
year={2024},
url={https://openreview.net/forum?id=AyXIDfvYg8}
}

@article{hong2024neural,
  title={Neural collapse for unconstrained feature model under cross-entropy loss with imbalanced data},
  author={Hong, Wanli and Ling, Shuyang},
  journal={Journal of Machine Learning Research},
  volume={25},
  number={192},
  pages={1--48},
  year={2024}
}

@inproceedings{behnia2024supervised,
  title={Supervised Contrastive Representation Learning: Landscape Analysis with Unconstrained Features},
  author={Behnia, Tina and Thrampoulidis, Christos},
  booktitle={2024 IEEE International Symposium on Information Theory (ISIT)},
  pages={575--580},
  year={2024},
  organization={IEEE}
}

@book{royden1988real,
  title={Real Analysis 3rd Ed.},
  author={H.~L.~Royden},
  year={1988},
  publisher={Macmillan Publishing Company},
  address={New York, NY}
}

@Book{Bertsekas2002,
    author      = {Dimitri P. Bertsekas},
    title       = {Nonlinear Programming},
    address     = {Belmont, MA},
    edition     = {2nd},
    publisher   = {Athena Scientific},
    year        = {2002}
}

@book{bertsekas2010convex,
  title={Convex Optimization Theory},
  author={Bertsekas, D.P.},
  isbn={9788173717147},
  url={https://books.google.com/books?id=c80_nQAACAAJ},
  year={2010},
  publisher={Universities Press}
}

@inproceedings{robinson2020contrastive,
  title={Contrastive Learning with Hard Negative Samples},
  author={Robinson, Joshua David and Chuang, Ching-Yao and Sra, Suvrit and Jegelka, Stefanie},
  booktitle={International Conference on Learning Representations},
  year={2020}
}

@article{khosla2020supervised,
  title={Supervised contrastive learning},
  author={Khosla, Prannay and Teterwak, Piotr and Wang, Chen and Sarna, Aaron and Tian, Yonglong and Isola, Phillip and Maschinot, Aaron and Liu, Ce and Krishnan, Dilip},
  journal={Advances in Neural Information Processing Systems},
  volume={33},
  pages={18661--18673},
  year={2020}
}

@article{jaiswal2020survey,
  title={A survey on contrastive self-supervised learning},
  author={Jaiswal, Ashish and Babu, Ashwin Ramesh and Zadeh, Mohammad Zaki and Banerjee, Debapriya and Makedon, Fillia},
  journal={Technologies},
  volume={9},
  number={1},
  pages={2},
  year={2020},
  publisher={MDPI}
}

@inproceedings{dang2024neural,
  title={Neural Collapse for Cross-entropy Class-Imbalanced Learning with Unconstrained ReLU Features Model},
  author={Dang, Hien and Huu, Tho Tran and Nguyen, Tan Minh and Ho, Nhat},
  booktitle={International Conference on Machine Learning},
  pages={10017--10040},
  year={2024},
  organization={PMLR}
}

@inproceedings{wang2020understanding,
  title={Understanding contrastive representation learning through alignment and uniformity on the hypersphere},
  author={Wang, Tongzhou and Isola, Phillip},
  booktitle={International Conference on Machine Learning},
  pages={9929--9939},
  year={2020},
  organization={PMLR}
}

@misc{grant2014cvx,
  title={CVX: Matlab software for disciplined convex programming, version 2.1},
  author={Grant, Michael and Boyd, Stephen},
  year={2014}
}

@book{boyd2004convex,
  title={Convex optimization},
  author={Boyd, Stephen P and Vandenberghe, Lieven},
  year={2004},
  publisher={Cambridge university press}
}

@article{wang2023towards,
  title={Towards understanding neural collapse in supervised contrastive learning with the information bottleneck method},
  author={Wang, Siwei and Palmer, Stephanie E},
  journal={arXiv preprint arXiv:2305.11957},
  year={2023}
}

@inproceedings{jiang2024supervised,
  title={Supervised contrastive learning with hard negative samples},
  author={Jiang, Ruijie and Nguyen, Thuan and Ishwar, Prakash and Aeron, Shuchin},
  booktitle={2024 International Joint Conference on Neural Networks (IJCNN)},
  pages={1--8},
  year={2024},
  organization={IEEE}
}

@inproceedings{chen2020simple,
  title={A simple framework for contrastive learning of visual representations},
  author={Chen, Ting and Kornblith, Simon and Norouzi, Mohammad and Hinton, Geoffrey},
  booktitle={International conference on machine learning},
  pages={1597--1607},
  year={2020},
  organization={PMLR}
}

@article{nielsen2018monte,
  title={Monte Carlo information geometry: The dually flat case},
  author={Nielsen, Frank and Hadjeres, Ga{\"e}tan},
  journal={arXiv preprint arXiv:1803.07225},
  year={2018}
}

\appendix
\section{Proofs and additional supporting results}\label{apd:appendix_proofs}

\subsection{Strict convexity of the InfoNCE loss function}\label{apd: 1}
\begin{lemma}
\label{lemma: 0}
For all $i = 0, 1,\dots,k$, let $\alpha_i > 0$. Then the generalized log-sum-exponential (GLSE) function
\begin{equation}
\psi_{\text{GLSE}}(t_{1:k}) := \log \left( \alpha_0 + \sum_{i=1}^k \alpha_i e^{t_i} \right)  
\end{equation} 
is strictly convex.   
\end{lemma}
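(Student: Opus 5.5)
I will prove strict convexity directly from the Hessian, since $\psi_{\text{GLSE}}$ is smooth on all of $\RR^k$ (the argument of the logarithm is at least $\alpha_0>0$). Write $Z(t) := \alpha_0 + \sum_{i=1}^k \alpha_i e^{t_i}$ and define weights $p_i := \alpha_i e^{t_i}/Z(t)$ for $i=1{:}k$ and $p_0 := \alpha_0/Z(t)$. Then every $p_i$ is strictly positive and $p_0 + \sum_{i=1}^k p_i = 1$. Differentiating gives $\partial \psi/\partial t_i = p_i$, and a second differentiation gives
\begin{equation*}
\nabla^2 \psi_{\text{GLSE}}(t) = \mathrm{diag}(p_1,\ldots,p_k) - p\,p^\top, \qquad p := (p_1,\ldots,p_k)^\top .
\end{equation*}

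The key step is to show that this Hessian is positive definite at every $t$. That is sufficient, because a $C^2$ function with an everywhere positive definite Hessian on the convex set $\RR^k$ is strictly convex. For $v \in \RR^k\setminus\{\bfzero\}$ we have
\begin{equation*}
v^\top \nabla^2\psi\, v = \sum_{i=1}^k p_i v_i^2 - \Big(\sum_{i=1}^k p_i v_i\Big)^2 .
\end{equation*}
Extend $v$ by setting $v_0 := 0$ and view $(p_0,\ldots,p_k)$ as a probability distribution on $\{0,\ldots,k\}$. The expression above is then exactly the variance of $v_I$ when $I$ is drawn from this distribution. A variance is nonnegative, and it vanishes only if $v_I$ is almost surely constant. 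Since every $p_i>0$, this would force $v_0=v_1=\cdots=v_k$, and because $v_0=0$ we would get $v=\bfzero$, a contradiction. Hence the Hessian is positive definite.

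I do not expect a real obstacle. The one point needing care is the role of $\alpha_0>0$. Without it, the plain log-sum-exp function is affine along the direction $\bfones$ and so fails to be strictly convex. The $\alpha_0$ term supplies the extra atom $v_0=0$, which removes that degeneracy. It is also worth confirming that the InfoNCE loss $\log\!\big(1+\tfrac1k\sum_t e^{t_t}\big)$ is the special case $\alpha_0=1$, $\alpha_i=1/k$. Finally, argument-wise strict monotonicity follows immediately from $\partial\psi/\partial t_i = p_i > 0$, which confirms that InfoNCE satisfies Definition~\ref{def: 1}.
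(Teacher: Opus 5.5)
Your proof is correct, and it takes a genuinely different route from the paper. The paper never differentiates. It writes the argument of the logarithm at $(1-\lambda)u+\lambda v$ as $\sum_{i=0}^k (\alpha_i e^{u_i})^{1-\lambda}(\alpha_i e^{v_i})^{\lambda}$, with the convention $u_0=v_0=0$. It then applies H\"older's inequality with exponents $p=1/(1-\lambda)$ and $q=1/\lambda$ to get convexity. For strictness it invokes H\"older's equality condition, $\alpha_i e^{u_i}=c\,\alpha_i e^{v_i}$ for all $i$, and uses $u_0=v_0=0$ to force $c=1$, hence $u=v$.

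You instead compute the Hessian $\mathrm{diag}(p)-pp^\top$ and read $v^\top\nabla^2\psi_{\text{GLSE}}\,v$ as a variance under $(p_0,\ldots,p_k)$. You then appeal to the standard fact that an everywhere positive definite Hessian implies strict convexity.

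Both arguments rest on the same mechanism. The extra atom carrying weight $\alpha_0>0$ is pinned at zero, which breaks the invariance along $\bfones$ that makes ordinary log-sum-exp only convex. In the paper this shows up as $c=1$; in yours, as the constant value of $v_I$ being $0$.

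The paper's route is derivative-free and works straight from the definition of strict convexity, so it needs no second-order sufficient condition. Yours needs smoothness, which holds here, but it gives you more. The gradient $\partial\psi_{\text{GLSE}}/\partial t_i=p_i>0$ yields argument-wise strict monotonicity immediately. It is also exactly the gradient the paper later computes separately to obtain $\Delta_2$ and $\delta_u$ for InfoNCE. Finally, the smallest Hessian eigenvalue is continuous in $t$ and positive everywhere, so it has a positive minimum on any compact set such as $[-2,0]^k$. That gives strong convexity there, which is strictly more than the lemma claims.
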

\begin{proof}
The function $\psi_{\text{GLSE}}(t_1,\dots,t_k)$ is similar to the well-known ``standard'' log-sum-exponential function \cite{boyd2004convex}.
The standard log-sum-exponential function is known to be convex, but not strictly convex. Even though the result in Lemma~\ref{lemma: 0} seems to be well-known, we are only able to find one reference that briefly mentions this result without a detailed proof \cite{nielsen2018monte}. Therefore, to make the paper self-contained, we provide the proof of Lemma \ref{lemma: 0} below.    

For all $i \in \{1:k\}$, let $u_i, v_i \in \RR$, and $w_i := (1-\lambda)u_i + \lambda v_i$, where $\lambda \in (0,1)$. Let $u_0 = v_0 = w_0 = 0$ and for some $i \in \{1:k\}$, let $u_i \neq v_i$.  If $p:= \tfrac{1}{(1-\lambda)}$ and $q := \tfrac{1}{\lambda}$, then $p, q \in (1,\infty)$, $\tfrac{1}{p} + \tfrac{1}{q} = 1$, and we have
\begin{align}
    \psi_{\text{GLSE}}(w_{1:k}) 
    &= \log\left(\alpha_0 + \sum_{i=1}^k \alpha_i e^{(1-\lambda) u_i 
       + \lambda v_i}\right) \nonumber \\
    &=  \log\left(\sum_{i=0}^k (\alpha_i e^{u_i})^{1/p} (\alpha_i e^{v_i})^{1/q}\right) \nonumber \\
    &\stackrel{\text{H\"{o}lder}}{\leq} \log\left( \Big(\sum_{i=0}^k \alpha_i e^{u_i}\Big)^{1/p}
    \Big(\sum_{i=0}^k \alpha_i e^{v_i}\Big)^{1/q}\right) \label{eq:Holder} \\
    &= (1-\lambda)\log\left(\alpha_0 + \sum_{i=1}^k \alpha_i e^{u_i}\right) + \lambda \log\left(\alpha_0 + \sum_{i=1}^k \alpha_i e^{v_i}\right) \nonumber \\
    &= (1-\lambda)\psi_{\text{GLSE}}(u_{1:k}) + \lambda \psi_{\text{GLSE}}(v_{1:k}). \nonumber
\end{align}
This shows that $\psi_{\text{GLSE}}(\cdot)$ is a convex function. Equality holds in H\"{o}lder's inequality if, and only if, for all $i \in \{0:k\}$, we have $((\alpha_i e^{u_i})^{1/p})^p = c\,((\alpha_i e^{v_i})^{1/q})^q$ for some constant $c$, i.e., $e^{u_i} = c\,e^{v_i}$, since $\alpha_i > 0$ for all $i\in \{0:k\}$ and $1/p, 1/q \in (0,1)$.
Since $u_0 = v_0 = 0$, equality can occur if, and only if, $c= 1$. This would imply that $u_i = v_i$ for all $i \in \{1:k\}$ which would contradict the assumption that for some $i \in \{1:k\}$, $u_i \neq v_i$. This proves that the inequality in (\ref{eq:Holder}) is strict and therefore $\psi_{\text{GLSE}}(\cdot)$ is a \textit{strictly} convex function.
\end{proof}

The InfoNCE loss function 
\begin{align}
\psi(t_{1:k}) =  \psi_{\text{InfoNCE}}(t_{1:k}) &:= \log\bigg(1 + \frac{1}{k} \sum_{i=1}^k e^{t_i}\bigg) \label{eq:InfoNCElossfunction}
\end{align}
is argument-wise strictly increasing and is not only convex (being a log-sum-exponential with a positive offset within the logarithm), but also strictly convex since it is a GLSE function with $\alpha_0 = 1$ and $\alpha_1 = \ldots = \alpha_k =\tfrac{1}{k} > 0$.  

\subsection{Lemmas for proving variance collapse} \label{apd:varcollapse}

\begin{lemma}\label{lemma:varcollapse0}
 Let $u, v$ be iid random vectors in $\RR^d$ with probability distribution $p(\cdot)$. 
 If $u^\top v \stackrel{\text{w.p.1}}{=} 0$, 
 then, $u \stackrel{\text{w.p.1}}{=} v \stackrel{\text{w.p.1}}{=} 0$.
\end{lemma}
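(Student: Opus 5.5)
The key observation is that $u^\top v = 0$ almost surely, with $u,v$ iid, forces the second-moment matrix of $u$ to vanish. I would work with $\Sigma := \EE[u u^\top]$ and first treat the case where $\EE[\|u\|^2] < \infty$, which covers the application here since representations lie in the unit ball. Because $u$ and $v$ are independent and identically distributed,
\[
0 = \EE\big[(u^\top v)^2\big] = \EE\big[u^\top v\, v^\top u\big] = \EE\big[\mathrm{tr}(u u^\top v v^\top)\big] = \mathrm{tr}\big(\EE[u u^\top]\,\EE[v v^\top]\big) = \mathrm{tr}(\Sigma^2) = \|\Sigma\|_F^2 .
\]
Hence $\Sigma = 0$. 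Taking the trace gives $\EE[\|u\|^2] = 0$, so $u = 0$ w.p.1, and likewise $v = 0$ w.p.1 since $v$ has the same law.

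To remove the integrability assumption, I would apply the same argument to normalized vectors. Set $\tilde u := u/\|u\|$ on the event $\{u \neq 0\}$ and $\tilde u := 0$ otherwise, and define $\tilde v$ analogously. Then $\tilde u$ and $\tilde v$ are iid and bounded. Moreover $\tilde u^\top \tilde v = u^\top v/(\|u\|\|v\|)$ whenever both vectors are nonzero, and $\tilde u^\top \tilde v = 0$ otherwise, so $\tilde u^\top \tilde v = 0$ w.p.1. The computation above then gives $\tilde u = 0$ w.p.1. Since $\tilde u \neq 0$ exactly when $u \neq 0$, this yields $u = 0$ w.p.1.

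The only step that needs care is the interchange $\EE[\mathrm{tr}(uu^\top vv^\top)] = \mathrm{tr}(\EE[uu^\top]\EE[vv^\top])$. It uses independence and finiteness of the second moments, and the normalization step guarantees both. I do not expect any other obstacle.
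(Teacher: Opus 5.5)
Your proof is correct, but it takes a different route from the paper. The paper's argument is linear-algebraic and uses auxiliary copies. It draws $w_1,\ldots,w_{d+1}$ iid from $p(\cdot)$. Any $d+1$ vectors in $\RR^d$ are linearly dependent, so some $w_i$ lies in the span of the others. Since the $w_i$ are pairwise orthogonal w.p.1, that $w_i$ is orthogonal to itself and hence zero, so w.p.1 some $w_i$ vanishes. You instead square the inner product and use independence to factor $\EE[(u^\top v)^2]$ as $\mathrm{tr}(\Sigma^2)=\|\Sigma\|_F^2$. This forces the whole second-moment matrix to zero at once, and your normalization step supplies the integrability that the factorization needs. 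In the paper's only use of the lemma (Lemma~\ref{lemma:varcollapse}), the vectors $z_i-\mu$ are bounded, so your first paragraph alone already suffices there. The paper's argument has the advantage of being moment-free. However, it uses $d<\infty$ essentially. Its last step, from ``w.p.1 some $w_i=0$'' to ``$u=0$ w.p.1'', is only gestured at by ``all have the same distribution''; written out, it is the independence computation $0=\Pr(\text{all } w_i\neq 0)=\Pr(u\neq 0)^{d+1}$. Your argument reaches the conclusion directly, with no auxiliary copies. It also never relies on finite dimension in an essential way: the identity $\EE[(u^\top v)^2]=\|\EE[uu^\top]\|^2$, with the Hilbert--Schmidt norm, holds verbatim for iid square-integrable vectors in a separable Hilbert space.
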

\begin{proof}
 Let $\Dcal := \{1, \ldots, d+1\}$ and $w_1, \ldots, w_{d+1} \sim \text{ iid } p(\cdot)$. Since any $d+1$ vectors in $d$-dimensional space are linearly dependent, 
 \[
 \text{w.p.1. } \exists i \in \Dcal: w_i \in \text{Span}(w_{1:d+1}\setminus\{w_i\}).
 \]
 But for all $i\in\Dcal$ and all $j \in \Dcal\setminus\{i\}$, we have $w_i^\top w_j \stackrel{\text{w.p.1}}{=} 0$ since $w_i, w_j \sim \text{ iid }  p(\cdot)$. This implies that
 \[
 \exists i \in \Dcal: w_i \stackrel{\text{w.p.1}}{=} 0.
 \]
 But $u, v, w_1, \ldots, w_{d+1}$ all have the same distribution $p(\cdot)$. Therefore, $u \stackrel{\text{w.p.1}}{=}  v \stackrel{\text{w.p.1}}{=} 0$.
\end{proof}

\begin{remark} The result of Lemma~\ref{lemma:varcollapse0} is false if $u, v$ are independent, but not identically distributed, e.g., if $u = (u_1, 0)^\top, v = (0,v_2), \in \RR^2, u_1, v_2 \text{ iid standard normal}$. Clearly $u \stackrel{\text{w.p.1}}{\neq} 0$ and $v \stackrel{\text{w.p.1}}{\neq} 0$, but $u^\top v \stackrel{\text{w.p.1}}{=} 0$. Here, $u, v$ are independent, but they are not identically distributed because the first component of $u \stackrel{\text{w.p.1}}{\neq} 0$ but the second is and it is reversed for $v$.
\end{remark}

\begin{lemma}\label{lemma:varcollapse}
 Let $z_1, z_2$ be iid random vectors in $\RR^d$ and $\mu := \EE[z_1] = \EE[z_2]$. 
 If $z^\top_1 z_2 \stackrel{\text{w.p.1}}{=} \gamma$, a constant, 
 then, $z_1 \stackrel{\text{w.p.1}}{=} z_2 \stackrel{\text{w.p.1}}{=} \mu$ and $\gamma = ||\mu||^2$.
\end{lemma}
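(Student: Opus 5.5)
Our plan is to reduce the statement to Lemma~\ref{lemma:varcollapse0} by centering. Set $u := z_1 - \mu$ and $v := z_2 - \mu$; these are iid with mean zero. We expand
\[
u^\top v = z_1^\top z_2 - \mu^\top z_2 - z_1^\top \mu + \|\mu\|^2 = \gamma - \mu^\top z_1 - \mu^\top z_2 + \|\mu\|^2 \quad \text{w.p.1}.
\]
The goal is to show that $u^\top v \stackrel{\text{w.p.1}}{=} 0$. Lemma~\ref{lemma:varcollapse0} then gives $u \stackrel{\text{w.p.1}}{=} v \stackrel{\text{w.p.1}}{=} 0$, i.e., $z_1 \stackrel{\text{w.p.1}}{=} z_2 \stackrel{\text{w.p.1}}{=} \mu$. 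Substituting this back into $z_1^\top z_2 = \gamma$ yields $\gamma = \|\mu\|^2$.

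The main step is to show that the scalar random variable $\mu^\top z_1$ is almost surely constant. We do this by conditioning on $z_2$. Since $z_1^\top z_2 = \gamma$ w.p.1 and $z_1, z_2$ are independent, for $p$-almost every fixed value $b$ of $z_2$ we have $z_1^\top b = \gamma$ for $p$-a.e.\ $z_1$ (Fubini). Taking expectation over $z_1$ with $b$ fixed gives $\mu^\top b = \gamma$. Hence $\mu^\top z_2 \stackrel{\text{w.p.1}}{=} \gamma$, and by identical distribution $\mu^\top z_1 \stackrel{\text{w.p.1}}{=} \gamma$. Taking expectations also gives $\|\mu\|^2 = \mu^\top \EE[z_2] = \gamma$ directly.

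Plugging into the expansion gives $u^\top v = \gamma - \gamma - \gamma + \gamma = 0$ w.p.1, so Lemma~\ref{lemma:varcollapse0} applies. The one technical point is the Fubini/conditioning argument. It requires integrability of $z_1$ so that $\mu$ exists; this is implicit in the statement, and in the paper's application the vectors are bounded by A5. The rest is bookkeeping.
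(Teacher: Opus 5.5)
Your proof is correct and follows essentially the same route as the paper: you show $\mu^\top z_1 = \mu^\top z_2 = \gamma$ w.p.1 by conditioning on one vector and using independence, get $\|\mu\|^2=\gamma$ by taking expectations, then center and invoke Lemma~\ref{lemma:varcollapse0}. The only differences are cosmetic: you condition on $z_2$ rather than $z_1$, and you make explicit the Fubini step and the integrability assumption that the paper leaves implicit.
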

\begin{proof}
\[
z_1^\top z_2 \stackrel{\text{w.p.1}}{=} \gamma  
\Rightarrow \EE[z_1^\top z_2|z_1] \stackrel{\text{w.p.1}}{=} \gamma
\Rightarrow z_1^\top \EE[z_2|z_1] \stackrel{\text{w.p.1}}{=} \gamma
\Rightarrow z_1^\top \EE[z_2] \stackrel{\text{w.p.1}}{=} \gamma
\Rightarrow z_1^\top \mu \stackrel{\text{w.p.1}}{=} \gamma
\] 
where the last but one implication is because $z_1$ and $z_2$ are independent. Since $z_1$ and $z_2$ are also identically distributed, we have
\[
z_1^\top \mu \stackrel{\text{w.p.1}}{=} z_2^\top \mu \stackrel{\text{w.p.1}}{=} \gamma
\]
Therefore, $\EE[z_1^\top \mu] = \gamma \Rightarrow \mu^\top \mu = ||\mu||^2 = \gamma$. Next, define $u:= z_1 -\mu$ and $v := z_2 - \mu$. Then $u, v$ are iid random vectors in $\RR^d$  and $u^\top v = z_1^\top z_2 - z^\top_1 \mu - \mu^\top z_2 + \mu^\top \mu \stackrel{\text{w.p.1}}{=} \gamma - \gamma - \gamma + \gamma = 0$. By Lemma~\ref{lemma:varcollapse0}, $z_1-\mu \stackrel{\text{w.p.1}}{=} z_2 - \mu \stackrel{\text{w.p.1}}{=} 0$ which implies that $z_1 \stackrel{\text{w.p.1}}{=} z_2 \stackrel{\text{w.p.1}}{=} \mu$.
\end{proof}
\color{black}

\subsection{Proof of Lemma~\ref{lemma:CLriskLB}}\label{apd:CLriskLB}
The proof makes use of the results in Appendix~\ref{apd:varcollapse} pertaining to variance collapse. 
\begin{proof}
{\small
\begin{align}
    &L(f) = \nonumber \\ 
    &= \EE\bigg[\ell\Big(f(x),f(x^+),f(x^-_1),\ldots,f(x^-_k)\Big)\bigg] \nonumber \\
    &= \EE\bigg[\psi\Big(f^\top(x)(f(x^-_1) - f(x^+)),\ldots,f^\top(x)(f(x^-_k) - f(x^+))\Big)\bigg] \label{eq:lossdef} \\ 
    &= \EE\bigg[\EE\bigg[\psi\Big(f^\top(x)(f(x^-_1) - f(x^+)),\ldots,f^\top(x)(f(x^-_k) - f(x^+))\Big)\bigg\vert y, y^-_1,\ldots, y^-_k \bigg]\bigg] \label{eq:toalexpectation} \\ 
    &\geq \EE\bigg[\psi\Big(\EE[f^\top(x)\,f(x^-_1)|y,y^-_1] - \EE[f^\top(x)\,f(x^+)|y],\ldots,\EE[f^\top(x)\,f(x^-_k)|y,y^-_k] - \EE[f^\top(x)\,f(x^+)|y]\bigg] \label{eq:Jensen} \\ 
    &= \EE\bigg[\psi\Big(\mu_y^\top\,\mu_{y^-_1} - \EE[f^\top(x)\,f(x^+)|y],\ldots,\mu_y^\top\,\mu_{y^-_k} - \EE[f^\top(x)\,f(x^+)|y]\Big)\bigg]  \label{eq:anchnegcondindepgivenlabels} \\ 
    &= \sum_{i, j_{1:k} \in \Ccal}
{ \left( \lambda_i  \prod_{t=1}^k \lambda_{j_t} \right) } \psi  \big( \mu_i^\top \mu_{j_1} - \EE[f^\top(x)\,f(x^+)|y=i], \dots, \mu_i^\top \mu_{j_k} - \EE[f^\top(x)\,f(x^+)|y=i]\big) \label{eq:JensentogetG} \\
    &\geq \sum_{i,j_{1:k} \in \Ccal}
{ \left( \lambda_i  \prod_{t=1}^k \lambda_{j_t} \right) }  \psi  \big( \mu_i^\top \mu_{j_1} - 1, \dots, \mu_i^\top \mu_{j_k} - 1 \big), \label{eq:labeljoint}
\end{align}
}
where equality (\ref{eq:lossdef}) follows from (\ref{eq:genCLloss}), 
equality (\ref{eq:toalexpectation}) is the law of total expectation, 
inequality (\ref{eq:Jensen}) is Jensen's inequality applied  within the inner expectation conditioned on the labels of samples to the convex loss function $\psi(\cdot)$, 
(\ref{eq:anchnegcondindepgivenlabels}) follows from the conditional independence of anchor and negative samples given their labels implied by (\ref{eq:joint2of2}), 
(\ref{eq:JensentogetG}) follows by expanding the expectation in (\ref{eq:anchnegcondindepgivenlabels}) in terms of all possible tuples of values of labels together with (\ref{eq:joint1of2}), 
and inequality (\ref{eq:labeljoint})  is because $\psi(\cdot)$ is an increasing function of all its arguments, all the weights $\big(\lambda_i \prod_{t=1}^k \lambda_{j_t}\big)$ are positive, and $f^\top(x)\,f(x^+) \leq ||f(x)||\cdot ||f(x^+)|| \leq 1$ since the representations are constrained to be within the unit ball. 
\paragraph{}Clearly, if $f$ is such that $\forall x \in \Xcal, f(x) = \mu_{y(x)}$ and $\forall i \in \Ccal, ||\mu_i||^2 = 1$, then $L(f) = G(M)$. 
We will now prove that these conditions are also necessary for equality. If $L(f) = G(M)$, then we must have equality in (\ref{eq:Jensen}) and (\ref{eq:labeljoint}). Equality in (\ref{eq:labeljoint}) can be attained only if $\forall i \in \Ccal$, with probability one (w.p.1) given $y=i$, i.e., under the distribution $q(x,x^+|i)$, we have $f^\top(x)\,f(x^+) = 1$. This is because $\psi(\cdot)$ is a \textit{strictly} increasing function of its arguments,  all the weights $\big(\lambda_i \prod_{t=1}^k \lambda_{j_t}\big)$ are \textit{strictly} positive, and the norms of all representations are bounded by one. Therefore, w.p.1 given $y=i$, we must have $f(x) = f(x^+)$ and $||f(x)|| = 1$. 
Next, equality in the conditional Jensen's inequality (\ref{eq:Jensen}) can be attained only if $\forall i \in \{1:k\}$, w.p.1 given $y,y^-_i$, we have $f^\top(x)f(x^-_i) - f^\top(x)f(x^+) = \mu_y^\top\,\mu_{y^-_i} - \EE[f^\top(x)\,f(x^+)|y]$. This is because $\psi(\cdot)$ is a \textit{strictly} convex function of its arguments and for all label tuples, $p(y,y^-_{1:k}) > 0$. This implies that $\forall i \in \{1:k\}$ and all $j, l \in \Ccal$, w.p.1 given $y = j,y^-_i = l$, we have $f^\top(x)f(x^-_i)  = \mu_j^\top\,\mu_l$ since, as we previously proved, equality in (\ref{eq:labeljoint}) implies that w.p.1 given $y=j$, we must have $f(x) = f(x^+)$ and $||f(x)|| = 1$. Taking $j = l$, we conclude that equality in (\ref{eq:labeljoint}) and (\ref{eq:Jensen}) imply that  $\forall i \in \{1:k\}$ and all $j \in \Ccal$, w.p.1 given $y = y^-_i = j$, we have $f^\top(x)f(x^-_i)  = ||\mu_j||^2$. But $x, x^-_i$ are conditionally iid with distribution $s(\cdot|j)$ given $y = y^-_i = j$. From Lemma~\ref{lemma:varcollapse} in Appendix~\ref{apd:varcollapse}, it then follows that for all $j\in \Ccal$, w.p.1 given $y=j$, $f(x) = \mu_j$, or more compactly, $\forall x\in\Xcal, f(x) = \mu_{y(x)}$. Thus we have shown that the conditions $\forall x \in \Xcal, f(x) = \mu_{y(x)}$ and $\forall i \in \Ccal, ||\mu_i||^2 = 1$ are both sufficient and necessary for the lower bound $G(M)$ to be attained, i.e., for $L(f) = G(M)$.
\phantom{\hfill}
\end{proof}
In the proof of necessity of within-class variance collapse for the attainment of the lower bound in Lemma~\ref{lemma:CLriskLB}, as an intermediate step we first proved that if we have equality in (\ref{eq:labeljoint}), then for each $i \in \Ccal$, w.p.1 given $y=i$, we must have $f(x) = f(x^+)$ and $||f(x)|| = 1$. Without making any additional assumptions on the joint distribution of the positive pair, specifically, $q(x,x^+|y)$, we cannot conclude from here that we must have within-class variance collapse. For example, if $x^+=x$ w.p.1, or if the samples in each class are grouped into non-overlapping pairs and $x,x^+$ are confined to be within a pair. But if, for example, the support of $q(x,x^+|y)$ is the Cartesian product of the supports of $s(x|y)$ and $s(x^+|y)$, then indeed we can conclude within-class variance collapse directly from equality in (\ref{eq:labeljoint}) alone without needing to analyze the conditions for equality in (\ref{eq:Jensen}).

\subsection{Proof of Corollary~\ref{cor:InfoNCElb_infinite_k}}\label{apd:proof_InfoNCElb_infinite_k}
\begin{proof}
For $t = 1:k$, let 
\begin{align*}
    U_t(x,x^+,x^-_t) &:=  e^{f^\top(x)(f(x^-_t)-f(x^+))}, \\
    V_t(y,y^-_t) &:= e^{(\mu^\top_y \mu_{y^-_t})  - 1 }.
\end{align*}
Then, from (\ref{eq:lossdef}), the definition of the InfoNCE loss function in (\ref{eq:InfoNCElossfunction}), and (\ref{eq:Jensen}), (\ref{eq:labeljoint}), and (\ref{eq: thuan101}) we have
\begin{align}
    L(f) &= \EE\left[\EE\bigg[\log\bigg(1+\frac{1}{k}\sum_{t=1}^k U_t(x,x^+,x^-_t)\bigg)\bigg| x,x^+ \bigg]\right], \label{eq:LintermsofU} \\
    G(M) &= \EE\left[\EE\bigg[\log\bigg(1+\frac{1}{k}\sum_{t=1}^k V_t(y,y^-_t)\bigg)\bigg| y \bigg]\right]. \label{eq:GintermsofV}
\end{align}
Since for all $x\in\Xcal$, $\|f(x)\|\leq 1$, it follows from the convexity of the Euclidean norm and Jensen's inequality that for all $j\in \Ccal$, $\|\mu_j\| = \|\EE[f(x)|y=j]\| \leq \EE[\|f(x)\|\, |y=j] \leq 1$ and therefore (by the Cauchy-Schwartz inequality) $|f^\top(x)f(x^-_t)|, |f^\top(x)f(x^+)| \leq 1$. This proves that for all $t = 1:k$, $|U_t|, |V_t| \leq e^2$, i.e., they are bounded random variables. Now, $U_{1:k}|x,x^+$ and $V_{1:k}|y$ are conditionally iid. Thus, by the Strong Law of Large Numbers, their averages converge w.p.1 to their respective conditional expectations, i.e., 
\begin{align}
     \frac{1}{k} \sum_{t=1}^k U_t \underset{k \rightarrow \infty}{\stackrel{\text{w.p.1}}{\longrightarrow}} \EE[U_1|x,x^+] 
    &= \EE\Big[e^{f^\top(x)(f(x^-_1)-f(x^+))}\Big|x,x^+\Big], \label{eq:SLLNforU} \\
    \frac{1}{k} \sum_{t=1}^k V_t \underset{k \rightarrow \infty}{\stackrel{\text{w.p.1}}{\longrightarrow}} \EE[V_1|x,x^+] 
    &= \EE\Big[e^{(\mu^\top_y \mu_{y^-_1})- 1}\Big|y\Big]
    = \sum_{j \in \Ccal_y }  r(j|y)\, e^{(\mu_y^\top \mu_j) - 1}. \label{eq:SLLNforV} 
\end{align}
Since $U_{1:k}$ and $V_{1:k}$ are bounded by $e^2$ so are $(\sum_{t=1}^kU_t)/k$ and  $(\sum_{t=1}^kV_t)/k$. The results (\ref{eq:infinitekL}) and (\ref{eq:infinitekG}) then follow from (\ref{eq:LintermsofU}), (\ref{eq:GintermsofV}), (\ref{eq:SLLNforU}), (\ref{eq:SLLNforV}), the Dominated Convergence Theorem, and the fact that $L(f) \geq G(M)$ proved in Lemma~\ref{lemma:CLriskLB}.
\end{proof}

\subsection{Proof of Theorem~\ref{theorem:non-neg}}\label{apd:proof_of_theorem_non-neg}
\begin{proof}
From Lemma~\ref{lemma:CLriskLB}, $L(f) \geq G(M)$ with equality if, and only if, $\forall x \in \Xcal$, $f(x) = \mu_{y(x)}$ and $\mu_{1:C} \in \Mcal$. For any $u, v \in \RR^d_{\geq 0}$, $u^\top v \geq 0$ with equality only if $u$ and $v$ are orthogonal. In (\ref{eq: thuan101}), $\psi(\cdot)$ is a \textit{strictly} increasing function of its arguments and all the weights $\big(\lambda_i \prod_{t=1}^k \lambda_{j_t}\big)$ are \textit{strictly} positive and sum to one. Therefore, $G(M)$ is minimized over $M \in \Mcal \cap \RR^{d\times C}_{\geq 0}$ if, and only if, $\mu_{1:C}$ are orthonormal. This requires $d \geq C$.  
\end{proof}

\subsection{Proof of Lemma~\ref{lemma: equivalent}} \label{apd:proof_of_lemma-equivalent}
\begin{proof}
If $A := M^\top M$, then clearly, $A = A^\top$ and $A \succcurlyeq 0$ (since for all $u \in \RR^C, u^\top M^\top M u = ||M u||^2 \geq 0$), and $\forall i \in \Ccal, A_{ii} = ||\mu_i||^2 = 1$. Therefore $A = M^\top M \in \Acal^*$. From (\ref{eq: thuan30}) and (\ref{eq:Sdefn}), which define $G(\cdot)$ and $S(\cdot)$ respectively, it follows that $G(M) = S(A) = S(M^\top M)$. 
Therefore, for any $M\in \Mcal$ we have $G(M) = S(M^\top M) \geq S(A^*) = S((M^*)^\top M^*) = G(M^*)$. This shows that $M^*$ is a solution to (\ref{eq: thuan30}). 
\end{proof}

\subsection{Proof of Lemma~\ref{lemma:convexproblem}} \label{apd:proof_lemma_convexproblem}
\begin{proof} 
\textit{Convexity and compactness of $\Acal^*$:} The set $\Acal^*$ is clearly convex, since the set of all symmetric PSD matrices in $\RR^{C\times C}$ satisfying the specified unit diagonal equality constraints is convex. The set $\Acal^*$ is also compact since $\Acal^* \subset \RR^{C\times C}$ and for any $A\in\Acal^*$ and all $i,j\in\Ccal$, $|A_{ij}| \leq |A_{ii}|\cdot|A_{jj}| = 1$, as we prove next.
Since $A$ is real, symmetric, and PSD, by the Real Spectral Theorem it has an eigendecomposition given by $A = U\Sigma U^\top$. If $\sqrt{A} := U\sqrt{\Sigma}U^\top$, where $\sqrt{\Sigma} \in \RR^{C\times C}$ is a diagonal matrix with the square roots of $C$ non-negative eigenvalues of $A$ along the main diagonal, then $\sqrt{A} \cdot \sqrt{A} = A$. If $e_{1:C}$ is the standard basis for $\RR^C$, then  
$|A_{ij}| = |e_i^\top A e_j| = |e_i^\top \sqrt{A} \sqrt{A} e_j| \leq ||\sqrt{A} e_i|| \cdot ||\sqrt{A} e_j|| = \sqrt{e_i^\top \sqrt{A} \sqrt{A} e_i} \cdot \sqrt{e_j^\top \sqrt{A} \sqrt{A} e_j} = \sqrt{e_i^\top A e_i} \cdot \sqrt{e_j^\top A e_j} = A_{ii} \cdot A_{jj} = 1$, where the first inequality is the Cauchy-Schwartz inequality. Thus, for all $i, j \in \Ccal$, we have $|A_{i,j}| \leq 1$. This shows that $\Acal^*$ is a compact set. \newline \\
\textit{Strict convexity of $S(\cdot)$ over $\Acal^*$:} Let $A \in \Acal^*$. In (\ref{eq:Sdefn}), for all $i, j_{1:k}\in \Ccal$, the $k$-tuples $\big( A_{ij_1}-1, \dots, A_{ij_k}-1 \big)$ are linear functions of $A$ and the weights $\big(\lambda_i \prod_{t=1}^k \lambda_{j_t}\big)$  are all non-negative (in fact, they are all strictly positive). Since the function $\psi(\cdot)$ is convex (in fact, it is strictly convex), and $S(A)$ is a positive linear combination of convex functions of linear functions of $A$, it follows that $S(A)$ is a convex function of $A$. To prove that $S(\cdot)$ is strictly convex over $\Acal^*$,  let $A, B \in \Acal^*, A\neq B$. Since $\forall i \in \Ccal$, $A_{ii} = B_{ii} = 1$, we must have $A_{ij} \neq B_{ij}$ for at least one $i\neq j, i, j \in \Ccal$. For any $t \in (0,1)$, let $W := (1-t)A + t B$. Then, $W \in \Acal^*$ since $\Acal^*$ is a convex set and $A, B \in \Acal^*$, and $\forall i \in \Ccal, W_{ii} = (1-t)A_{ii} + tB_{ii} = 1$. Since $\psi(\cdot)$ is a convex function of its arguments, for all tuples $(i,j_{1:k}) \in \Ccal^{k+1}$, we will have
\begin{align*}
(1-t)\,\psi \big( A_{ij_1}-&A_{ii}, \dots, A_{ij_k}-A_{ii} \big) + t\,\psi \big(B_{ij_1}-B_{ii}, \dots, B_{ij_k}-B_{ii} \big) \\
&= (1-t)\,\psi \big( A_{ij_1}-1, \dots, A_{ij_k}-1 \big) + t\,\psi \big( B_{ij_1}-1, \dots, B_{ij_k}-1 \big) \\
&\geq \psi \big( (1-t)\,(A_{ij_1} - 1) + t\,(B_{ij_1} -1), \dots, (1-t)\,(A_{ij_k} - 1) + t\,(B_{ij_k} -1)  \big)  \\
&= \psi \big( W_{ij_1} -1, \dots, W_{ij_k} - 1 \big)  \\
&= \psi \big( W_{ij_1} - W_{ii}, \dots, W_{ij_k} - W_{ii} \big)
\end{align*}
and the inequality is strict for at least one tuple $(i,j_{1:k}) \in \Ccal^{k+1}$ because $\psi(\cdot)$ is a \textit{strictly} convex function of its arguments, $A \neq B$, and $t \notin \{0,1\}$. Since the weights $\big(\lambda_i \prod_{t=1}^k \lambda_{j_t}\big)$ in (\ref{eq:Sdefn}) are all strictly positive, it follows that $S(\cdot)$ is a strictly convex function over $\Acal^*$. 
\end{proof}

\subsection{Proof of Lemma~\ref{lemma:rank_of_opt_Gram_mtx}}
\label{apd:proof_rank_of_opt_Gram_mtx}
\begin{proof}
Let $\underline{\nu}(\cdot)$ denote the minimum eigenvalue of a matrix. We will prove that $\underline{\nu}(A^*) = 0$. For all $t > 0$, let 
\[
B(t) := A^* - t\,\bfones \bfones^\top + t\, I
\]
where $\bfones$ is the $C\times 1$ vector of all ones and $I$ is the $C\times C$ identity matrix. For all $t$, $B(t)$ is symmetric since $A^*, \bfones \bfones^\top$, and $I$ are symmetric matrices. For all $i\in \Ccal$, $B_{ii}(t) = A^*_{ii} - t + t = 1$ and for all $i, j \in \Ccal, i \neq j$, $B_{ij}(t) = A^*_{ij} - t + 0 < A^*_{ij}$. Since $\psi(\cdot)$ is a \textit{strictly} increasing function of all its arguments and all the weights $\lambda_i \prod_{t=1}^k (\lambda_{j_t}/(1-\lambda_i))$ in (\ref{eq:Sdefn}) are strictly positive, it follows that $S(B) < S(A^*)$. We now show that if $\underline{\nu}(A^*) > 0$, then $B(t)$ is PSD for $t = t' := \tfrac{\underline{\nu}(A^*)}{2(C-1)}$. This would imply that $B(t') \in \Acal^*$ and contradict the optimality of $A^*$.  By the Courant-Fischer min-max theorem,
\begin{align}
    \underline{\nu}(B(t)) &= \min_{u\neq 0} \frac{u^\top B(t) u}{||u||^2} 
    = \min_{u\neq 0} \frac{u^\top (A^* - t\,\bfones \bfones^\top + t\, I) u}{||u||^2} 
    = \min_{u\neq 0} \frac{u^\top A^* u  - t\,(u^\top\bfones)^2  + t||u||^2}{||u||^2} \nonumber \\    
    &\geq \min_{u\neq 0} \frac{u^\top A^* u  - t\,C ||u||^2  + t||u||^2}{||u||^2} \label{eq:Cauchy-Schwartz} \\ 
    &= \min_{u\neq 0} \frac{u^\top A^* u}{||u||^2}  - (C-1)t  
    =  \underline{\nu}(A^*) - (C-1)t, \nonumber
\end{align}
where (\ref{eq:Cauchy-Schwartz}) is due to the Cauchy-Schwartz inequality. Therefore, $\underline{\nu}(B(t')) \geq \tfrac{\underline{\nu}(A^*)}{2}$. Thus, if $\underline{\nu}(A^*) > 0$, then $\underline{\nu}(B(t')) > 0$ which would make $B(t')$ a PSD matrix and contradict the optimality of $A^*$. We must therefore conclude that $\underline{\nu}(A^*) = 0$ which implies that $\text{rank}(A^*) < C$. 
\end{proof}

\subsection{Proof of Theorem~\ref{theorem: 1}}\label{apd:proof_of_theorem-opt_class_means}
\begin{proof}
Lemma~\ref{lemma:rank_of_opt_Gram_mtx} proved that (\ref{eq: thuan30}) has a unique solution $A^*$ in $\Acal^*$ with rank $r$ less than or equal to $C-1$. Since $A^*$ is also a real, symmetric, PSD matrix, by the Real Spectral Theorem, it has a reduced eigen-decomposition given by $A^* = U_r \Sigma_r U_r^\top$. For all $d \geq C-1$, the matrix  $(M^*)^\top := [ U_r \sqrt{\Sigma_r}\ \ \ 0_{\text{\tiny{\(C \times d-r+1\)}}}]$ is well defined and 
$(M^*)^\top M^* = U_r\sqrt{\Sigma} (\sqrt{\Sigma})^\top U_r^\top  + 0_{\text{\tiny{\(C \times d-r+1\)}}} 0_{\text{\tiny{\(C \times d-r+1\)}}}^\top =  U_r \Sigma_r U_r^\top = A^*$. From Lemma~\ref{lemma: equivalent} it follows that $M^*$ is a solution to (\ref{eq: thuan30}). Moreover, for all $i\in \Ccal$, we have $||\mu^*_i||^2 = A^*_{ii} = 1$.
\end{proof}

\subsection{Proof of Theorem~\ref{theorem:equiangular-properties}}\label{apd:proof_of_theorem_equiangular-properties}
\begin{proof}
The key idea of the proof is to show that if we swap $\mu_i^*$ and $\mu_j^*$ in $M^*$ to form a new matrix $Q$, then $S(Q^\top Q) = S({M^*}^\top M^*)$. By construction, the gram matrix $B := Q^\top Q \in \Acal^*$ since $A^* = {M^*}^\top M^* \in \Acal^*$. Since the optimal Gram matrix is unique,  $B =  Q^\top Q =  {M^*}^\top M^* = A^*$ and therefore for all $n \in \Ccal \setminus \{i,j\}$, we must have $B_{jn} = {\mu_i^*}^\top \mu_n^* = A^*_{jn} = {\mu_{j}^*}^\top \mu_{n}^*$. \newline
It remains to show that $S(Q^\top Q) = S( {M^*}^\top M^*)$, i.e., $S(A^*) = S(B)$. To this end, let $\sigma: \Ccal \rightarrow \Ccal$ denote the bijection (specifically, a transposition permutation) where $\sigma(i) = j, \sigma(j) = i$, and for all $n \in \Ccal\setminus\{i,j\}, \sigma(n) = n$. Then, $\sigma(\cdot)$ is its own inverse, i.e., $\forall n \in \Ccal,\,\sigma(\sigma(n)) = n$. For notational convenience, let primed-indices denote the image under $\sigma(\cdot)$, i.e., $n' := \sigma(n)$. 
By construction of $Q$ and the definition of $\sigma(\cdot)$, we have
\begin{align}
\forall j_1, j_2 \in \Ccal,\, A^*_{j'_1j'_2} 
= B_{j_1j_2}. \label{eq:BintermsofA}
\end{align}
Since $\lambda_i = \lambda_j$, it follows from the definition of $\sigma(\cdot)$ that
\begin{align}
    \forall n \in \Ccal,\, \lambda_{n'} = \lambda_{\sigma(n')} = \lambda_{n}. \label{eq:lambdaprimes}
\end{align}
Therefore,
\begin{align}
    S(A^*) &=  \sum_{j_0, j_{1:k} \in \Ccal}
{ \left( \lambda_{j_0}  \prod_{t=1}^k \lambda_{j_t}  \right) }  \psi  \big( A^*_{j_0j_1}-A^*_{j_0j_0}, \dots, A^*_{j_0j_k}-A^*_{j_0j_0} \big). \label{eq:duetoSdefn} \\
&= \sum_{j'_0, j'_{1:k} \in \Ccal}
{ \left( \lambda_{j'_0}  \prod_{t=1}^k \lambda_{j'_t} \right) }  \psi  \big( A^*_{j'_0j'_1}-A^*_{j'_0j'_0}, \dots, A^*_{j'_0j'_k}-A^*_{j'_0j'_0} \big). \label{eq:duetoequivalentindexsets} \\
&= \sum_{j'_0, j'_{1:k} \in \Ccal}
{ \left( \lambda_{j_0}  \prod_{t=1}^k \lambda_{j_t} \right) }  \psi  \big( B_{j_0j_1}-B_{j_0j_0}, \dots, B_{j_0j_k}-B_{j_0j_0} \big). \label{eq:duetoBintermsofA} \\
&= \sum_{j_0, j_{1:k} \in \Ccal}
{ \left( \lambda_{j_0}  \prod_{t=1}^k \lambda_{j_t} \right) }  \psi  \big( B_{j_0j_1}-B_{j_0j_0}, \dots, B_{j_0j_k}-B_{j_0j_0} \big). \label{eq:duetoequivalentindexsets2}  \\
&= S(B), \label{eq:SofB}
\end{align}
where (\ref{eq:duetoSdefn}) follows from the definition of $S(\cdot)$ in (\ref{eq:Sdefn}), equality (\ref{eq:duetoequivalentindexsets}) holds because $\sigma(\cdot)$ is a bijection, (\ref{eq:duetoBintermsofA}) is due to (\ref{eq:BintermsofA}) and (\ref{eq:lambdaprimes}),  equality (\ref{eq:duetoequivalentindexsets2}) holds because $\sigma(\cdot)$ is a bijection, and (\ref{eq:SofB}) again follows from the definition of $S(\cdot)$ in (\ref{eq:Sdefn}).
\end{proof}

\subsection{Proof of Corollary~\ref{cor: 1}}\label{apd:proof_of_corollary-1}
\begin{proof}
The Corollary follows directly by applying the result in Theorem~\ref{theorem:equiangular-properties} to different pairs of $(i,j) \in \Ccal'$ as follows. If $\Ccal'$ contains only two classes, then the proof is immediate. If $\Ccal'$ contains more than two classes, consider any three distinct classes $i,j,n \in \Ccal'$. Then, from Theorem~\ref{theorem:equiangular-properties} we have (1) ${\mu_n^*}^\top \mu_j^* = {\mu_n^*}^\top \mu_i^*$ since $\lambda_i = \lambda_j$ and (2) ${\mu_i^*}^\top \mu_j^* = {\mu_n^*}^\top \mu_j^*$ since $\lambda_i = \lambda_n$.  Therefore, ${\mu_i^*}^\top \mu_j^*={\mu_n^*}^\top \mu_j^*={\mu_n^*}^\top \mu_i^*$. In other words, any pair of class means has the same inner product.  
\end{proof}

\subsection{Proof of Corollary~\ref{cor:ETF}}\label{apd:proof_ETF}
\begin{proof}
If $\Ccal' = \Ccal$ in Corollary~\ref{cor: 1}, then for all $i, j \in \Ccal, i\neq j$, we have ${\mu_i^*}^\top \mu_j^* = b$ for some constant $b$. This implies that $A^* \in \Acal^*$ has the following form 
\begin{align}
A^*&= (1-b) I + b \bfones\,\bfones^\top = 
\begin{bmatrix}
1 & b & b & \cdots & b \\
b & 1 & b & \cdots & b \\
\vdots & \vdots & \vdots & \ddots & \vdots \\
b & b & b & \cdots & 1
\end{bmatrix} 
\in \RR^{C \times C}, 
\label{eq:ETFAstarform}
\end{align}
where $I$ is the $C \times C$ identity matrix and $\bfones \in \RR^C$ is the all-ones column vector. A matrix $A^*$ having the above form has $(C-1)$ eigenvalues equal to $(1-b)$ and one eigenvalue equal to $(C-1)b+1$. Since $A^*$ is PSD, $b \in [-1/(C-1), 1]$. By Lemma~\ref{lemma:rank_of_opt_Gram_mtx}, the smallest eigenvalue of $A^*$ is zero which implies that either $1-b = 0 \Rightarrow b = 1$ or $(C-1)b+1 = 0 \Rightarrow b = -1/(C-1)$. For both choices of $b$, $A^*$ is PSD, but for the choice $b = -1/(C-1)$ (the smaller choice), the value of $S(A^*)$ is smaller because for $A^*$ having the form in (\ref{eq:ETFAstarform}), 
\[
S(A^*) = \sum_{i, j_{1:k} \in \Ccal}
{ \left( \lambda_i  \prod_{t=1}^k \lambda_{j_t} \right) }  \psi  \big( (b-1)1(j_1 \neq i), \dots, (b-1)1(j_k \neq i) \big),
\]
where $1(\cdot)$ is the indicator function, and $\psi$ is a strictly increasing function of all it arguments. Thus $b = -1/(C-1)$. Finally, $||\sum_{i\in\Ccal}\mu^*_i||^2 = \sum_{i\in \Ccal}||\mu^*_i||^2 + \sum_{i\neq j, i, j \in \Ccal} (\mu^*_i)^\top \mu^*_j = C -C(C-1)/(C-1) = 0$.
\end{proof}

\subsection{Structure of Optimum $A^*$}\label{apd:Astarform}
\begin{lemma}\label{lemma:Astarform}
Let $C\geq 3$ and $1 > \lambda_1 > \lambda_2=\lambda_3=\ldots=\lambda_C = \tfrac{1-\lambda_1}{C-1}>0$. Then 
\begin{align}
A^*&=
\begin{bmatrix}
1 & a & a & \cdots & a \\
a & 1 & b & \cdots & b \\
a & b & 1 & \cdots & b \\
\vdots & \vdots & \vdots & \ddots & \vdots \\
a & b & b & \cdots & 1
\end{bmatrix} 
\in \RR^{C \times C}, \label{eq:Astarform}
\end{align}
with $a\in [-1,1]$ and $b = (a^2(C-1)-1)/(C-2)$.
\end{lemma}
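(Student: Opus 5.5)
Classes $2,\dots,C$ all have probability $\tfrac{1-\lambda_1}{C-1}$, so we apply Corollary~\ref{cor: 1} with $\Ccal'=\{2,\dots,C\}$. This gives a constant $b$ with $A^*_{ij}=b$ for all distinct $i,j\geq 2$. Theorem~\ref{theorem:equiangular-properties}, applied to each pair $i,j\in\{2,\dots,C\}$ with $n=1$, gives ${\mu_i^*}^\top\mu_1^*={\mu_j^*}^\top\mu_1^*$, so $A^*_{1i}=a$ for all $i\ge 2$. Because $A^*\in\Acal^*$, the diagonal is all ones and $|a|\le 1$; the latter follows from the entrywise bound proved in Lemma~\ref{lemma:convexproblem}. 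This establishes the displayed form (\ref{eq:Astarform}), and it remains to pin down $b$ in terms of $a$.

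For that we use Lemma~\ref{lemma:rank_of_opt_Gram_mtx}: the minimum eigenvalue of $A^*$ is zero. We compute the spectrum of the bordered matrix. Any vector $u=(0,w)$ with $w\perp\bfones_{C-1}$ is an eigenvector with eigenvalue $1-b$, and this accounts for $C-2$ eigenvalues. The other two eigenvalues are those of the $2\times 2$ restriction to $\text{span}\{e_1,(0,\bfones_{C-1})\}$. In the orthonormal basis $e_1$, $(0,\bfones)/\sqrt{C-1}$ this restriction is
\[
\begin{bmatrix} 1 & a\sqrt{C-1}\\ a\sqrt{C-1} & 1+(C-2)b\end{bmatrix},
\]
with determinant $1+(C-2)b-a^2(C-1)$. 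Positive semidefiniteness requires $1-b\ge 0$ and this determinant to be nonnegative, i.e.\ $b\ge (a^2(C-1)-1)/(C-2)$.

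Zero must be an eigenvalue, so either $b=1$ or the determinant vanishes. The main obstacle is excluding $b=1$, or showing that it is consistent with the formula. Suppose $b=1$. PSD then forces $a^2(C-1)\ge C-1$, so $a=\pm1$, and then $(a^2(C-1)-1)/(C-2)=1=b$ anyway. So the formula $b=(a^2(C-1)-1)/(C-2)$ holds in every case, using the earlier step in which $\underline\nu(A^*)=0$.

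Alternatively, and more robustly, we can argue by optimality. $S$ is strictly increasing in the off-diagonal entries $b$, which appear as $A_{ij_t}-1$ arguments with positive weights. Hence, for fixed $a$, decreasing $b$ down to its PSD lower bound $(a^2(C-1)-1)/(C-2)$ stays within $\Acal^*$ and strictly decreases $S$ unless $b$ is already at that bound. Uniqueness of $A^*$ from Lemma~\ref{lemma:convexproblem} then gives the equality. We would present this monotonicity argument as the main line, mirroring the proof of Lemma~\ref{lemma:rank_of_opt_Gram_mtx} and of Corollary~\ref{cor:ETF}. The eigenvalue computation is routine and only needs care with the $2\times2$ block.
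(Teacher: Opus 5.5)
Your monotonicity argument is correct, but the first argument you give for excluding $b=1$ is wrong and should be deleted. With $b=1$ the determinant of your $2\times 2$ block is $1+(C-2)-a^2(C-1)=(C-1)(1-a^2)$, which is nonnegative for every $a\in[-1,1]$. Equivalently, your condition $b\ge (a^2(C-1)-1)/(C-2)$ reads $a^2\le 1$ when $b=1$, not $a^2\ge 1$; the inequality got reversed. So every matrix of the form (\ref{eq:Astarform}) with $b=1$ and $|a|<1$ lies in $\Acal^*$. Geometrically, these are the configurations where all minority means coincide at an arbitrary angle to $\mu_1^*$. Such a matrix is also singular, since $1-b=0$ is an eigenvalue of multiplicity $C-2$. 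Lemma~\ref{lemma:rank_of_opt_Gram_mtx} therefore cannot rule this branch out, and the claim that the formula for $b$ follows from $\underline{\nu}(A^*)=0$ alone is false. Optimality is necessary here, not merely more robust.

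The monotonicity argument supplies that optimality step. For fixed $a\in[-1,1]$, the feasible values of $b$ form exactly the interval $[(a^2(C-1)-1)/(C-2),\,1]$. It is nonempty because $a^2\le 1$, and the lower endpoint makes the determinant zero while keeping $1-b\ge 0$. The entry $b$ enters $S$ through tuples $(i,j_{1:k})$ with $i\ge 2$ and some $j_t\ge 2$, $j_t\neq i$. These tuples carry strictly positive weight when $C\ge 3$, so lowering $b$ strictly decreases $S$. You only need optimality of $A^*$ for this, not its uniqueness.

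This is a genuinely different route from the paper's. The paper uses the spectrum together with Lemma~\ref{lemma:rank_of_opt_Gram_mtx} to reduce to the branches $b=1$ or $b=(a^2(C-1)-1)/(C-2)$, with a separate case analysis for $a=0$. It then uses monotonicity only to discard the branch $a=0,\ b=1$. When it combines the cases, it drops the branch $a\neq 0,\ b=1$ without comment, which is exactly the branch your first argument mishandles. Your approach bypasses the rank lemma entirely, since the vanishing determinant re-derives singularity for this particular structure. It also treats every $a$ uniformly, so it closes that gap in the paper's proof. The paper's case analysis additionally yields explicit eigenvalue formulas, but your $2\times 2$ block computation gives those as well.
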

The form of $A^*$ in (\ref{eq:Astarform}) follows from Theorem~\ref{theorem:equiangular-properties} and Corollary~\ref{cor: 1}. The condition on $b$ follows from the rank deficiency of $A^*$ proved in Lemma~\ref{lemma:rank_of_opt_Gram_mtx}. This requires a careful analysis of the eigenstructure of PSD matrices having the form in (\ref{eq:Astarform}). The detailed proof is presented below. 

\begin{proof}
From Theorem~\ref{theorem:equiangular-properties} and Corollary~\ref{cor: 1}, it follows that 
$\forall i \in \Ccal\setminus\{1\},\, {\mu^*_i}^\top \mu^*_1 = a$ and $\forall i,j \in \Ccal\setminus\{1\},\,i\neq j,\, {\mu^*_i}^\top \mu^*_j = b$ for some constants $a, b \in [-1,1]$. Thus, $A^* \in \RR^{C \times C}$ is of the form
\begin{align}
A^*&=
\begin{bmatrix}
1 & a & a & \cdots & a \\
a & 1 & b & \cdots & b \\
a & b & 1 & \cdots & b \\
\vdots & \vdots & \vdots & \ddots & \vdots \\
a & b & b & \cdots & 1
\end{bmatrix}. \label{eq:appAstarform}
\end{align}
Since $A^* \in \Acal^*$, it is PSD and all its eigenvalues are non-negative. From Lemma~\ref{lemma:rank_of_opt_Gram_mtx}, the minimum eigenvalue of $A^*$ is zero. We will show that this implies either $a =0$ and $b=1$ or $a\in [-1,1]$ and $b = (a^2(C-1)-1)/(C-2)$.

To this end, let $\bfones \in \RR^C$ denote the all-ones column vector and $e_1 \in \RR^C$ the standard basis vector whose first component is one and the remaining components are zero. Let $u:= \bfones - e_1$. Then, $u \perp e_1$ and 
\begin{align}
    A^* = (1-b)\,I + b\,u\,u^\top + b\,e_1\,e_1^\top + a\,e_1\,u^\top + a\,u\,e_1^\top, \label{eq:Astaradditivedecomposition}
\end{align}
where $I$ is the $C\times C$ identity matrix. Let $v_{1:C}$ be any orthonormal basis for $\RR^C$ with $v_1 := e_1$, $v_2 := u/||u||$, and $v_{3:C} \in \text{Span}^\perp(e_1,u)$. Then using (\ref{eq:Astaradditivedecomposition}), it follows that for all $i \geq 3$, 
\[
A^*v_i = (1-b) v_i = 0.
\]
This shows that $v_{3:C}$ are $(C-2)$ orthonormal eigenvectors of $A^*$ with eigenvalue $(1-b)$. The remaining two eigenvectors of $A^*$ must therefore belong to  $\text{Span}(e_1,u)$. Let $v = \alpha e_1 + \beta u$ be an eigenvector of $A^*$ in $\text{Span}(e_1,u)$ with eigenvalue $\nu \geq 0$. Then $v = (\alpha\ \beta \ldots \beta)^\top$ and either $\alpha \neq 0$ or $\beta \neq 0$ because, by definition, an eigenvector is a non-zero vector. Since $A^*\,v = \nu\,v$ and $A^*$ has the form shown in (\ref{eq:appAstarform}), we have
\begin{align}
    \alpha + (C-1)\,a\,\beta = \nu\,\alpha &\Rightarrow  (C-1)\,a\,\beta = -(1-\nu)\,\alpha \label{eq:alphabetaeq1} \\
    a\,\alpha + \beta + (C-2)\,b\,\beta = \nu\,\beta &\Rightarrow \beta((1-\nu)+(C-2)b) = -a\,\alpha \label{eq:alphabetaeq2}
\end{align}

\noindent\textbf{Case $a=0$.} Then, $b\neq 0$ since otherwise we would have $A^* = I$ which has $C$ eigenvalues all equal to one and this would contradict the result of Lemma~\ref{lemma:rank_of_opt_Gram_mtx}. With $a=0, b\neq 0$, (\ref{eq:alphabetaeq1}) would imply that $(1-\nu)\,\alpha = 0$ which would imply that either $\nu = 1$ or $\alpha=0$. If $\nu = 1$, then (\ref{eq:alphabetaeq2}) together with $a = 0$ and $b\neq 0$ would imply that $\beta=0$ which would, in turn, imply that $\alpha \neq 0$ since both $\alpha$ and $\beta$ cannot be simultaneously zero. Thus, when $a=0$, one eigenvalue is $\nu=1$ with eigenvector given by $\alpha \neq 0, \beta = 0$. If $a = 0$ and we have $\nu \neq 1$, then $\alpha = 0$, $\beta \neq 0$, and $(1-\nu) + (C-2)b=0 \Rightarrow \nu = 1 + (C-2)b$. In summary, if $a=0$ then $b\neq 0$ and $A^*$ would have $(C-2)$ eigenvalues equal to $(1-b)$, one eigenvalue equal to $1$, and one eigenvalue equal to $1 + (C-2)b$. Since the smallest eigenvalue of $A^*$ is zero, this would imply that either $b=1$ or $b = -1/(C-2)$. 

\noindent \textbf{Case $a\neq 0$.} In this case we must have $\nu \neq 1$ because otherwise (\ref{eq:alphabetaeq1}) and $C\geq 3$ would imply that $\beta = 0$ and then (\ref{eq:alphabetaeq2}) would imply that $\alpha = 0$ which would contradict the assumption that both $\alpha$ and $\beta$ cannot be zero simultaneously. Thus, $\nu \neq 1$. Then, (\ref{eq:alphabetaeq1}) would imply that $\alpha = -(C-1)a\,\beta/(1-\nu)$. Substituting this into (\ref{eq:alphabetaeq2}) gives us 
\[
 \beta((1-\nu)+(C-2)b) = \beta\,\frac{(C-1)\,a^2}{(1-\nu)} \Rightarrow (1-\nu)^2 + (C-2)\,b\,(1-\nu) - (C-1)\,a^2 = 0, 
\]
where we could cancel the common factor $\beta$ in the first equation because $\beta \neq 0$ (if $\beta = 0$ then with $\nu \neq 1$, (\ref{eq:alphabetaeq1}) would imply that $\alpha = 0$, a contradiction). Solving for the roots of the quadratic equation in $(1-\nu)$ we get
\begin{align}
    \nu = 1 + \frac{(C-2)\,b}{2} \pm \sqrt{\frac{(C-2)^2b^2}{4} + (C-1)a^2} \label{eq:quadraticequation}
\end{align}
In summary, if $a\neq 0$, then $A^*$ would have $(C-2)$ eigenvalues equal to $(1-b)$ and two  eigenvalues given by (\ref{eq:quadraticequation}). Since the smallest eigenvalue of $A^*$ is zero, this would imply that either $b=1$ or 
\begin{align}
1 + \frac{(C-2)\,b}{2} - \sqrt{\frac{(C-2)^2b^2}{4} + (C-1)a^2} = 0 \Rightarrow b = \frac{(C-1)\,a^2 - 1}{(C-2)}. \label{eq:bintermsofa}    
\end{align}

Observe that if we substitute $a=0$ into the expression for $b$ in terms of $a$ given by (\ref{eq:bintermsofa}), we get $b = -1/(C-2)$, which is consistent with one of the two possibilities that we obtained when we previously analyzed the case $a=0$. Combining the analysis of both cases, we conclude that we must have either $a = 0, b=1$ or $a\in [-1,1],\,b = ((C-1)\,a^2-1)/(C-2)$. 

Since $\psi(\cdot)$ is a \textit{strictly} increasing function of all its arguments and all the weights $\lambda_i \prod_{t=1}^k (\lambda_{j_t}/(1-\lambda_i))$ in (\ref{eq:Sdefn}) are strictly positive, $S(A^*)$ will have a strictly smaller value when $a = 0, b = -1/(C-1)$ than when $a = 0, b = 1$. Therefore, we must have $a\in [-1,1]$, $b = ((C-1)\,a^2-1)/(C-2)$.
\end{proof}

\color[rgb]{0.0,0.5,1.0}
\color{black}
%
\subsection{Subgradients of strictly convex and argument-wise strictly increasing functions}\label{apd:subgradients}
\begin{lemma}\label{lemma:subgradients}
Let $\psi:\RR^k \longrightarrow \RR$ be strictly convex and argument-wise strictly increasing. Then for all $v \in \RR^k$, the subdifferential set $\partial \psi (v)$ is non-empty, convex, and compact. 
Moreover, if $\Vcal := [-2,0]^k$, then $\Scal(\psi) := \cup_{v\in \Vcal} \partial \psi(v)$ is bounded and $\psi$ is Lipschitz over $\Vcal$. Specifically, if   
\begin{align}
\Delta_2 := \underset{w\,\in\,\Scal(\psi)}{\sup}\,||w||_2, \quad \text{then } 
\Delta_2 < \infty, \quad 
\Scal(\psi) \subseteq (0, \Delta_2]^k, \text{ and } \nonumber \\
\forall\, v, v' \in \Vcal,\quad |\psi(v)-\psi(v')| \leq \Delta_2\, ||v-v'||_2. \label{eq:psiLipschitz} 
\end{align}
For all $u \in \RR^k_{\geq 0}\setminus\{\bfzero\}$ and all $t \in [-2,0]$, let $\phi_u(t) := \psi(t\,u)$. Then, 
\begin{align}
\forall\, u \in \RR^k_{\geq 0}\setminus\{\bfzero\},\,\exists\,\delta_u \in (0, \infty)\,:\, \forall\, t, t' \in [-2,0],\, t'\leq t,\quad  (t - t')\,\delta_u \leq \phi_u(t) - \phi_u(t'). \label{eq:phislopelowerbound}    
\end{align}
If $u = \bfzero$, then for all $t$, $\phi_{\bfzero}(t) = \psi(\bfzero)$ and we define $\delta_{\bfzero} := 0$. If $\nabla \psi(v)$ exits for all $v\in \Vcal$, then $\Delta_2 = \sup_{v\in \Vcal} ||\nabla \psi(v)||_2$ and $\forall\, u \in \RR^k_{\geq 0}$,  $\delta_u =  u^\top \nabla\psi (-2u)$.
\end{lemma}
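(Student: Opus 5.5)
The plan is to rely only on standard facts about finite convex functions on $\RR^k$, plus monotonicity. \textbf{Subdifferentials.} Since $\psi$ is convex and finite everywhere on $\RR^k$, it is continuous, and for every $v$ the set $\partial\psi(v)$ is non-empty, convex and compact (Bertsekas, Convex Optimization Theory). Convexity and closedness follow directly from the definition, since $\partial\psi(v)$ is an intersection of closed half-spaces. Boundedness follows because $\psi$ is locally Lipschitz near $v$.

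\textbf{Positivity of subgradients.} Take $w\in\partial\psi(v)$. The subgradient inequality with $v' = v - s e_i$, $s>0$, gives
\[
\psi(v-se_i)\ \ge\ \psi(v) - s\,w_i .
\]
Argument-wise strict increase gives $\psi(v-se_i)<\psi(v)$. Together these yield $-s w_i \le \psi(v-se_i)-\psi(v) < 0$, so $w_i>0$.

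\textbf{Boundedness over $\Vcal$ and the Lipschitz bound.} The union $\Scal(\psi)=\cup_{v\in\Vcal}\partial\psi(v)$ is bounded because the subdifferential map is locally bounded and $\Vcal$ is compact; alternatively, the standard result says $\cup_{v\in K}\partial\psi(v)$ is bounded for compact $K$. Hence $\Delta_2<\infty$ and $\Scal(\psi)\subseteq(0,\Delta_2]^k$. For $v,v'\in\Vcal$, pick $w\in\partial\psi(v)$ and $w'\in\partial\psi(v')$. Then
\[
w'^\top(v-v')\ \le\ \psi(v)-\psi(v')\ \le\ w^\top(v-v'),
\]
and Cauchy--Schwarz gives (\ref{eq:psiLipschitz}).

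\textbf{Lower slope of $\phi_u$.} Fix $u\ge 0$, $u\neq\bfzero$. Then $\phi_u$ is convex on $[-2,0]$, and every subgradient of $\phi_u$ at $t$ has the form $u^\top w$ with $w\in\partial\psi(tu)$ (chain rule for an affine map). Convexity makes the slopes nondecreasing, so for $t'\le t$,
\[
\phi_u(t)-\phi_u(t')\ \ge\ (t-t')\,g_{-2},
\]
where $g_{-2}$ is any subgradient of $\phi_u$ at $t'=-2$, or more precisely the right derivative there. Set
\[
\delta_u := \min_{w\in\partial\psi(-2u)} u^\top w .
\]
This minimum is attained because $\partial\psi(-2u)$ is compact. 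It is strictly positive because every $w$ has positive entries and $u\ge0$, $u\neq 0$. This minimum bounds the right derivative of $\phi_u$ at $-2$ from below, and the right derivative at $-2$ lower-bounds all difference quotients on $[-2,0]$.

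\textbf{Differentiable case and main obstacle.} If $\psi$ is differentiable on $\Vcal$, then $\partial\psi(v)=\{\nabla\psi(v)\}$, which gives $\Delta_2=\sup_{v\in\Vcal}\|\nabla\psi(v)\|_2$ and $\delta_u=u^\top\nabla\psi(-2u)$. The main obstacle is the careful justification of the one-sided derivative and the chain rule at the endpoint $t=-2$. To avoid it, I would argue directly: the difference quotient $(\phi_u(t)-\phi_u(t'))/(t-t')$ is nondecreasing in both $t$ and $t'$. So it is at least $\lim_{s\downarrow0}(\phi_u(-2+s)-\phi_u(-2))/s = \psi'(-2u;u) = \max_{w\in\partial\psi(-2u)}u^\top w$, which is at least the minimum above.
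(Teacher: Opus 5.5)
Your proposal is correct and follows essentially the same route as the paper. It cites Bertsekas for the subdifferential facts and the boundedness of $\Scal(\psi)$, uses the subgradient inequality at $v-se_i$ for positivity, the two-sided subgradient inequality plus Cauchy--Schwarz for the Lipschitz bound, and monotone chord slopes of the convex function $\phi_u$ for $\delta_u$. Your $\delta_u=\min_{w\in\partial\psi(-2u)}u^\top w$ is in fact the same number as the paper's $\sup_{\epsilon>0}[\phi_u(-2)-\phi_u(-2-\epsilon)]/\epsilon$, namely the left derivative of $\phi_u$ at $-2$, so the formula $\delta_u=u^\top\nabla\psi(-2u)$ in the differentiable case comes out identically.
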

The proof essentially follows from standard results in convex optimization theory, the fact that $\psi$ is argument-wise strictly increasing, and the definition of subgradients and subdifferentials. The detailed proof is presented below. 
%
\begin{proof} Proposition~5.4.2 in \citep{bertsekas2010convex} and Proposition~B.24 in Appendix~B of \citep{Bertsekas2002} prove that the subdifferential set $\partial \psi(v)$ at any point $v\in \RR^k$ of any real-valued convex function $\psi: \RR^k \longrightarrow \RR$, is non-empty, convex, and compact. Moreover, the union of subdifferential sets of all points belonging to any non-empty compact set $\Vcal$ is also bounded, i.e., $\cup_{v \in \Vcal} \partial \psi(v)$ is bounded. 

In the lemma, we have $\Vcal = [-2,0]^k$ which is a non-empty compact set.
Therefore, $\Scal(\psi) := \cup_{v \in \Vcal} \partial \psi(v)$ is  bounded and $\Delta_2 := \sup_{w \in \Scal(\psi)}||w||_2 < \infty$. For any vector $w \in \RR^k$ we have $||w||_{\infty} \leq ||w||_2$. This implies that for all $w \in \Scal(\psi)$, we have $||w||_{\infty} \leq \Delta_2$.
Since $\psi$ is also strictly increasing over $\RR^k$, all components of any subgradient vector at any point are strictly positive. Specifically, for all $v \in \Vcal$, all subgradients $w\in \partial \psi(v)$,  all $i\in \Ccal$, and all $t > 0$, we have (by the definition of a subgradient)
\[
-t(e_i^\top w) + \psi(v)  \leq  \psi(v-t\,e_i)
\]
where $e_i$ is the $i^{\text{th}}$ standard basis vector of $\RR^k$. Thus, the  $i^{\text{th}}$ component of $w$ is bounded from below as follows
\[
(e_i^\top w) \geq \frac{\psi(v) - \psi(v-t\,e_i)}{t} > 0
\]
where the last inequality is strict since $\psi$ is argument-wise strictly increasing and $t > 0$.   
Therefore, we conclude that $\Scal(\psi) \subseteq (0,\Delta_2]^k$. 

Next, for all $v, v \in [-2,0]^k$, all $w \in \partial\psi(v)$, and all $w' \in \partial\psi(v')$, by the definition of a subgradient, the fact that $||w||_2, ||w'||_2 \leq \Delta_2 < \infty$, and the Cauchy-Schwartz inequality, we have
\begin{align*}
-\Delta_2 ||v-v'||_2 \leq -||w'||_{2} \cdot ||v'-v||_2  &\leq (v-v')^\top w' \leq \psi(v) - \psi(v') \\
&\leq (v'-v)^\top w \leq ||w||_{2} \cdot ||v'-v||_2 \leq \Delta_2 ||v'-v||_2.
\end{align*}
Thus, $|\psi(v)-\psi(v')| \leq \Delta_2\, ||v-v'||_2$. If $\nabla\psi(v)$ exists for all $v \in \Vcal$, then $\Scal(\psi) = \{\nabla\psi(v):v\in\Vcal\}$ and $\Delta_2 = \sup_{v\in \Vcal}||\nabla\psi(v)||_{2}$. 

Since $\psi$ is strictly convex and argument-wise strictly increasing over $\RR^k$, it follows that $\forall u \in \RR^k_{\geq 0}\setminus\{\bfzero\}$, $\phi_u(t) := \psi(t\,u)$ is also strictly convex and strictly increasing over $\RR$ (strictly, because at least one component of $u$ is strictly positive). According to the ``chord-slopes inequality'' for convex functions (see \citep{royden1988real}, Chapter~5, Section~5), if $\phi: \RR \longrightarrow \RR$ is convex, then for all $s_1, s_2, s'_1, s'_2 \in \RR$ such that $s_1 \leq s_1' < s_2'$ and $s_1 < s_2 \leq s'_2$, we have 
\[
\frac{\phi(s_2) - \phi(s_1)}{s_2 - s_1} \leq \frac{\phi(s'_2) - \phi(s'_1)}{s'_2 - s'_1}.
\]
Applying this inequality to $\phi_u$ with $s'_2 = t$, $s'_1 = t'$, with $-2 \leq t' < t \leq 0$, and $s_2 = -2$, and $s_1 = -2-\epsilon$, where $\epsilon > 0$, we get
\[
\frac{\phi_u(-2) - \phi_u(-2-\epsilon)}{(-2) - (-2-\epsilon)} \leq \frac{\phi_u(t) - \phi_u(t')}{t - t'}.
\]
Since $\phi_u$ is a strictly increasing function, we get
\[
0 < \delta_u := \sup_{\epsilon > 0}\left[\frac{\phi_u(-2) - \phi_u(-2-\epsilon)}{\epsilon}\right] \leq \frac{\phi_u(t) - \phi_u(t')}{t - t'}.
\]
Thus, for all $t, t' \in [-2,0]$, with $t' < t$, we have
\[
(t-t')\,\delta_u \leq \phi_u(t) - \phi_u(t').
\]
The last inequality clearly holds when $t' = t$ as well.

If $\nabla\psi(v)$ exists for all $v \in \Vcal$, then 
\[
\delta_u = u^\top\nabla\psi(-2\,u), 
\]
since for all $\epsilon  > 0$, the convexity of $\phi_u$ implies that  $-\epsilon\,\nabla \phi_u(-2) + \phi_u(-2) \leq \phi_u(-2-\epsilon) \Rightarrow \tfrac{\phi_u(-2) - \phi_u(-2-\epsilon)}{\epsilon} \leq \nabla\phi_u(-2) = u^\top\nabla\psi(-2\,u)$, and $\lim_{\epsilon \downarrow 0}\tfrac{\phi_u(-2) - \phi_u(-2-\epsilon)}{\epsilon} = \nabla\phi_u(-2)$.
\end{proof}

\subsection{$A^*(a)$ is Lipschitz}\label{apd:proof_Astarbound}
\begin{lemma}\label{lemma:Astarbound}
Let $C \geq 3$ and let $A^*(a)$ denote the matrix $A^*$ in Equation~(\ref{eq:Astarform}) of Lemma~\ref{lemma:Astarform} with $a\in [-1,1]$ and $b = (a^2(C-1)-1)/(C-2)$. Then, for all $a,a' \in [-1,1]$ such that $a'\leq a$, and all $i,j \in \Ccal$, we have
\[
|A^*_{ij}(a) - A^*_{ij}(a')| \leq \gamma_C \cdot (a-a'),
\]
where $\gamma_C := \tfrac{2(C-1)}{(C-2)}$, and for all $i \in \Ccal$ and all $j_{1:k} \in \Ccal\setminus\{i\}$,
\begin{align*}
|\psi(A^*_{ij_1}(a)-A^*_{ii}(a),\ldots,A^*_{ij_k}(a)-A^*_{ii}(a)) - \psi(A^*_{ij_1}(a')-A^*_{ii}(a'),\ldots,&A^*_{ij_k}(a')-A^*_{ii}(a'))| \\
&\leq \gamma_C\,\Delta_2 \sqrt{k}(a-a'), 
\end{align*}
where $\psi$ and $\Delta_2$ are as in Lemma~\ref{lemma:subgradients}.
\end{lemma}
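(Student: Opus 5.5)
The plan is to check the entry-wise Lipschitz bound directly from the explicit parametrization of $A^*(a)$ in Lemma~\ref{lemma:Astarform}, and then push it through the Lipschitz property of $\psi$ from Lemma~\ref{lemma:subgradients}. There are only three kinds of entries of $A^*(a)$. The diagonal entries are identically $1$, so their differences vanish. The entries $A^*_{1i}(a)=A^*_{i1}(a)=a$ for $i\neq 1$ have difference exactly $a-a'\le \gamma_C(a-a')$, since $\gamma_C = 2(C-1)/(C-2) > 2 > 1$. The off-diagonal minority entries are $b(a)=(a^2(C-1)-1)/(C-2)$. For these,
\[
|b(a)-b(a')| = \frac{C-1}{C-2}\,|a+a'|\,(a-a') \le \frac{2(C-1)}{C-2}(a-a') = \gamma_C (a-a'),
\]
using $|a+a'|\le 2$ for $a,a'\in[-1,1]$ and $a'\le a$. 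This gives the first claim for all $i,j\in\Ccal$.

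For the second claim, fix $i$ and $j_{1:k}\in\Ccal\setminus\{i\}$. Define the vectors $v(a) := (A^*_{ij_1}(a)-A^*_{ii}(a),\ldots,A^*_{ij_k}(a)-A^*_{ii}(a))$ and $v(a')$ analogously.

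I first verify that $v(a),v(a')\in\Vcal=[-2,0]^k$, so that the Lipschitz bound (\ref{eq:psiLipschitz}) of Lemma~\ref{lemma:subgradients} applies. Since $A^*_{ii}=1$, each coordinate equals $A^*_{ij_t}-1$. Because $A^*(a)$ lies in $\Acal^*$, its entries satisfy $|A^*_{ij}|\le 1$, as shown in the proof of Lemma~\ref{lemma:convexproblem}. One can also check this directly: $a\in[-1,1]$, and $b\in[-1/(C-2),1]$ because $0\le a^2\le 1$. Hence every coordinate lies in $[-2,0]$.

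Next, since $A^*_{ii}$ is constant, each coordinate of $v(a)-v(a')$ is $A^*_{ij_t}(a)-A^*_{ij_t}(a')$. By the first part, each such coordinate is bounded in absolute value by $\gamma_C(a-a')$. Therefore
\[
\|v(a)-v(a')\|_2 \le \sqrt{k}\,\gamma_C(a-a').
\]
Applying (\ref{eq:psiLipschitz}) gives $|\psi(v(a))-\psi(v(a'))|\le \Delta_2\sqrt{k}\,\gamma_C(a-a')$, which is the claim.

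There is no real obstacle here. The only point requiring care is confirming that the arguments of $\psi$ stay inside $[-2,0]^k$, so that the constant $\Delta_2$ from Lemma~\ref{lemma:subgradients} is legitimate. That follows from $A^*(a)\in\Acal^*$, or equivalently from the explicit range of $b$ computed above. The restriction $j_t\neq i$ is not actually needed, since the coordinates with $j_t=i$ are identically zero and only make the bound easier.
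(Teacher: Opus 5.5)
Your proof is correct and follows the same route as the paper. It splits the entries of $A^*(a)$ into three kinds, derives $\|v(a)-v(a')\|_2 \le \gamma_C\sqrt{k}\,(a-a')$, and then applies the Lipschitz bound (\ref{eq:psiLipschitz}). Two of your steps are small improvements in rigor over the paper's proof: you use $|a+a'|\le 2$, whereas the paper writes $(a+a')(a-a')$ without the absolute value, which is only valid when $a+a'\ge 0$; and you check explicitly that $v(a),v(a')\in[-2,0]^k$, which the paper's proof leaves unstated.
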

%
%
%
\begin{proof} 
For all $i=j \in \Ccal$, $A^*_{ii}(a) = 1$, a constant, irrespective of the value of $a \in [-1,1]$. Therefore, for all $a, a' \in [-1,1]$ such that $a' \leq a$, we have $|A^*_{ii}(a) - A^*_{ii}(a')| = 0 \leq \tfrac{2(C-1)}{(C-2)}(a-a') = \gamma_C\cdot (a-a')$. Note that $1 < \tfrac{\gamma_C}{2} = \tfrac{(C-1)}{(C-2)} < \infty$ since $C \geq 3$.
Now consider any $i, j \in \Ccal$ with $i \neq j$. If
either $i = 1$ or $j=1$, then for all $a \in [-1,1]$, $A^*_{ij}(a) = a$ and therefore $|A^*_{ij}(a) - A^*_{ij}(a')|  = |a - a'| = (a-a') \leq \gamma_C\cdot (a-a')$. If $i \neq 1$ and $j \neq 1$ and $i\neq j$, then for all  $a \in [-1,1]$, $A^*_{ij}(a) = b = (a^2(C-1)-1)/(C-2)$ and then,
\[
|A^*_{ij}(a) - A^*_{ij}(a')| = \frac{|a^2 - (a')^2|(C-1)}{(C-2)} 
= \frac{(a + a')(a-a')(C-1)}{(C-2)} \leq \tfrac{2(C-1)}{(C-2)} (a-a') = \gamma_C\cdot (a-a').
\]
This proves that for all $a' \leq a$ with $a, a' \in [-1,1]$, and all $i, j \in \Ccal$, we have $|A^*_{ij}(a) - A^*_{ij}(a')| \leq \gamma_C \cdot (a-a')$.
Next, for all $i\in \Ccal$, all $j_{1:k} \in \Ccal\setminus\{i\}$, and all $a \in [-1,1]$, let
\[
v(a) := (A^*_{ij_1}(a)-A^*_{ii}(a),\ldots,A^*_{ij_k}(a)-A^*_{ii}(a))^\top = (A^*_{ij_1}(a) - 1,\ldots,A^*_{ij_k}(a) - 1)^\top.
\]
Then, for all $a, a'\in [-1,1]$ with $a' \leq a$, the bound on $|A^*_{ij}(a)-A^*_{ij}(a')|$ that we just proved implies that
\[
||v(a) - v(a')||_2 = \sqrt{\sum_{m=1}^k |A^*_{ij_m}(a) - A^*_{ij_m}(a')|^2} \leq \sqrt{\sum_{m=1}^k (\gamma_C\cdot\,(a-a'))^2} = \gamma_C\,\sqrt{k}\,\,(a-a'). 
\]
Therefore, from  Lemma~\ref{lemma:subgradients}, we get
\[
|\psi(v(a)) - \psi(v(a'))|\leq   \Delta_2 ||v(a)-v(a')||_2 \leq \gamma_C\,\sqrt{k}\,\Delta_2 (a-a'). 
\]
\end{proof}

\subsection{Proof of Theorem~\ref{theorem:minoritycollapse}}\label{apd:proof_minoritycollapse}
The proof makes use of the results in Lemma~\ref{lemma:Astarform}, Lemma~\ref{lemma:subgradients}, and Lemma~\ref{lemma:Astarbound} which appear in Appendix~\ref{apd:Astarform}, Appendix~\ref{apd:subgradients}, and Appendix~\ref{apd:proof_Astarbound}, respectively. 
\begin{proof} 
Let $\Ecal_{1\bar{1}} := \{y=1 \mbox{ and for some } i,\, y^-_i \neq 1\}$, $\Ecal_{\bar{1}1} := \{y\neq 1 \mbox{ and for all } i,\, y^-_i = 1\}$, and $\Ecal_1 := \Ecal_{1\bar{1}} \cupdot \Ecal_{\bar{1}1}$.
Let $\Ecal_{=} := \{y=y^-_1=\ldots=y^-_k\}$ and $\Ecal_{2} := (\Ecal_{=}\cup\Ecal_1)^c$. Then, $\Ecal_{=}$, $\Ecal_{1}$, and $\Ecal_{2}$ are mutually exclusive and exhaustive events with
\begin{align}
    \Pr(\Ecal_{=}) &= \lambda_1^{k+1} + (C-1)\,\frac{(1-\lambda_1)^{k+1}}{(C-1)^{k+1}} = \lambda_1^{k+1} + \frac{(1-\lambda_1)^{k+1}}{(C-1)^{k}} 
    \nonumber \\
    \Pr(\Ecal_1) &= \Pr(\Ecal_{1\bar{1}}) + \Pr(\Ecal_{\bar{1}1}) = \lambda_1(1-\lambda_1)^k + (1-\lambda_1)\lambda_1^k = \lambda_1\,(1-\lambda_1)\,\bigg(\frac{(1-\lambda_1)^k}{1-\lambda_1} + \lambda_1^{k-1}\bigg),  \label{eq:probE1} \\
    \Pr(\Ecal_2) &= 1 - \Pr(\Ecal_{=}) - \Pr(\Ecal1) = (1-\lambda_1)^2 \bigg(\frac{(1-\lambda_1)^k}{1-\lambda_1} - \frac{(1-\lambda_1)^{k-1}}{(C-1)^{k}}\bigg), 
    \nonumber \\
    \frac{\Pr(\Ecal_1)}{\Pr(\Ecal_2)} &\geq \frac{\lambda_1}{1-\lambda_1} \label{eq:probE12ratiolbnd}.
\end{align}
Next, noting the definition of $\phi$ in Lemma~\ref{lemma:subgradients} and that for all $j \in \Ccal$,  $(A^*_{1j} - 1) = (A^*_{j1} - 1) = (a-1)\,1(j\neq 1)$, we have 
$\forall\, (y,y^-_{1:k}) \in \Ecal_{=}\cupdot\Ecal_1$,
\begin{align}
\psi(A^*_{yy^-_1}(a) - 1, \ldots,A^*_{yy^-_1}(a) - 1) = \psi((a - 1)\, u(y,y^-_{1:k})) = \phi_{u(y,y^-_{1:k})}(a-1), \label{eq:psiintermsofphi}
\end{align}
and in particular for all $(y,y^-_{1:k}) \in \Ecal_{=}$, $u(y,y^-_{1:k}) =  \bfzero$ and $\phi_{u(y,y^-_{1:k})}(a-1) = \psi(0,\ldots,0) = \phi_{\bfzero}(0)$, a constant. We also note that for all $(y,y^-_{1:k}) \in \Ecal_1$, $u(y,y^-_{1:k}) \neq \bfzero$.
For all $a, a' \in [-1,1]$ with $a' < a$ and $y, y^-_{1:k}$ distributed as in (\ref{eq:joint1of2}), we have
\begin{flalign}
S(A^*(a)) &= \EE\Big[\psi(A^*_{yy^-_1}(a) - 1, \ldots,A^*_{yy^-_1}(a) - 1)\Big] \nonumber && \\
 &= \Pr(\Ecal_{=})\,\phi_{\bfzero}(0) + \Pr(\Ecal_{1})\,\EE[\phi_{u(y,y^-_{1:k})}(a-1)|\Ecal_1] \quad + \nonumber && \\ 
& \hspace{31ex} \Pr(\Ecal_2)\,\EE\Big[\psi(A^*_{yy^-_1}(a) - 1, \ldots,A^*_{yy^-_1}(a) - 1)\Big|\Ecal_2\Big]. \nonumber &&
\end{flalign}
Therefore,
\begin{align}
&S(A^*(a)) - S(A^*(a')) \nonumber \\
&= \Pr(\Ecal_1)\,\EE\Big[\phi_{u(y,y^-_{1:k})}(a-1) - \phi_{u(y,y^-_{1:k})}(a'-1)|\Ecal_1\Big] \quad + \nonumber \\ 
&\hspace{3ex}\Pr(\Ecal_2)\,\EE\Big[\psi(A^*_{yy^-_1}(a) - 1, \ldots,A^*_{yy^-_1}(a) - 1) - \psi(A^*_{yy^-_1}(a') - 1, \ldots,A^*_{yy^-_1}(a') - 1) \Big|\Ecal_2\Big] \nonumber \\
&\geq \Pr(\Ecal_1)\,\EE\big[\delta_{u(y,y^-_{1:k})}\,(a-a')|\Ecal_1\big] - \Pr(\Ecal_2)\,\gamma_C\,\sqrt{k}\,\Delta_2\,(a-a') \label{eq:Slowerbound1} \\
&= (a-a')\,\Pr(\Ecal_2)\,\left\{\frac{\Pr(\Ecal_1)}{\Pr(\Ecal_2)}\,\EE\big[\delta_{u(y,y^-_{1:k})}|\Ecal_1\big]  - \gamma_C \sqrt{k} \Delta_2\right\} \nonumber \\
&\geq (a-a')\,\frac{\Pr(\Ecal_2)}{1-\lambda_1}\,\left\{\lambda_1\,\EE\big[\delta_{u(y,y^-_{1:k})}|\Ecal_1\big]  - (1-\lambda_1)\,\gamma_C \sqrt{k} \Delta_2\right\} \label{eq:Slowerbound2}  \\
&= (a-a')\,\frac{ \gamma_C \sqrt{k} \Delta_2\,\Pr(\Ecal_2)}{1-\lambda_1}\,\left\{\lambda_1\,\bigg(1+\frac{1}{\gamma_C \sqrt{k} \Delta_2}\EE\big[\delta_{u(y,y^-_{1:k})}|\Ecal_1\big]\bigg) - 1\right\} \nonumber \\
&\geq 0. \label{eq:Slowerbound3} 
\end{align}
Inequality (\ref{eq:Slowerbound1}) follows from (\ref{eq:phislopelowerbound}) and Lemma~\ref{lemma:Astarbound} together with the fact that $s\geq - |s|$ for all $s \in \RR$. Inequality (\ref{eq:Slowerbound2}) follows from (\ref{eq:probE12ratiolbnd}).
Inequality (\ref{eq:Slowerbound3}) follows from condition (\ref{eq:MClambda1suffcond}) and the assumption that $a'<a$. 

Thus, if condition (\ref{eq:MClambda1suffcond}) is satisfied, then for all $a\in [-1,1]$, $S(A^*(a))$ is a strictly increasing function of the variable $a$ and is minimized when $a = -1$. When $a = -1$, $b = (a^2(C-1)-1)/(C-2) = 1$. Then, $\forall i \in \Ccal\setminus\{1\}$, $(\mu^*_i)^\top \mu^*_1 = a = -1$. Since for all $j\in \Ccal$ we have $||\mu^*_j||^2 = 1$, it follows from the alignment conditions for equality in the Cauchy-Schwartz inequality that for all $i \in \Ccal\setminus\{1\}$, $\mu^*_i = -\mu^*_1$. 
Finally, if condition $(\ref{eq:lambda1threshold})$ is satisfied, then condition (\ref{eq:MClambda1suffcond}) is also satisfied because 
\[
\lambda_1 \geq \tau \Rightarrow \lambda_1 \geq \tfrac{1}{1 + \frac{\delta_*}{\gamma_C\,\sqrt{k}\,\Delta_2}} \geq \tfrac{1}{1+ \frac{1}{\gamma_C\,\sqrt{k}\,\Delta_2}\EE\big[\delta_{u(y,y^-_{1:k})}|\Ecal_1\big]}
\]
and the last inequality holds because for all $(y,y^-_{1:k}) \in \Ecal_1$, $u(y,y^-_{1:k}) \neq \bfzero$, and by the definition of $\delta_*$ in (\ref{eq:mindelta}), for all $u \neq \bfzero$, $\delta_u \geq \delta_* > 0$. 
\end{proof}

\subsection{Proof of Corollary~\ref{cor:InfoNCElambda1threshold}}\label{apd:proof_InfoNCElambda1threshold}
\begin{proof}
From (\ref{eq:MClambda1suffcond}), a sufficient condition for minority collapse is given by 
\[
\lambda_1 \geq \frac{1}{1+ \frac{1}{\gamma_C \sqrt{k} \Delta_2}\,\EE[\delta_{u(y,y^-_{1:k})}| \Ecal_1]}.
\]
For the InfoNCE loss function, we will show that $\Delta_2 = 1/(2\sqrt{k})$ and develop a lower bound for $\EE\big[\delta_{u(y,y^-_{1:k})}| \Ecal_1\big]$ which is independent of $k$. This would yield a sufficient threshold for minority collapse. For the InfoNCE loss function,
\begin{align*}
\psi(t_{1:k}) = \log\Big(1+\tfrac{1}{k}\sum_{i=1}^ke^{t_i}\Big) &\Rightarrow
\nabla\psi^\top(t_{1:k}) = \tfrac{1}{k+\sum_{i=1}^k e^{t_i}} (e^{t_1},\ldots,e^{t_k}) \\
&\Rightarrow ||\nabla\psi(t_{1:k})||_2 = \sqrt{\frac{\sum_{i=1}^k (e^{t_i})^2}{\big(k+\sum_{i=1}^k e^{t_i}\big)^2}}.
\end{align*}
For all $v_{1:k} \in \RR$ we have
\[
0 \leq \bigg(k-\sum_{i=1}^k v_i\bigg)^2 
\Rightarrow 2k\bigg(\sum_{i=1}^k v_i\bigg) \leq k^2 + \bigg(\sum_{i=1}^k v_i\bigg)^2  
\Rightarrow 4k\bigg(\sum_{i=1}^k v_i\bigg) \leq \bigg(k + \sum_{i=1}^k v_i\bigg)^2.
\]
Therefore, for all  $v_{1:k} \in [0,1]$,
\[
4k\bigg(\sum_{i=1}^k v_i^2\bigg) \leq 4k\bigg(\sum_{i=1}^k v_i\bigg) \leq \bigg(k + \sum_{i=1}^k v_i\bigg)^2 
\Rightarrow \sqrt{\frac{\sum_{i=1}^k v_i^2}{\big(k+\sum_{i=1}^k v_i \big)^2}} \leq \frac{1}{2\sqrt{k}}
\]
with equality if, and only if, $\forall i, v_i = 1$. Thus, for all $t_{1:k}\in[-2,0]$,  with $v_i := e^{t_i} \in [e^{-2},1]$, we get
\[
\Delta_2 = \sup_{t_{1:k}\in[-2,0]} ||\nabla\psi(t_{1:k})||_2 
= \sup_{v_{1:k}\in[e^{-2},1]} \sqrt{\frac{\sum_{i=1}^k v_i^2}{\big(k+\sum_{i=1}^k v_i \big)^2}} = \frac{1}{2\sqrt{k}} \Rightarrow \gamma_C\,\sqrt{k}\,\Delta_2 = \tfrac{1}{2}\gamma_C.
\]
Thus, a sufficient condition for minority collapse is given by
\[
\lambda_1 \geq \frac{1}{1+\frac{2}{\gamma_C}\,\EE\big[\delta_{u(y,y^-_{1:k})}| \Ecal_1\big]}.
\]
We will now develop a lower bound for $\EE\big[\delta_{u(y,y^-_{1:k})}| \Ecal_1\big]$ which is independent of $k$. By Lemma~\ref{lemma:subgradients}, for all $u \in \{0,1\}^k$,
\begin{align*}
\delta_u &= u^\top\,\nabla\psi(-2\,u) \\
&= \sum_{i=1}^k u_i \frac{e^{-2u_i}}{k + \sum_{j=1}^k e^{-2u_j}} \\
&= \frac{e^{-2} ||u||_1}{k+e^{-2} ||u||_1 + (k-||u||_1)} \\
&= \frac{||u||_1}{2ke^2-(e^{2}-1) ||u||_1} \\
&=: g(||u||_1),
\end{align*}
and we note that $\delta_u = g(||u||_1)$ is an increasing function of $||u||_1$. From Theorem~\ref{theorem:minoritycollapse}, $\Ecal_1 := \Ecal_{1\bar{1}} \cupdot \Ecal_{\bar{1}1}$, 
for all $(y,y^-_{1:k}) \in \Ecal_{1}$, $u(y,y^-_{1:k}) \neq \bfzero$, and for all $(y,y^-_{1:k}) \in \Ecal_{1\bar{1}}$, $y=1$ and $(y^-_{1:k}) \neq (1,\ldots,1)$. Moreover, from (\ref{eq:probE1}),
\[
\Pr(\Ecal_1) = \lambda_1\,(1-\lambda_1)^k + \lambda_1^k\,(1-\lambda_1) \leq \lambda_1\,(1-\lambda_1) + \lambda_1\,(1-\lambda_1) = 2\lambda_1\,(1-\lambda_1) \leq \tfrac{1}{2}.  
\] 
Therefore,
\begin{align}
2\lambda_1\,&(1-\lambda_1)\,\EE\big[\delta_{u(y,y^-_{1:k})}|\Ecal_1\big] \nonumber \\
&\geq \Pr(\Ecal_1)\,\EE\big[\delta_{u(y,y^-_{1:k})}|\Ecal_1\big]\,, \nonumber \\
&= \sum_{(y,y^-_{1:k}) \in \Ecal_1} p(y,y^-_{1:k})\,\delta_{u(y,y^-_{1:k})}\,, \nonumber \\ 
&\geq \sum_{(y,y^-_{1:k}) \in \Ecal_{1\bar{1}}} p(y,y^-_{1:k})\,\delta_{u(y,y^-_{1:k})}\,, \nonumber \\
&= \sum_{(y,y^-_{1:k}) \in \Ecal_{1\bar{1}}} \lambda_1\,\Big(\prod_{i=1}^{k} \lambda_{y^-_i}\Big)\,\delta_{u(y,y^-_{1:k})}\,, \nonumber \\
&= \lambda_1\,\sum_{(y^-_{1:k}) \in \Ccal^k\setminus \{\bfones\}} \lambda_1^{k-||u(1,y^-_{1:k})||_1} \Big(\frac{1- \lambda_{1}}{C-1}\Big)^{||u(1,y^-_{1:k})||_1}\,g(||u(1,y^-_{1:k})||_1)\,, \nonumber \\
&= \lambda_1\,\sum_{w\in \{0,1\}^k\setminus\{\bfzero\}}\,\sum_{y^-_{1:k}:u(1,y^-_{1:k}) = w} \lambda_1^{k-||w||_1} \Big(\frac{1- \lambda_{1}}{C-1}\Big)^{||w||_1}\,g(||w||_1)\,, \nonumber \\
&= \lambda_1\,\sum_{w\in \{0,1\}^k\setminus\{\bfzero\}}\,(C-1)^{||w||_1}\,\lambda_1^{k-||w||_1} \Big(\frac{1- \lambda_{1}}{C-1}\Big)^{||w||_1}\,g(||w||_1)\,, \nonumber \\   
&= \lambda_1\,\sum_{w\in \{0,1\}^k\setminus\{\bfzero\}}\,\lambda_1^{k-||w||_1}\,(1- \lambda_{1})^{||w||_1}\,g(||w||_1)\,, \nonumber \\ 
&= \lambda_1\,\sum_{l=1}^k\,\binom{k}{l} \,\lambda_1^{k-l}\,(1- \lambda_{1})^{l}\,g(l)\,, \nonumber \\
&= \lambda_1\,\sum_{l=0}^k\,\binom{k}{l} \,\lambda_1^{k-l}\,(1- \lambda_{1})^{l}\,g(l), \quad \text{since } g(0) = 0\,, \nonumber \\
&= \lambda_1\,\EE[g(l)], \quad l \sim \text{Binomial}(k,1-\lambda_1)\,, \nonumber \\   
&\geq \lambda_1\,\EE[1(l \geq k/2)\,g(l)], \quad l \sim \text{Binomial}(k,1-\lambda_1)\,, \quad \text{since } \forall l, g(l) \geq 0, \nonumber \\ 
&\geq \lambda_1\,\EE[1(l \geq k/2)\,g(k/2)], \quad l \sim \text{Binomial}(k,1-\lambda_1)\,, \quad \text{since } g(l) \text{ increases with } l, \nonumber \\   
&= \lambda_1\,g(k/2)\,\Pr(l \geq k/2), \quad l \sim \text{Binomial}(k,1-\lambda_1)\,, \nonumber \\     
&\geq \lambda_1\,\frac{1}{2}\,g(k/2), \nonumber
\end{align}
\begin{align}
\Rightarrow \EE\big[\delta_{u(y,y^-_{1:k})}|\Ecal_1\big] &\geq  \frac{1}{4(1-\lambda_1)}\,g(k/2) \nonumber \\
&= \frac{1}{4(1+3e^2)(1-\lambda_1)}\nonumber 
\end{align}
Therefore, a sufficient condition for minority collapse is given by
\begin{align*}
\lambda_1 \geq \frac{1}{1+\frac{2}{\gamma_C}\,\frac{1}{4(1+3e^2)(1-\lambda_1)}} = \frac{1}{1+\frac{1}{2\,\gamma_C\,(1+3e^2)(1-\lambda_1)}} 
&\Rightarrow 0 \geq \lambda_1^2 - 2\frac{\lambda_1}{\beta_C} + 1 \\
&\Rightarrow \lambda_1 \geq \frac{1-\sqrt{1-\beta_C^2}}{\beta_C} =: \tau_C,
\end{align*}
where $\beta_C := \frac{1}{1 + \tfrac{1}{4\gamma_C(1+3e^2)}}$. We note that $\tau_C \in (0,1)$ since $\beta_C \in (0,1)$. \newline
Since $(C-1) = (C-2) + 1$ and $C \geq 3$, we have $\gamma_C = \tfrac{2(C-1)}{(C-2)} \in (2,4]$. The most conservative (maximum) value of $\tau_C$ occurs when $\beta_C$ is maximum (since $\tau_C$ is an increasing function of $\beta_C$) which occurs when $\gamma_C$ is maximum (since $\beta_C$ is an increasing function of $\gamma_C$), which occurs when $C$ is minimum, i.e., $C=3$. When $C=3$, $\gamma_C = 4, \beta_C \approx 0.9973$, and $\tau_C \approx 0.9292$. Thus, $\lambda_1 \in \Big(0.9292,1\Big)$ is a sufficient condition for minority collapse for the InfoNCE loss, which holds for all $C\geq 3$ and all $k$.
\end{proof}

\subsection{Extension of theoretical results to the SCL setting}\label{apd:extensions_to_SCL}

In the SCL setting, for all $y \in \Ccal$ we have $y^-_{1:k} \in \Ccal \setminus \{y\} \text{ w.p.1}$,
\begin{equation}
p_{SCL}(y,y^-_{1:k}) := \lambda_y\prod_{t=1}^k \left(\frac{\lambda_{y^-_t}}{1-\lambda_y}\right),\label{eq:SCL_joint_pmf_labels_appendix} 
\end{equation}
and for all $x, x^+ \in \Xcal, y \in \Ccal$,
\[
q(x,x^+|y) = s(x|y)\,s(x^+|y).
\]

With the above changes, all results in Section~\ref{sec: lower bound}, Section~\ref{sec: main result}, and Section~\ref{sec: equiangular} hold with all summations $\sum_{y,y^-_{1:k} \in \Ccal}$ replaced by $\sum_{y \in \Ccal, y^-_{1:k} \in \Ccal \setminus \{y\} }$ and all products $\prod_{t=1}^k \lambda_{y^-_t}$ replaced by 
$\prod_{t=1}^k \frac{\lambda_{y^-_t}}{1-\lambda_{y}}$. With these changes, the proofs of all results in Section~\ref{sec: lower bound}, Section~\ref{sec: main result}, and Section~\ref{sec: equiangular}  go through in a straightforward manner with the exception of the proof of necessity of within-class variance collapse in Lemma~\ref{lemma:CLriskLB} which requires additional elaboration.  

As in the proof of the \UCL{} setting, equality in (\ref{eq:labeljoint}) can be attained only if $\forall i \in \Ccal$, w.p.1 given $y=i$, we must have $f(x) = f(x^+)$ and $||f(x)|| = 1$ (here, all the weights $\lambda_y\prod_{t=1}^k \left(\frac{\lambda_{y^-_t}}{1-\lambda_y}\right)$ are \textit{strictly} positive). In the \SCL{} setting, $x,x^+$ are conditionally iid with distribution $s(\cdot|j)$ given $y=j$. From Lemma~\ref{lemma:varcollapse} in Appendix~\ref{apd:varcollapse}, it then follows that for all $j\in \Ccal$, w.p.1 given $y=j$, $f(x) = \mu_j$, or more compactly, $\forall x\in\Xcal, f(x) = \mu_{y(x)}$ completing the proof of necessity in the \SCL{} setting.

The proofs of all subsequent results in Section~\ref{sec: main result} and Section~\ref{sec: equiangular} go through straightforwardly since they only make use of the lower bound  $G(M)$ in Lemma~\ref{lemma:CLriskLB}.

In the \SCL{} setting, Lemma~\ref{lemma:Astarform}, Lemma~\ref{lemma:subgradients}, and Lemma~\ref{lemma:Astarbound} in Section~\ref{sec: minority collapse} and their proofs in the appendices hold without any changes. However, Theorem~\ref{theorem:minoritycollapse} and Corollary~\ref{cor:InfoNCElb} and their proofs change slightly in the \SCL{} setting as described below.

\begin{theorem}[Sufficient conditions for minority collapse in the \SCL{} setting] \label{theorem:SCLminoritycollapse}
Let $C, \lambda_{1:C}$ be as in Lemma~\ref{lemma:Astarform}, $S(\cdot)$ be as in (\ref{eq:Sdefn}), $\Delta_2$, $\phi$, and $\delta_u$ be as in Lemma~\ref{lemma:subgradients} and let $a, b, A^*(a)$, and $\gamma_C$ be as in Lemma~\ref{lemma:Astarbound}.
With $(y, y^-_{1:k})$ distributed as in (\ref{eq:SCL_joint_pmf_labels}), if
\begin{align}
\lambda_1 \geq \tau := \frac{1}{1+ \frac{1}{\gamma_C \sqrt{k} \Delta_2}\,\delta_{\bold{1}}} \in (0,1),
\label{eq:SCLMClambda1suffcond}    
\end{align}
where $\bold{1}$ is the $C\times 1$ vector of all ones, then for all $a\in [-1,1]$, $S(A^*(a))$ is a strictly increasing function of the variable $a$ and is minimized when $a = -1 \Rightarrow b = 1$ and then for all $i \in \Ccal\setminus\{1\}$, $\mu^*_i = -\mu^*_1$ with $||\mu^*_1||=1$, i.e., we have minority collapse. 
\end{theorem}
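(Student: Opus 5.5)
We mirror the proof of Theorem~\ref{theorem:minoritycollapse}, using the \SCL{} label law (\ref{eq:SCL_joint_pmf_labels}), under which $y^-_t\neq y$ w.p.1. By Lemma~\ref{lemma:Astarform} (valid in \SCL{}), $A^*=A^*(a)$ for some $a\in[-1,1]$ with $b=(a^2(C-1)-1)/(C-2)$. It therefore suffices to show that $a\mapsto S(A^*(a))$ is strictly increasing on $[-1,1]$. The minimizer is then $a=-1$, hence $b=1$, and equality in Cauchy--Schwarz with unit norms gives $\mu_i^*=-\mu_1^*$ for all $i\neq 1$.

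\textbf{Step 1 (partition of label tuples).} Since class collisions have probability zero, $\Ecal_=$ is null. The tuples then split into two events.
\begin{itemize}
\item $\Ecal_1$: either $y=1$ (all negatives are minority classes), or $y\neq1$ with all $y^-_t=1$.
\item $\Ecal_2$: $y\neq 1$ and at least one $y^-_t\neq 1$.
\end{itemize}
On $\Ecal_1$ every argument $A^*_{yy^-_t}-1$ equals $a-1$, so $\psi(\cdot)=\phi_{\bfones}(a-1)$ with $u=\bfones\in\RR^k$ (this is the vector $\bold{1}$ in the statement). Computing probabilities, $\Pr(y=1)=\lambda_1$ and, for $y=i\neq1$, all negatives equal $1$ with probability $(\lambda_1/(1-\lambda_i))^k$. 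Hence
\[
\Pr(\Ecal_2)\le 1-\lambda_1 \quad\text{and}\quad \Pr(\Ecal_1)\ge\lambda_1 ,
\]
which gives $\Pr(\Ecal_1)/\Pr(\Ecal_2)\ge \lambda_1/(1-\lambda_1)$.

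\textbf{Step 2 (bounding the difference).} For $a'<a$, write
\[
S(A^*(a))-S(A^*(a'))=\Pr(\Ecal_1)\,[\phi_{\bfones}(a-1)-\phi_{\bfones}(a'-1)]+\Pr(\Ecal_2)\,\EE[\Delta\psi\mid\Ecal_2].
\]
The first term is at least $\Pr(\Ecal_1)\,\delta_{\bfones}(a-a')$ by (\ref{eq:phislopelowerbound}), since $a-1,a'-1\in[-2,0]$. Every $j_t\neq y$ on $\Ecal_2$, so Lemma~\ref{lemma:Astarbound} applies and the second term is at least $-\Pr(\Ecal_2)\,\gamma_C\sqrt{k}\,\Delta_2\,(a-a')$. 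Combining these with the ratio bound from Step 1, exactly as in (\ref{eq:Slowerbound1})--(\ref{eq:Slowerbound3}), gives
\[
S(A^*(a))-S(A^*(a'))\ge (a-a')\,\frac{\Pr(\Ecal_2)}{1-\lambda_1}\bigl\{\lambda_1\delta_{\bfones}-(1-\lambda_1)\gamma_C\sqrt{k}\,\Delta_2\bigr\}.
\]
The brace is nonnegative exactly when $\lambda_1\ge\tau$. Finally, $\tau\in(0,1)$ because $\delta_{\bfones}>0$ by Lemma~\ref{lemma:subgradients} and $\Delta_2<\infty$.

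\textbf{Main obstacle: strictness.} At $\lambda_1=\tau$ the brace vanishes, so Step 2 only yields a nondecreasing function. Two routes are available.
\begin{itemize}
\item If $\Pr(\Ecal_2)=0$, the $\Ecal_1$ term is already strict.
\item Otherwise, tighten the ratio bound. We have $\Pr(\Ecal_1)>\lambda_1$ strictly, because the event $\{y\neq1,\ \text{all } y^-_t=1\}$ has positive probability. Similarly $\Pr(\Ecal_2)<1-\lambda_1$. So the ratio inequality in Step 1 is strict, which gives a strictly positive difference whenever $a>a'$.
\end{itemize}
I expect this strictness bookkeeping to be the only delicate point; the remaining steps reuse the \UCL{} argument verbatim.
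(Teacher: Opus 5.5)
Your proof is correct, and it splits the label tuples differently from the paper's proof.

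\textbf{The paper's split.} The paper conditions only on $y=1$ versus $y\neq 1$. It writes $S(A^*(a))=\lambda_1\phi_{\bfones}(a-1)+(1-\lambda_1)\,\EE[\psi(\cdot)\mid y\neq 1]$ and applies the Lipschitz bound of Lemma~\ref{lemma:Astarbound} to the entire $y\neq 1$ part. This gives $(a-a')\{\lambda_1\delta_{\bfones}-(1-\lambda_1)\gamma_C\sqrt{k}\,\Delta_2\}$ directly, with no event probabilities to compute.

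\textbf{Your split.} You reuse the \UCL{} partition. This moves the event $\{y\neq 1,\ y^-_1=\cdots=y^-_k=1\}$ into the part where every argument equals $a-1$. That event has probability $p_0=(1-\lambda_1)\bigl(\lambda_1/(1-\lambda_2)\bigr)^k>0$. It costs a short probability computation, but it adds $p_0\,(\delta_{\bfones}+\gamma_C\sqrt{k}\,\Delta_2)(a-a')$ to the lower bound.

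\textbf{What your split buys.} That extra term secures the \emph{strict} monotonicity claimed in the statement at the boundary $\lambda_1=\tau$. In the paper's step (\ref{eq:fromlambda1threshold}) the brace vanishes when $\lambda_1=\tau$, so only ``$\geq$'' follows, not ``$>$''.
\begin{itemize}
\item At $\lambda_1=\tau$ the paper's argument shows $S(A^*(a))$ is nondecreasing.
\item Together with uniqueness of $A^*$, this still forces $a=-1$, so the collapse conclusion survives.
\item The strict-increase claim, however, then holds only for $\lambda_1>\tau$.
\end{itemize}
Your observation that $\Pr(\Ecal_1)>\lambda_1$ and $0<\Pr(\Ecal_2)<1-\lambda_1$ closes this cleanly. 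Both facts hold because $C\geq 3$ gives $\lambda_1/(1-\lambda_2)<1$.

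\textbf{Minor points.} Your fallback case $\Pr(\Ecal_2)=0$ is vacuous here. You also correctly read the statement's all-ones vector as $\bfones\in\RR^k$ rather than $\RR^C$.
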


\begin{proof} For all $a, a' \in [-1,1]$ with $a' < a$ and $(y, y^-_{1:k})$ distributed as in (\ref{eq:SCL_joint_pmf_labels}) we have
\begin{align}
S(A^*(a)) 
&= \EE\Big[\psi(A^*_{yy^-_1}(a) - A^*_{yy}(a), \ldots,A^*_{yy^-_1}(a) - A^*_{yy}(a))\Big] \nonumber \\
&= \EE\Big[\psi(A^*_{yy^-_1}(a) - 1, \ldots,A^*_{yy^-_1}(a) - 1)\Big] \nonumber \\
&= \lambda_1\,\EE\Big[\psi(A^*_{1y^-_1}(a) - 1, \ldots,A^*_{1y^-_1}(a) - 1)\Big|y=1\Big] \quad + \nonumber \\
&\hspace{22ex} (1-\lambda_1)\,\EE\Big[\psi(A^*_{yy^-_1}(a) - 1, \ldots,A^*_{yy^-_1}(a) - 1)\Big|y\neq 1\Big] \nonumber \\
&= \lambda_1\,\EE\Big[\psi(a - 1, \ldots,a - 1)\Big|y=1\Big] \quad + \nonumber \\ 
&\hspace{22ex} (1-\lambda_1)\,\EE\Big[\psi(A^*_{yy^-_1}(a) - 1, \ldots,A^*_{yy^-_1}(a) - 1)\Big|y\neq 1\Big] \nonumber \\
&= \lambda_1\,\phi_{\bold{1}}(a-1) + (1-\lambda_1)\,\EE\Big[\psi(A^*_{yy^-_1}(a) - 1, \ldots,A^*_{yy^-_1}(a) - 1)\Big|y\neq 1\Big], \label{eq:psitophi}
\end{align}
where the second and third equalities are because all the diagonal entries of the matrix $A^*(a)$ in Equation~(\ref{eq:Astarform}) of Lemma~\ref{lemma:Astarform} are equal to one and if $y=1$ then for all $m \in \{1:k\}$, $y^-_m \neq 1$ which would imply that $A^*_{yy^-_m}(a) = A^*_{1y^-_m}(a) = a$. Equation~(\ref{eq:psitophi}) follows from the definition of $\phi_{u}$ in Lemma~\ref{lemma:subgradients}.
Therefore,
\begin{align}
S(A^*(a)) &- S(A^*(a')) \nonumber \\
&= \lambda_1\,(\phi_{\bold{1}}(a-1) - \phi_{\bold{1}}(a'-1))\ + (1-\lambda_1)\,\EE\Big[\psi(A^*_{yy^-_1}(a) - 1, \ldots,A^*_{yy^-_1}(a) - 1) - \nonumber \\
&\hspace{34ex}\psi(A^*_{yy^-_1}(a') - 1, \ldots,A^*_{yy^-_1}(a') - 1) \Big|y\neq 1\Big] \nonumber \\
&\geq \lambda_1\,(\delta_{\bold{1}}\,(a-a')) - (1-\lambda_1)\,\gamma_C\,\sqrt{k}\,\Delta_2\,(a-a') \label{eq:Slowerbound} \\
& = (a-a')\,(-\gamma_C\,\sqrt{k}\,\Delta_2 + \lambda_1\,(\delta_{\bold{1}} + \gamma_C\,\sqrt{k}\,\Delta_2)) \nonumber \\
&> (a-a')\,(-\gamma_C\,\sqrt{k}\,\Delta_2 + \gamma_C\,\sqrt{k}\,\Delta_2) \label{eq:fromlambda1threshold} \\
&=0, \nonumber
\end{align}
where (\ref{eq:Slowerbound}) follows from (\ref{eq:phislopelowerbound}) and
Lemma~\ref{lemma:Astarbound} together with the fact that $s\geq - |s|$ for all $s \in \RR$, and (\ref{eq:fromlambda1threshold}) follows from (\ref{eq:SCLMClambda1suffcond}). 
Thus, for all $a\in [-1,1]$, $S(A^*(a))$ is a strictly increasing function of the variable $a$ and is minimized when $a = -1$. When $a = -1$, $b = (a^2(C-1)-1)/(C-2) = 1$. Then, $\forall i \in \Ccal\setminus\{1\}$, $(\mu^*_i)^\top \mu^*_1 = a = -1$. Since for all $j\in \Ccal$ we have $||\mu^*_j||^2 = 1$, it follows from the alignment conditions for equality in the Cauchy-Schwartz inequality that for all $i \in \Ccal\setminus\{1\}$, $\mu^*_i = -\mu^*_1$. 
\end{proof}

\begin{corollary}\label{cor:SCLInfoNCElambda1threshold}
For the InfoNCE loss function, condition (\ref{eq:SCLMClambda1suffcond}) for minority collapse in Theorem~\ref{theorem:SCLminoritycollapse} is satisfied if
\[
\lambda_1 \in [\tau_C,1), \text{ where } \tau_C :=  \frac{1}{1 + \tfrac{2}{\gamma_C(1+e^2)}}.
\label{eq:SCLInfoNCElambda1threshold} 
\]
Moreover, for all $C \geq 3$, $\tau_C \leq \tau_3 \approx 0.9438$. Thus, $\lambda_1 \geq 0.9438$ is a sufficient condition for minority collapse in the \SCL{} setting for the InfoNCE loss function, irrespective of the number of classes $C$ or the number of negative samples per anchor sample $k$.
\end{corollary}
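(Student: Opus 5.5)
We plug the InfoNCE loss into condition (\ref{eq:SCLMClambda1suffcond}) of Theorem~\ref{theorem:SCLminoritycollapse}. That condition is $\lambda_1 \geq 1/(1+\delta_{\bold{1}}/(\gamma_C\sqrt{k}\Delta_2))$, so we need an upper bound on $\gamma_C\sqrt{k}\Delta_2$ and an exact value (or lower bound) for $\delta_{\bold{1}}$. Here $\bold{1}$ should be read as the all-ones vector in $\RR^k$, the argument of $\phi_u$.

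For the Lipschitz constant we reuse the computation in the proof of Corollary~\ref{cor:InfoNCElambda1threshold}. It gives $\Delta_2=\sup_{t\in[-2,0]^k}\|\nabla\psi(t)\|_2 = 1/(2\sqrt{k})$ via the inequality $4k\sum_i v_i^2\le 4k\sum_i v_i\le (k+\sum_i v_i)^2$ for $v_i=e^{t_i}\in[e^{-2},1]$. Consequently $\gamma_C\sqrt{k}\Delta_2=\gamma_C/2$, which does not depend on $k$. This is the same reason the UCL threshold was $k$-free.

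Since InfoNCE is differentiable, Lemma~\ref{lemma:subgradients} gives $\delta_{\bold{1}}=\bold{1}^\top\nabla\psi(-2\cdot\bold{1})$. Each component of the gradient at $-2\cdot\bold{1}$ equals $e^{-2}/(k+ke^{-2})=1/(k(e^2+1))$. Summing over the $k$ components gives $\delta_{\bold{1}}=1/(1+e^2)$. Equivalently, $\delta_{\bold 1}=g(k)$ with $g$ as in the proof of Corollary~\ref{cor:InfoNCElambda1threshold}, since $g(k)=k/(2ke^2-(e^2-1)k)=1/(1+e^2)$. This value is also independent of $k$.

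Substituting, the threshold becomes $1/\bigl(1+\tfrac{2}{\gamma_C(1+e^2)}\bigr)=\tau_C$. Any $\lambda_1\in[\tau_C,1)$ therefore satisfies (\ref{eq:SCLMClambda1suffcond}), and the corollary's first claim follows. For the uniformity claim, $\gamma_C=2(C-1)/(C-2)=2+2/(C-2)$ is decreasing in $C$, so $\gamma_C\le\gamma_3=4$. Moreover $\tau_C$ is increasing in $\gamma_C$, so $\tau_C\le\tau_3=1/\bigl(1+\tfrac{1}{2(1+e^2)}\bigr)=\tfrac{2(1+e^2)}{2(1+e^2)+1}\approx 16.78/17.78\approx0.9438$.

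There is no real obstacle: the only delicate step is confirming that the supremum defining $\Delta_2$ equals $1/(2\sqrt{k})$, attained at $t=\bfzero$. In any case only the upper bound $\Delta_2\le 1/(2\sqrt k)$ is needed, since it makes the threshold conservative.
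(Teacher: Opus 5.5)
Your proposal is correct and is essentially the paper's proof: reuse $\Delta_2=1/(2\sqrt{k})$ from Corollary~\ref{cor:InfoNCElambda1threshold} so that $\gamma_C\sqrt{k}\Delta_2=\gamma_C/2$, compute $\delta_{\mathbf{1}}=\mathbf{1}^\top\nabla\psi(-2\cdot\mathbf{1})=1/(1+e^2)$, substitute, and note that $\tau_C$ is maximized at $C=3$, where $\gamma_3=4$. You read $\mathbf{1}$ as the all-ones vector in $\RR^k$ and take $\Delta_2=1/(2\sqrt{k})$, and both are correct; the paper's text writes $1/(2\sqrt{2})$ and ``$C\times 1$'', which are typos.
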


\begin{proof}As in the proof of Corollary~\ref{cor:InfoNCElambda1threshold} in Appendix~\ref{apd:proof_InfoNCElambda1threshold},
\[
\Delta_2 = \frac{1}{2\sqrt{2}}.
\]
Moreover,
\[
\delta_{\bold{1}} = \bold{1}^\top\,\nabla\psi(-2\,\bold{1}) = \frac{ke^{-2}}{k+ ke^{-2}} = \frac{1}{1+e^{2}}.
\]
Plugging these into  (\ref{eq:SCLMClambda1suffcond}) we get
\[
\tau = \tau_C := \frac{1}{1+\frac{2}{\gamma_C} \frac{1}{1+e^2}}.
\]
The most conservative (maximum)
value of $\gamma_C$ occurs when $C$ is minimum, i.e., $C = 3$. When
$C = 3$, $\gamma_C = 4$, and $\tau_C \approx 0.9438$.
\end{proof}

\end{document}